\definecolor{pastelquery}{HTML}{F5F5DC}  % Soft beige pastel for query background
\definecolor{dpoColor}{HTML}{66c2a5}     % Soft teal for DPO
\definecolor{mpoColor}{HTML}{639CD9}% Optional, if you want conditional logic
\def\eqref#1{equation~\ref{#1}}
\def\1{\bm{1}}
\DeclareMathAlphabet{\mathsfit}{\encodingdefault}{\sfdefault}{m}{sl}
\SetMathAlphabet{\mathsfit}{bold}{\encodingdefault}{\sfdefault}{bx}{n}
\definecolor{DarkBlue}{rgb}{0.1,0.1,0.5}
\definecolor{DarkGreen}{rgb}{0.1,0.5,0.1}
\definecolor{deepyellow}{RGB}{218, 174, 42}
\pgfplotsset{compat=1.18} % adjust to the version you have installed
\definecolor{DarkBlue}{rgb}{0.3,0.3,0.70}
\definecolor{azure}{rgb}{0.0, 0.5, 1.0}
\definecolor{darkcerulean}{rgb}{0.03, 0.27, 0.49}
\definecolor{denim}{rgb}{0.08, 0.38, 0.74}
\definecolor{DarkGreen}{rgb}{0.3,0.7,0.3}
\definecolor{lighter-gray}{gray}{0.95}
\definecolor{AlgHighlight}{HTML}{228B22}
\newtheoremstyle{thmstyle}
{0.5em} % Space above
{0.15em} % Space below
{} % Body font
{} % Indent amount
{\bfseries} % Theorem head font
{.} % Punctuation after theorem head
{.5em} % Space after theorem head
{} % Theorem head spec (can be left empty, meaning `normal')
\theoremstyle{thmstyle} 
\newtheorem{theorem}{Theorem}
\newtheorem{lemma}{Lemma}
\newtheorem{corollary}{Corollary}
\theoremstyle{definition}
\theoremstyle{remark}
\newtheorem{remark}{Remark}
\renewcommand{\1}{ \mathds{1}}
\newcommand{\CommentLines}[1]{}
\newcolumntype{x}[1]{>{\centering\let\newline\\\arraybackslash\hspace{0pt}}m{#1}}
\definecolor{green}{HTML}{C6EFCE}
\definecolor{red}{HTML}{FFC7CE}
\definecolor{yellow}{HTML}{FFEB9C}
\definecolor{darkgreen}{rgb}{0, 0.6, 0}
\tiny\color{gray}, % Line number style
\newcommand{\dpo}{\textsc{Dpo}}
\newcommand{\swepo}{\textsc{Mpo}}
\newcommand{\simpo}{\textsc{Simpo}}
\newcommand{\mpo}{\textsc{Mpo}}
\definecolor{turquoise}{HTML}{30D5C8}  % ↓ adjust if you want a different shade
\newtcolorbox{keytakeawaysbox}{%
  enhanced,
  colback=white,
  colframe=turquoise,
  coltitle=black,                    % <-- title text colour
  boxrule=0.8pt,
  arc=2mm,                           % <-- less rounded corners
  left=2mm,right=2mm,top=0.5mm,bottom=0.5mm,
  title={\small\bfseries Key Takeaways},
  attach boxed title to top left={
    yshift=-\tcboxedtitleheight/2,
    xshift=6pt
  },
  boxed title style={
    colback=white,
    colframe=turquoise,
    boxrule=0.8pt,
    arc=2mm                          % match outer radius
  },
}
\newtcolorbox{keytakeawaysbox2}{%
  enhanced, colback=white, colframe=turquoise,          % colours
  boxrule=0.6pt, arc=2mm,                               % thin border, small radius
  left=2mm,right=2mm,top=1mm,bottom=1mm,                % minimal interior padding
  before skip=0pt, after skip=0pt,                      % no extra vertical space
  title={\small\bfseries Key Takeaways},                % title text
  attach boxed title to top left={
    xshift=4pt, yshift=-2.5mm                           % “break” size
  },
  boxed title style={
    colback=white, colframe=turquoise,
    boxrule=0.6pt, arc=2mm,
    top=0pt,bottom=0pt,left=1mm,right=1mm               % tight little title tag
  },
}
\definecolor{remarkblue}{HTML}{0B5CA3} 
\newtcolorbox{remarkbox}[1]{%
  enhanced, breakable,
  colback=white,                  % body background
  colframe=remarkblue,            % border colour
  colbacktitle=remarkblue,        % title‐bar background
  coltitle=white,                 % title‐bar text colour
  boxrule=0.7pt,
  arc=2mm,                        % slightly rounded
  left=2mm,right=2mm,top=0mm,bottom=0mm,  % inner padding
  fonttitle=\bfseries,            % bold title
  title={\strut #1},              % \strut keeps the bar height uniform
  before upper=\itshape,          % body text italic
  after skip=1ex,                 % small space below the box
}
\definecolor{titlecolor}{RGB}{ 45,  82,160}% Deep Royal Blue
\definecolor{inputcolor}{RGB}{ 25, 111, 61}% Forest Green
\definecolor{mathcolor} {RGB}{112,  48,160}% Soft Purple
\definecolor{stepcolor} {RGB}{ 23,  43, 77}% Midnight Blue (for section / loop labels)
\definecolor{returncolor}{RGB}{192,  57, 43}% Pomegranate Red
\definecolor{algHlight}{HTML}{228B22}% A muted green for “important” actions
\definecolor{commentcolor}{RGB}{96,110,140}
\definecolor{takeawayframe}{RGB}{45,82,160}   % frame colour
\definecolor{takeawayback} {RGB}{245,248,252} % background
\newtcolorbox{takeawaybox}[1][]{%
  enhanced, breakable,
  sharp corners=south,         % same look you used before
  colback   = gray!5!white,
  colframe  = gray!10!white,
  boxrule   = 0.7pt,
  left      = 1mm,             % inner padding (defaults)
  right     = 1mm,
  top       = 0.5mm,
  bottom    = 0.5mm,
  before skip=0pt,             % no extra vertical glue
  after skip =0pt,             % ← behaves like a normal text line
  before upper=\itshape,       % italic body
  #1}                          % allow per-box overrides
\definecolor{darkgreen}{rgb}{0, 0.6, 0}
\tiny\color{gray}, % Line number style
\title{Multi-Preference Optimization: Generalizing DPO via Set-Level Contrasts}
\author{
    Taneesh Gupta\thanks{Equal contribution} \\
    \textbf{Microsoft}
  \And
    Rahul Madhavan$^\star$ \\
    \textbf{IISc, Bangalore}
  \And
    Xuchao Zhang \\
    \textbf{Microsoft}
  \AND
    Nagarajan Natarajan \\
    \textbf{Microsoft}
  \And
    Chetan Bansal \\
    \textbf{Microsoft}
  \And
    Saravan Rajmohan \\
    \textbf{Microsoft}
}
\begin{document}

\maketitle

\vspace{-0.15in}

\begin{abstract}
% Direct Preference Optimization (DPO) has proven effective in aligning large language models using pairwise preferences. In practice, as we do on-policy generation, and employ LLMs-as-reward-models, we have the ability to score multiple responses per query. In such a setting, we revisit alignment, and
% %In practice, overlooking additional positive and negative responses that are commonly available during on-policy generation. 
% propose \textbf{Multi-Preference Optimization} (\mpo), which incorporates multiple responses per query and prioritizes those that deviate most from the average reward. This deviation-based weighting focuses training on the most informative outliers, akin to a built-in curriculum. Theoretically, we prove that such multi-preference sampling lowers alignment bias, bounding the expected deviation from the true acceptable-response distribution at a rate of \(\mathcal{O}\!\bigl(\tfrac{1}{\sqrt{k}}\bigr)\). Empirically, \mpo\ outperforms state-of-the-art (SOTA) methods on the widely-used Ultra-Feedback dataset. We also demonstrate substantial improvements of up to $\sim 17.5$ \% compared to SOTA in the length-controlled win rate on AlpacaEval2.

Direct Preference Optimization ($\dpo$) has become a popular approach for aligning language models using pairwise preferences. However, in practical post-training pipelines, on-policy generation typically yields multiple candidate responses per prompt, which are scored by a reward model to guide learning. In this setting, we propose \textbf{Multi-Preference Optimization ($\mpo$)}, a generalization of $\dpo$ that optimizes over entire sets of responses by extending the Bradley--Terry model to groupwise comparisons between chosen and rejected sets. To further enhance learning, $\mpo$ employs deviation-based weighting, which emphasizes outlier responses that deviate most from the mean reward, effectively inducing a self-paced curriculum. We theoretically prove that $\mpo$ reduces alignment bias at a rate of \(\mathcal{O}\!\bigl(\tfrac{1}{\sqrt{n}}\bigr)\) with respect to the number of responses per query. Empirically, $\mpo$ achieves state-of-the-art performance on the UltraFeedback benchmark and yields up to  $\sim 17.5$\%  improvement over the state-of-the-art baseline in length-controlled win rate on AlpacaEval2, establishing a new baseline for preference-based alignment.

\end{abstract}

% \begin{figure}[!thbp]
%     \centering
%     \includegraphics[width=0.8\linewidth]{images/DPO_MPO_Comparison.pdf}
%     \vspace{-0.25in}
%     % \includegraphics[draft]{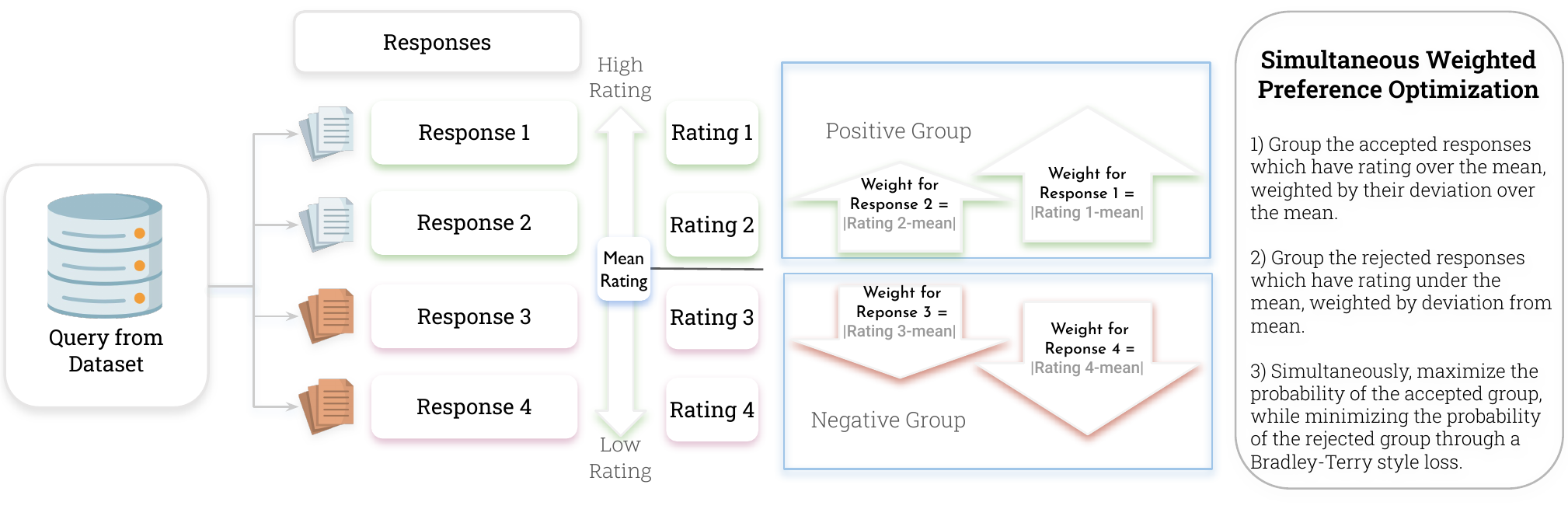}
%     \caption{In DPO, one selects a pair of responses from the base model to optimize the LLM. But this is wasteful, as there may be a lot of signal in the other responses provided by the LLM to a given query. The primary motivation of our MPO algorithm is to optimize the LLM over multiple responses from the policy model. }
%     \label{fig:dpo-mpo-comparison}
% \end{figure}

\vspace{-0.2in}
\section{Introduction}
\label{sec:introduction}

\vspace{-0.1in}
Aligning large language models (LLMs) to follow instructions effectively and adhere to desired behaviors is a cornerstone of modern AI development \citep{ouyang2022training, christiano2017deep}. Direct Preference Optimization (DPO) has emerged as a prominent and computationally efficient paradigm for the alignment task. DPO bypasses the need for explicit reward model training by directly optimizing a policy based on pairwise preferences \citep{rafailov2024direct}. However, the standard DPO framework is inherently limited to comparing single pairs of (preferred, dispreferred) responses; 
\begin{wrapfigure}{r}{0.5\linewidth}
    \centering
    \vspace{-0.2in}
    \includegraphics[width=1.0\linewidth]{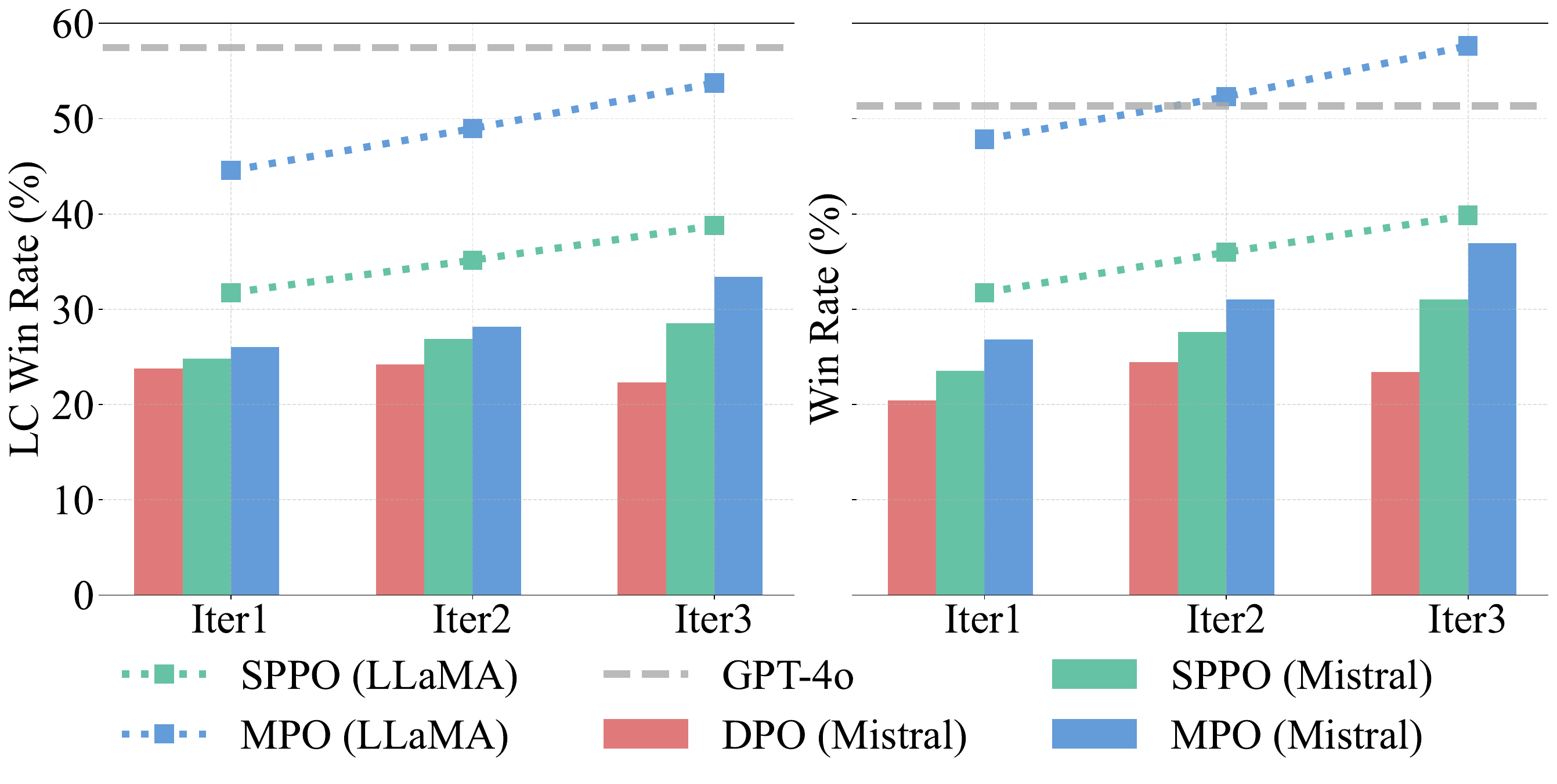}
    \vspace{-0.2in}
    \caption{Multi-Preference Optimization ($\mpo$) achieves state-of-the-art results on WR, LC-WR on AlpacaEval2, with performance nearly matching GPT-4o, demonstrating \mpo's efficacy for post-training optimization.\vspace{-0.25in}}
    \label{fig:dpo-mpo-comparison}
\end{wrapfigure}
furthermore, the standard DPO implementations use the most-preferred and the least-preferred responses per query \citep{wu2024self,meng2024simpo} discarding the rest.   

This pairwise formulation becomes restrictive as modern alignment pipelines, especially those involving on-policy generation or self-play, often produce a multitude of candidate responses per prompt \citep{wu2024self, chen2024self, pang2024iterative, tang2025game}, wasting valuable supervisory signal and computational effort. To overcome this, we propose \textbf{Multi-Preference Optimization (\mpo)}, which generalizes DPO by using a group-contrastive loss to compare entire sets of ``accepted'' versus ``rejected'' responses. This core idea extends DPO's underlying Bradley-Terry model \citep{bradley1952rank} from pairwise to groupwise comparisons.

\vspace{-0.05in}
Leveraging multiple responses for alignment is by no means new. Several alternative strategies have been proposed for post-training optimization using richer preference data. Traditional policy-gradient RLHF methods, such as Proximal Policy Optimization (PPO) \citep{schulman2017proximal}, can implicitly utilize multiple responses through a learned reward function. More recent methods explicitly target groupwise comparisons, like Group Relative Policy Optimization (GRPO) \citep{liu2024deepseek}, or aim to integrate explicit reward signals more directly, including Reward-Aware Preference Optimization (RPO) \citep{sun2025reward} and cross-entropy-based methods like InfoNCA \citep{chen2024noise}. In fact, we could apply the standard DPO loss across all possible $\binom{k}{2}$ pairs \citep{tunstall2023zephyr}, without discarding any of the $k$ preferences per query; or use ranking-based methods to leverage full or partial rankings over all $k$ responses \citep{dong2023raft, yuan2023rrhf}.

\vspace{-0.05in}
Many of these approaches, however, face significant hurdles. Methods like PPO, and RPO typically necessitate a well-calibrated reward model to provide the scalar values or advantages crucial for their optimization objectives \citep{schulman2017proximal, liu2024deepseek, sun2025reward}. In fact methods like GRPO only work with verifiable rewards \citep{liu2024deepseek}.
Similarly, InfoNCA relies on the magnitude of explicit reward scores for its target distribution \citep{chen2024noise}. The challenge of obtaining consistently accurate reward scores, particularly given the known difficulties in training stable value functions via regression \citep{farebrother2024stopregressingtrainingvalue}, can make these approaches sensitive to reward noise or mis-calibration. On the other hand, naively applying DPO to all pairs incurs significant computational overhead and potential optimization instabilities, while Plackett-Luce models demand reliable full rankings, which are often expensive and difficult to obtain \citep{chen2024softmax}, and may cause alignment brittleness.

\begin{figure}[t]
    \centering
    \includegraphics[width=\textwidth]{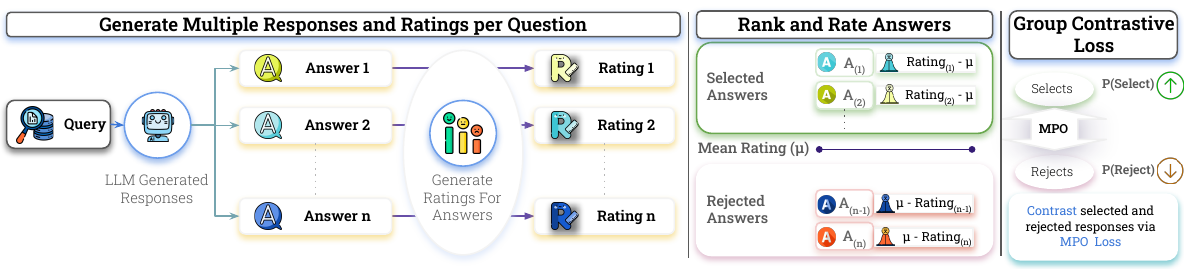}
    \vspace{-0.25in}
    % \includegraphics[draft]{images/Sweppo_illustration.pdf}
    % \caption{\mpo\ pipeline for LLM alignment. Upon generation of responses, we wish to classify responses into a selected and rejected set, either through human annotated select-reject sets, or via reward model scores for the responses. In the latter case, we calculate weights based on the absolute difference between each rating and the mean. Accepted responses (above mean) and rejected responses (below mean) are grouped separately. Our \mpo\ loss function then maximizes the probability of the accepted group while minimizing the probability of the rejected group.\vspace{-0.15in}}
    \caption{$\mpo$ pipeline: Responses are split into accepted and rejected sets using human labels or reward model scores. We assign weights based on deviation from the mean and optimize to upweight the accepted group while downweighting the rejected group using our $\mpo$ loss.\vspace{-0.25in}}
    \label{fig:weighted_preference_opt}
\end{figure}

\vspace{-0.05in}

\mpo\ is designed to overcome these specific challenges by retaining DPO's robust contrastive nature while effectively scaling to multiple preferences. It does not need well-calibrated reward models or full ranking over the responses, and instead can work with relative comparisons (ordinal preferences) over exact value regression \citep{farebrother2024stopregressingtrainingvalue}. 
When fine-grained scores are available, these sets are partitioned into ``accepted'' and ``rejected'' groups based on a query-specific score mean.
% \mpo\ operates on \textit{sets} of responses, partitioning them into ``accepted'' and ``rejected'' groups based on query-specific threshold (which is simply the mean of the scores of responses), when fine-grained scores are available. 
\mpo\ then employs a novel groupwise contrastive loss, generalizing the Bradley-Terry model \citep{bradley1952rank}, which is distinctively enhanced by a deviation-based weighting scheme.
This weighting prioritizes informative outliers, creating an implicit curriculum \citep{bengio2009curriculum} and using reward scores primarily to define groups and modulate influence.
% It employs a novel groupwise contrastive loss derived from generalizing the Bradley-Terry model \citep{bradley1952rank} to sets, enhanced by a deviation-based weighting scheme that prioritizes informative outliers, creating an implicit curriculum \citep{bengio2009curriculum}. 
% This approach avoids direct reliance on precise reward magnitudes for the loss structure, instead using rewards primarily to define the groups and weights. 
\mpo\ provides a theoretically grounded method with state-of-the-art results (Section \ref{sec:experimental_results}).\\
We provide a detailed illustration of our method in Figure \ref{fig:weighted_preference_opt}. Our primary contributions are:
% Concretely, \mpo\ is a scalable and robust method for leveraging the rich information over multi-responses, providing an upgrade for various DPO-style alignment workflows.

\vspace{-0.15in}
\paragraph{1. (Set-Based) Multi-Preference Optimization (\mpo):}
We introduce \textbf{Multi-Preference Optimization (\mpo)}, which generalizes DPO's pairwise contrast to efficiently learn from \textit{sets} of preferred and non-preferred responses per query. \mpo\ employs a novel groupwise objective based on the Bradley-Terry model, enabling richer supervision than single positive/negative pairs.
\vspace{-0.15in}
\paragraph{2. Deviation-Weighted Learning (W-\mpo) for Fine-Grained Signals:}
We propose \textbf{Weighted MPO (W-\mpo)}, an extension that incorporates fine-grained reward information by additively adjusting response logits based on their deviation from the query-specific mean reward. This prioritizes informative outliers, creating an implicit data-driven curriculum (Section \ref{ssec:weighted_mpo_main}). 

\vspace{-0.15in}
\paragraph{3. State-of-the-Art Alignment Performance:}
Both \mpo\ and W-\mpo\ achieve state-of-the-art results on diverse alignment benchmarks, including \textbf{AlpacaEval 2.0}, \textbf{Arena-Hard}, and \textbf{MT-Bench}, outperforming established baselines with various LLMs (e.g., Mistral-7B and Llama-3-8B).

% Addressing these limitations, we propose \textbf{Multi-Preference Optimization (\mpo)}. \mpo\ extends the core \textit{contrastive learning} principle of DPO, leveraging the relative robustness of comparisons over exact value regression \citep{farebrother2024stopregressingtrainingvalue}. Instead of pairwise comparisons or full rankings, \mpo\ operates on \textit{sets} of responses, partitioning them into ``accepted'' and ``rejected'' groups based on their score relative to the query-specific mean. It employs a novel groupwise contrastive loss derived from generalizing the Bradley-Terry model \citep{bradley1952rank} to sets, enhanced by a deviation-based weighting scheme that prioritizes informative outliers, creating an implicit curriculum \citep{bengio2009curriculum}. This approach avoids direct reliance on precise reward magnitudes for the loss structure, instead using rewards primarily to define the groups and weights. \mpo\ provides a theoretically grounded method (Section \ref{sec:theoretical_analysis}) with state-of-the-art results (Section \ref{sec:experimental_results}). Concretely, \mpo\ is a scalable and robust method for leveraging the rich information in multi-response data, providing an upgrade for various DPO-style alignment workflows.

% \input{includes/our_contributions}

\begin{figure*}[t]
    \vspace{-0.25in}
    \centering
    \includegraphics[width=1.0\textwidth]{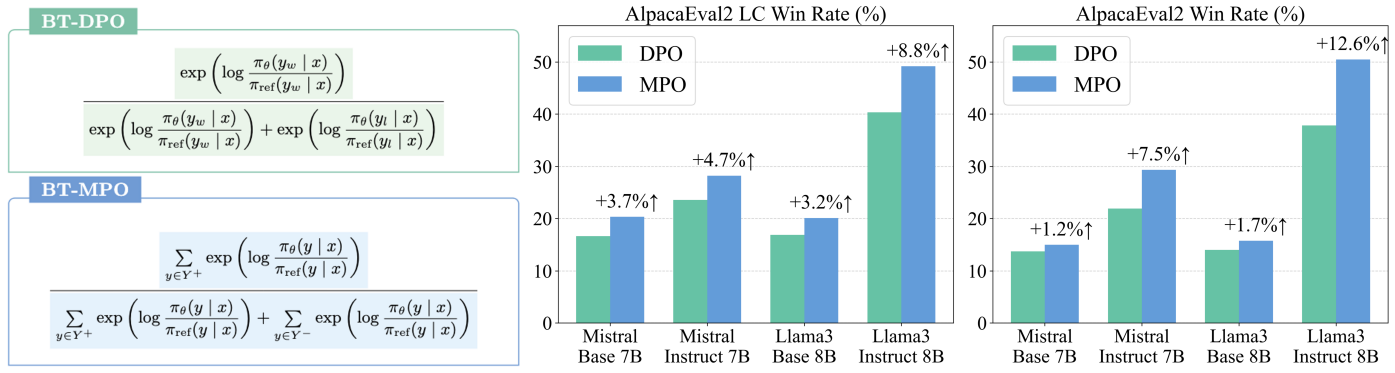}
    \vspace{-0.2in}
    \caption{\textbf{$\mpo$ generalizes $\dpo$} by extending the Bradley-Terry formulation to groupwise comparisons over multiple responses. This richer supervision yields stronger performance across model families. $\mpo$ gracefully reduces to $\dpo$ when each set contains a single response.\vspace{-0.2in}}
    \label{fig:mpo_results}
\end{figure*}

\vspace{-0.1in}
\section{Related Work}
\label{sec:related_work}

\vspace{-0.1in}
\noindent We briefly situate \mpo\ within the evolving landscape of preference optimization, deferring a more comprehensive survey to Appendix~\ref{sec:related_work_extended}. The alignment of Large Language Models (LLMs) has significantly advanced from early Reinforcement Learning from Human Feedback (RLHF) frameworks, which typically employ reward models and policy gradient methods like \textsc{Ppo}  \citep{schulman2017proximal, ouyang2022training}. A pivotal shift occurred with Direct Preference Optimization (\dpo) \citep{rafailov2024direct}, which directly learns from pairwise preferences using a contrastive loss based on the Bradley-Terry model \citep{bradley1952rank}, simplifying the alignment pipeline.
This contrastive learning paradigm, where models learn by distinguishing between positive and negative examples, has foundations in self-supervised learning, notably with objectives like InfoNCE \citep{oord2018representation} which maximize mutual information between related samples.

\vspace{-0.05in}
Many recent works extend \dpo's pairwise approach. Some focus on diversified objectives or reward modeling, such as KTO \citep{ethayarajh2024kto} and TDPO \citep{zeng2024token} for response or token-level alignment. Others modify the \dpo\ framework by altering the reference model, adding regularizers for issues like length bias, or unifying preference learning with supervised objectives, exemplified by SPIN \citep{chen2024self}, CPO \citep{xu2024contrastive}, ORPO \citep{hong2024orpo}, SimPO \citep{meng2024simpo}, R-DPO \citep{park2024disentangling}, and LD-DPO \citep{liu2024iterative}. Other works explore different structures for preference signals beyond simple pairs, such as list-wise or rank-based supervision as seen in RAFT \citep{dong2023raft} and RRHF \citep{yuan2023rrhf}.

\vspace{-0.05in}
Leveraging the multiple responses common in on-policy generation or rich datasets \citep{cui2023ultrafeedback, gupta2025ampo} necessitates moving beyond pairwise \dpo. While InfoNCA \citep{chen2024noise} adapts classification objectives for multi-response scalar rewards, \mpo\ provides a distinct paradigm by generalizing \dpo's contrastive logic to operate on \textit{sets}. This set-based comparison, especially with W-\mpo's deviation-based weighting, robustly utilizes group-level preference signals for post-training optimization.

\vspace{-0.1in}

\section{Preliminaries and Technical Background}
\label{sec:preliminaries}

\vspace{-0.1in}

\noindent We frame preference optimization as learning a policy $\pi_\theta$ based on feedback on the candidate responses $\{y_1, \dots, y_n\}$ to a query $x$. This feedback often implies a "strength" or "utility" $u(y)$ for each response. In many \dpo-style methods, this utility is commonly derived from the policy's log-probabilities relative to a reference model $\pi_{\text{ref}}$, specifically $u(y) = \exp(\beta r_\theta(y \mid x))$, where $r_\theta(y \mid x) = \log (\pi_\theta(y \mid x) / \pi_{\text{ref}}(y \mid x))$ \label{eq:implicit_score_prelim} and $\beta$ is a positive scaling factor. Different alignment methods incorporate these utilities based on how responses are grouped (or "binned") and how preferences between these responses are modeled.
Broadly, these approaches rest on a shared conceptual foundation:
% At its core, this process can be understood as follows:

\begin{remarkbox}{A Unified View of Preference Optimization}
Preference optimization can be conceptualized as a probabilistic selection process. The likelihood of favoring certain responses over others is governed by underlying "strength" or "utility" measures associated with each response. A key feature of this optimization process is that these utility measures are linked to the parameters of the language model $\pi_\theta$ being aligned.
\end{remarkbox}

\subsection{Pairwise and Setwise Preference Models (\dpo\ and \mpo)}
\label{ssec:pairwise_setwise_models}
The foundation for modeling preferences between groups of items can be traced to choice theory \citep{luce1959individual}. Consider a scenario with two distinct sets (or "bins") of responses: a preferred ("chosen") set $\mathcal{Y}^+$ and a dispreferred ("rejected") set $\mathcal{Y}^-$. If we were to select a single representative response from the combined pool $\mathcal{Y}_{\text{all}} = \mathcal{Y}^+ \cup \mathcal{Y}^-$, where the probability of selecting any specific response $y$ is proportional to its intrinsic strength or utility $u(y)$, then the probability that this selected representative comes from the preferred set $\mathcal{Y}^+$ naturally models the collective preference. We use $\mathcal{Y}^+ > \mathcal{Y}^-$ as shorthand for ``$\mathcal{Y}^+$ \text{is preferred to } $\mathcal{Y}^-$.'' The probability of this event is expressed as:
\begin{equation}
P(\mathcal{Y}^+ > \mathcal{Y}^-) = \frac{\sum_{y \in \mathcal{Y}^+} u(y)}{\sum_{y' \in \mathcal{Y}_{\text{all}}} u(y')} = \frac{\sum_{y \in \mathcal{Y}^+} u(y)}{\sum_{y \in \mathcal{Y}^+} u(y) + \sum_{y \in \mathcal{Y}^-} u(y)}.
\label{eq:set_preference_general}
\end{equation}
In the context of preference optimization and following the Bradley-Terry model \citep{bradley1952rank}, these utilities $u(y)$ are often defined as an exponential function of an underlying score or logit $s_y$, i.e., $u(y) = \exp(s_y)$. This exponential transformation ensures positivity of utilities and naturally leads to the logistic (sigmoid) function when comparing two individual items. Eq.~\ref{eq:set_preference_general} forms the conceptual basis for both \dpo and our proposed \mpo\ (See comparison in Figure \ref{fig:mpo_results}).

\vspace{-0.05in}
\paragraph{Direct Preference Optimization (\dpo).}
\dpo \citep{rafailov2024direct} is a special case of Eq.~\ref{eq:set_preference_general} where each set contains a single item: $\mathcal{Y}^+ = \{y_w\}$ (winner) and $\mathcal{Y}^- = \{y_l\}$ (loser). The \dpo objective is to maximize $\log P(y_w > y_l)$. Using $u(y) = \exp(\beta r_\theta(y|x))$ implies $\log u(y) = \beta r_\theta(y|x)$, so the standard \dpo loss becomes:

\vspace{-0.1in}
\begin{equation}
\mathcal{L}_{\text{\dpo}} = -\mathbb{E}_{(x, y_w, y_l)} \big[ \log \sigma \big( \beta (r_\theta(y_w \mid x) - r_\theta(y_l \mid x)) \big) \big].
\label{eq:dpo_loss_unified}
\end{equation}

\begin{figure*}[t]
    \vspace{-0.25in}
    \centering
    \includegraphics[width=1.0\textwidth]{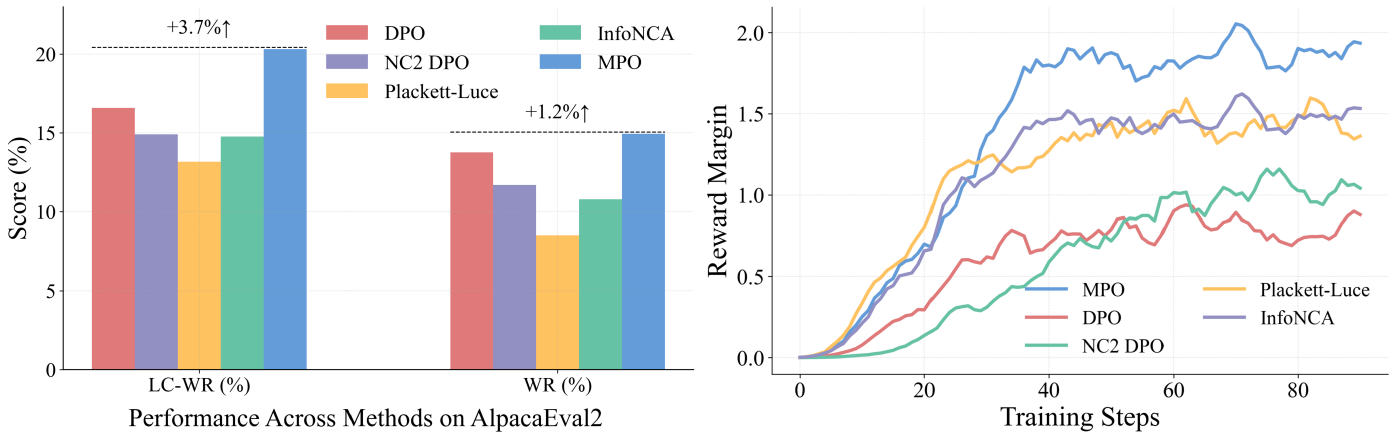}
    \vspace{-0.2in}
    \caption{$\mpo$'s consistent reward margin advantage during training on Ultrafeedback enables clearer distinction between preferred and rejected responses, which translates into downstream evaluation as well on Mistral-7B.\vspace{-0.18in}}
    \label{fig:reward-margin}
\end{figure*}

\vspace{-0.05in}
\paragraph{Multi-Preference Optimization (\mpo): A generalization of \dpo.}
Our work, \mpo, directly utilizes the setwise preference model from Eq.~\ref{eq:set_preference_general} where $\mathcal{Y}^+$ and $\mathcal{Y}^-$ can contain multiple responses. The \mpo\ objective is to maximize $\log P(\mathcal{Y}^+ > \mathcal{Y}^-)$, using the same utility definition $u(y) = \exp(\beta r_\theta(y|x))$. The specific construction of $\mathcal{Y}^+, \mathcal{Y}^-$ from data, and the introduction of weighted utilities for the W-\mpo\ variant, are detailed in Section~\ref{sec:mpo_methodology}. 

\vspace{-0.05in}
\subsection{Rank-Based and Distributional Preference Models}
\label{ssec:rank_distributional_models}

\vspace{-0.05in}
\paragraph{Plackett-Luce (PL) Model.}
When a full ranking (permutation) $y_{(1)} \succ y_{(2)} \succ \dots \succ y_{(n)}$ over $n$ items is available, the PL model \citep{plackett1975analysis} defines its probability as $P_{\text{PL}}(\text{ranking}) = \prod_{i=1}^n \frac{u(y_{(i)})}{\sum_{j=i}^n u(y_{(j)})}$, where $u(\cdot)$ represents item utilities. This can be conceptualized as sequentially selecting items for $n$ ordered "bins" (ranks), where the probability of selecting an item for the current rank depends on the utilities of the remaining unranked items. PL thus models fine-grained ordinal relationships, contrasting with the binary set partition used in \mpo.

\vspace{-0.05in}
\paragraph{Distribution Matching (e.g., InfoNCA).}
InfoNCA \citep{chen2024noise} and similar methods align $\pi_\theta$ by matching its output distribution (derived from utilities $u(y_i)$) to a target distribution. For $n$ responses $\{y_i\}$ with scalar quality scores $S_i$, a target probability $p_{\text{target}}(y_i) = \text{softmax}_i(S_i/\tau)$ is often formed. The model's predicted probability for $y_i$ is $p_{\text{model}}(y_i) = \text{softmax}_i(\log u(y_i))$. The objective then is to minimize the cross-entropy:

\vspace{-0.1in}
\begin{equation}
\mathcal{L}_{\text{InfoNCA}} \propto -\sum_{i=1}^{n} p_{\text{target}}(y_i) \log p_{\text{model}}(y_i).
\label{eq:infonca_loss_unified}
\end{equation}

\vspace{-0.05in}
This aims to replicate a detailed reward distribution over individual responses (each potentially a distinct "bin" with its own target weight). 
We provide the reward margin, a training metric, as well as the downstream performance on AlpacaEval2 for these different related post-training optimization methods in Figure \ref{fig:reward-margin}. The higher reward margin shows that the distinction between the top chosen and rejected response is higher in MPO vs. other baselines.

\section{\mpo: Multi-Preference Optimization}
\label{sec:mpo_methodology}

\begin{wrapfigure}{r}{0.52\textwidth}  % r = right-hand side
  \vspace{-0.27in}
  \begin{minipage}{\linewidth}
    \begin{algorithm}[H]
    % \setstretch{1.25}
      \caption{\textcolor{titlecolor}{\textbf{Multi-Preference Optimization (\textsc{MPO})}}}
      \label{alg:core_mpo_expanded}
      \begin{algorithmic}[1]
      % -------- Input --------------------------------------------------
      \STATE \textcolor{inputcolor}{\textbf{Input:}}
             \textcolor{inputcolor}{Initial policy $\theta_0$, 
             $\mathcal{D}_{\text{MPO}} = \{(x,\mathcal{Y}_x^+,\mathcal{Y}_x^-)\}$,
             Reference policy $\pi_{\text{ref}}$, temperature $\beta$, learning-rate $\eta$}
      \STATE \textcolor{algHlight}{Initialise $\theta \gets \theta_0$}
      \FOR{\textcolor{stepcolor}{each training iteration}}
          \FORALL{\textcolor{stepcolor}{$(x,\mathcal{Y}_x^+,\mathcal{Y}_x^-)\in\mathcal{D}_{\text{MPO}}$}}
              \FOR{\textcolor{stepcolor}{$y\in\mathcal{Y}_x^+\cup\mathcal{Y}_x^-$}}
                  \STATE \textcolor{mathcolor}{$r_\theta(y\mid x) \gets
                         \log\!\frac{\pi_\theta(y\mid x)}{\pi_{\text{ref}}(y\mid x)}$}
              \ENDFOR
              \STATE \textcolor{mathcolor}{$S_x^+ \gets
       \sum_{y\in\mathcal{Y}_x^+} \exp\!\bigl(\beta\,r_\theta(y\mid x)\bigr)$}
              \STATE \textcolor{commentcolor}{\(\triangleright\)\,Selected responses}
              \STATE \textcolor{mathcolor}{$S_x^{\mathrm{all}}\gets
                        \sum_{y\in\mathcal{Y}_x^+\cup\mathcal{Y}_x^-}
                        \exp\bigl(\beta\,r_\theta(y\mid x)\bigr)$}
              \STATE \textcolor{mathcolor}{$L_x\gets-\log\!\bigl(S_x^+/S_x^{\mathrm{all}}\bigr)$}
              \STATE \textcolor{algHlight}{Compute $\nabla_\theta L_x$}
              \STATE \textcolor{algHlight}{$\theta\gets\theta-\eta\,\nabla_\theta L_x$}
          \ENDFOR
      \ENDFOR
      \RETURN \textcolor{returncolor}{$\theta$}
  \end{algorithmic}
    \end{algorithm}
  \end{minipage}
  \vspace{-0.4in}
\end{wrapfigure}

Building upon the principles of groupwise preference modeling discussed in Section \ref{sec:preliminaries}, we introduce Multi-Preference Optimization (\mpo), a method designed to align language models using multiple preferred and dispreferred responses per query. \mpo{} generalizes DPO's pairwise comparison to a groupwise contrast using multiple preferred/dispreferred responses per query, enabling richer supervision. We define the \mpo{} objective and its weighted W-\mpo\ variant, which incorporates fine-grained reward signals.

\vspace{-0.1in}
\subsection{Problem Setup for \mpo}
\label{ssec:notation_setup_mpo}

\vspace{-0.05in}
For the \mpo\ framework, we operate on a dataset $\mathcal{D}_{\text{MPO}}$. Each instance in this dataset consists of a query $x$, a set of $n_x^+$ chosen (preferred) responses $\mathcal{Y}_x^+ = \{y_{c,1}, \dots, y_{c,n_x^+}\}$, and a disjoint set of $n_x^-$ rejected (dispreferred) responses $\mathcal{Y}_x^- = \{y_{r,1}, \dots, y_{r,n_x^-}\}$. 
These sets $\mathcal{Y}_x^+$ and $\mathcal{Y}_x^-$ can be sourced from direct annotations or derived from finer-grained scalar reward scores, as elaborated for the W-\mpo\ variant (Section~\ref{ssec:weighted_mpo_main}). 
Our objective is to fine-tune the policy $\pi_\theta(y \mid x)$ using the reference model $\pi_{\text{ref}}(y \mid x)$ to define the implicit log-preference scores $r_\theta(y \mid x)$ (as introduced in Preliminaries, Eq.~\ref{eq:implicit_score_prelim}), which are scaled by an inverse temperature $\beta > 0$.

\vspace{-0.05in}
\subsection{Base \mpo{} Objective}
\label{ssec:base_mpo_objective}

\vspace{-0.05in}
%While DPO uses response pairs, many datasets (and on-policy optimization methods) provide sets $\mathcal{Y}_x^+$ (preferred) and $\mathcal{Y}_x^-$ (dispreferred) per query. 
\mpo{} leverages sets of preferences by modeling the probability $P(\mathcal{Y}_x^+ > \mathcal{Y}_x^- \mid x; \theta, \beta)$ using the groupwise choice principle (Eq.~\ref{eq:set_preference_general} from Preliminaries). For \mpo, the utility $u(y)$ of a response $y$ is defined as $u(y) = \exp(\beta \cdot r_\theta(y \mid x))$, where $r_\theta(y \mid x)$ is the implicit log-preference score (Eq.~\ref{eq:implicit_score_prelim}). Substituting this utility definition into the general groupwise model (Eq.~\ref{eq:set_preference_general}) directly yields the MPO preference probability:

\vspace{-0.1in}
\begin{equation}
P(\mathcal{Y}_x^+ > \mathcal{Y}_x^- \mid x; \theta, \beta) = \frac{\sum_{y \in \mathcal{Y}_x^+} \exp(\beta \cdot r_\theta(y \mid x))}{\sum_{y' \in \mathcal{Y}_x^+ \cup \mathcal{Y}_x^-} \exp(\beta \cdot r_\theta(y' \mid x))}.
\label{eq:mpo_set_probability}
\end{equation}

\vspace{-0.05in}
The \mpo{} objective is to maximize this probability by minimizing its negative log-likelihood:

\vspace{-0.1in}
\begin{equation}
\mathcal{L}_{\text{MPO}}(\theta) = -\mathbb{E}_{(x, \mathcal{Y}_x^+, \mathcal{Y}_x^-) \sim \mathcal{D}_{\text{MPO}}} \left[ \log P(\mathcal{Y}_x^+ > \mathcal{Y}_x^- \mid x; \theta, \beta) \right].
\label{eq:mpo_loss_definition}
\end{equation}
Intuitively, this loss function encourages the policy $\pi_\theta$ to assign collectively higher implicit preference scores $r_\theta(y \mid x)$ (and thus higher probabilities $\pi_\theta(y \mid x)$ relative to the reference policy $\pi_{\text{ref}}$) to the responses in the preferred set $\mathcal{Y}_x^+$ compared to the combined set of preferred and dispreferred responses $\mathcal{Y}_x^+ \cup \mathcal{Y}_x^-$.

\vspace{-0.05in}
\subsection{Algorithm for \mpo}
\label{ssec:algorithm_core_mpo}

\vspace{-0.05in}
The training procedure for the core \mpo{} objective is summarized in Algorithm \ref{alg:core_mpo_expanded}. The algorithm iterates through the preference dataset $\mathcal{D}_{\text{MPO}}$. For each data sample, comprising a query $x$ with its associated preferred ($\mathcal{Y}_x^+$) and dispreferred ($\mathcal{Y}_x^-$) response sets, it computes the implicit scores, then the \mpo{} loss term, and updates the policy model $\pi_\theta$ using gradient descent.

% \begin{algorithm}[!ht]
%    \caption{Multi-Preference Optimization (\mpo)}
%    \label{alg:core_mpo_expanded}
%    \begin{algorithmic}[1]
%        \STATE {\bfseries Input:} Initial policy parameters $\theta_0$, dataset $\mathcal{D}_{\text{MPO}}$ (samples: $(x, \mathcal{Y}_x^+, \mathcal{Y}_x^-)$), reference model $\pi_{\text{ref}}$, temperature $\beta$, learning rate $\eta$
%        \STATE Initialize $\theta \leftarrow \theta_0$.
%        \FOR{each training iteration}
%            \FORALL{$(x, \mathcal{Y}_x^+, \mathcal{Y}_x^-)$ in $\mathcal{D}_{\text{MPO}}$ (or a mini-batch)}
%                \STATE For $y \in \mathcal{Y}_x^+ \cup \mathcal{Y}_x^-$:
%                \STATE \hspace{1em} $r_\theta(y \mid x) \gets \log \frac{\pi_\theta(y \mid x)}{\pi_{\text{ref}}(y \mid x)}$. \COMMENT{Ref. Eq. \ref{eq:implicit_score_prelim}}
               
%                \STATE $S_x^+ \gets \sum_{y \in \mathcal{Y}_x^+} \exp(\beta \cdot r_\theta(y \mid x))$. \COMMENT{Sum of preferred strengths}
%                \STATE $S_x^{\text{all}} \gets \sum_{y' \in \mathcal{Y}_x^+ \cup \mathcal{Y}_x^-} \exp(\beta \cdot r_\theta(y' \mid x))$. \COMMENT{Sum of all strengths}
               
%                \STATE $L_x \gets -\log (S_x^+ / S_x^{\text{all}})$. \COMMENT{Sample MPO loss, ref. Eq. \ref{eq:mpo_loss_definition}}
               
%                \STATE Compute gradient $\nabla_\theta L_x$.
%                \STATE Update $\theta \leftarrow \theta - \eta \nabla_\theta L_x$. \COMMENT{Or accumulate for batch update}
%            \ENDFOR
%        \ENDFOR
%        \RETURN $\theta$.
%    \end{algorithmic}
% \end{algorithm}

\subsection{Weighted \mpo{} (W-\mpo) Variant}
\label{ssec:weighted_mpo_main}

The MPO objective treats all responses within preferred ($\mathcal{Y}_x^+$) or dispreferred ($\mathcal{Y}_x^-$) sets uniformly. However, real-world preference data, often using scalar quality scores $S(y)$, exhibit significant intra-set quality variations. Weighted \mpo{} (W-\mpo) is designed to leverage this finer-grained information. By modulating each response's influence based on its deviation from the average quality, W-\mpo{} prioritizes "outlier" responses, creating a data-driven curriculum focused on the most informative examples (further motivation in Appendix~\ref{app:weighted_mpo_details}; results in Table~\ref{tab:base-results}).

For W-\mpo, we assume query $x$ has $n_x$ responses $\mathcal{Y}_x = \{y_1, \dots, y_{n_x}\}$ with corresponding scalar quality scores $\{S(y_i)\}_{i=1}^{n_x}$. 
% \mytodo{Notation issue: we use $k$ for number of responses sometimes, $n$ sometimes, and here $N$.} Fixed.
First, we compute the mean score for the query $S_{\text{mean}}(x) = \sum_{i=1}^{n_x} S(y_i)/n_x$.
% \vspace{-0.15in}
% \begin{equation}
% S_{\text{mean}}(x) = \frac{1}{N_x} \sum_{i=1}^{N_x} S(y_i).
% \label{eq:main_mean_score_wmpo}
% \end{equation}
Responses are then partitioned into a preferred set $\mathcal{Y}_x^+ = \{y \in \mathcal{Y}_x \mid S(y) > S_{\text{mean}}(x)\}$ and a dispreferred set $\mathcal{Y}_x^- = \{y \in \mathcal{Y}_x \mid S(y) \le S_{\text{mean}}(x)\}$. (If $\mathcal{Y}_x^+$ is empty, the sample may be skipped). The \textit{absolute deviation} of each response's score $S(y)$ from this mean is:
\begin{equation}
\Delta W_{\text{abs}}(y) = |S(y) - S_{\text{mean}}(x)|.
\label{eq:main_delta_W_abs_displayed}
\end{equation}
W-\mpo{} then modifies the logit for each response $y \in \mathcal{Y}_x$ by additively incorporating this absolute deviation:
\begin{equation}
r'_{\theta}(y \mid x; \alpha_w) = \beta \cdot r_\theta(y \mid x) + \alpha_w \cdot \Delta W_{\text{abs}}(y),
\label{eq:main_wmpo_additive_logit_abs_displayed}
\end{equation}
where $r_\theta(y \mid x)$ is the base implicit score (Eq.~\ref{eq:implicit_score_prelim}) and $\alpha_w > 0$ is a hyperparameter scaling the deviation's impact.

The additive term $\alpha_w \cdot \Delta W_{\text{abs}}(y)$ boosts the logit of outlier responses (those with $S(y)$ far from $S_{\text{mean}}(x)$), thereby increasing their influence. While the base term $\beta \cdot r_\theta(y \mid x)$ and the MPO set-partitioning drive the directional preference (favoring $\mathcal{Y}_x^+$), the absolute deviation term ensures these informative outliers more strongly affect the softmax normalization in the loss.

The W-\mpo{} loss function then uses these modified logits within the MPO framework:
\begin{equation}
\mathcal{L}_{\text{W-\mpo}}(\theta) = -\mathbb{E}_{(x, \mathcal{Y}_x, \{S(y)\})} \left[ \log \left( \frac{\sum_{y \in \mathcal{Y}_x^+} \exp(r'_{\theta}(y \mid x; \alpha_w))}{\sum_{y' \in \mathcal{Y}_x^+ \cup \mathcal{Y}_x^-} \exp(r'_{\theta}(y' \mid x; \alpha_w))} \right) \right].
\label{eq:main_wmpo_loss_abs_displayed}
\end{equation}
The expectation is over the original responses and their scores, from which $\mathcal{Y}_x^+$ and $\mathcal{Y}_x^-$ are derived.

\begin{remarkbox}{W-\mpo\ Robustness via Weighted Contrastive Learning}
\label{rem:comp_with_advantage_function}
Unlike methods like GRPO \citep{shao2024deepseekmath} that may use signed advantages directly as update targets (ideal with verified rule-based rewards 
% \mytodo{rule-based rewards?} Fixed
), W-\mpo's additive \textit{absolute} deviation weighting (Eq.~\ref{eq:main_wmpo_additive_logit_abs_displayed}) is exponentiated within \mpo's groupwise contrastive loss (Eq.~\ref{eq:main_wmpo_loss_abs_displayed}). This optimization is outlier driven and is robust to noise close to median responses, as our experiments demonstrate.
\end{remarkbox}

% \begin{remarkbox}{Comparison with Advantage Weighting Based Methods like GRPO}
% W-\mpo's use of an additive \textit{absolute} deviation (Eq. \ref{eq:main_wmpo_additive_logit_abs}) contrasts with methods like Group Relative Policy Optimization (GRPO). GRPO may use signed, normalized ``advantages'' as direct loss multipliers or reward targets. Instead, W-\mpo\ amplifies the logits of all outliers (high or low scoring) within its groupwise contrastive loss (Eq. \ref{eq:main_wmpo_loss_abs}), distinct from directional advantage re-weighting. While GRPO works well in RLVR settings with verified rewards, MPO is robust to reward model scoring as shown through our experiments (Sec. \ref{sec:experimental_results})
% \end{remarkbox}

The algorithmic implementation of W-\mpo{} involves calculating these modified logits $r'_{\theta}(y \mid x; \alpha_w)$ for each response before computing the full loss term, as detailed in Appendix \ref{app_ssec:algorithm_wmpo} (Algorithm \ref{alg:appendix_wmpo_detailed}).

\vspace{-0.1in}
\subsection{Theoretical Analysis}
\label{ssec:theoretical_analysis}

% \noindent Our \mpo\ framework is supported by theoretical insights into its optimization dynamics and alignment properties. We briefly present two key results here, with full derivations and discussions deferred to the Appendix.

\vspace{-0.07in}
\paragraph{1. Bias Reduction with Multiple Preferences.}
Incorporating multiple preferred ($\mathcal{Y}^+_x$) and non-preferred ($\mathcal{Y}^-_x$) responses per query $x$ systematically reduces alignment bias. Alignment bias, in this context, refers to the deviation between the expected characteristics (e.g., helpfulness, adherence to safety guidelines) learned by the model from a finite sample of $n$ responses, and the true characteristics desired from the overall distribution of acceptable responses.
% Alignment bias $B^{(k)}$ can be defined as the deviation of the model's expected value for some response attribute $a(y)$ (e.g., helpfulness, length) from the true expected attribute value over the actual distribution of acceptable responses, when $k = |\mathcal{Y}^+_x|$ samples are used.

\begin{takeawaybox}
\begin{theorem}[\textbf{Bias Reduction Rate}]
\label{thm:main_bias_reduction}
Under standard assumptions (including finite variance of attributes, independent sampling, sufficient model capacity, and uniform bounded variance across queries), the expected alignment bias $\mathbb{E}[B^{(n)}]$ when using $n$ preferred samples per query decreases as: \vspace{-0.1in}
\begin{equation}
\mathbb{E}[B^{(n)}] = \tilde{\mathcal{O}} \left(\frac{C}{\sqrt{n}}\right),
\label{eq:main_bias_reduction_rate}
\end{equation}
where $C$ is a constant related to the maximum attribute variance.
\end{theorem}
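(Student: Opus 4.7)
The plan is to formalize the alignment bias as the gap between a population-level attribute under the target distribution of acceptable responses and its empirical counterpart driven by the $n$ preferred samples entering the MPO loss, then show that this gap is a sample-mean-estimation error of the classical kind. Concretely, I would fix a bounded attribute functional $\phi:\mathcal{Y}\to\mathbb{R}$ (helpfulness, safety score, etc.) with $\mathrm{Var}_{p^*}(\phi\mid x)\le \sigma^2$ uniformly in $x$ under the true acceptable distribution $p^*(\cdot\mid x)$, and define $B^{(n)} := \bigl|\,\mathbb{E}_{y\sim \pi_{\theta^*_n}}[\phi(y)\mid x] - \mathbb{E}_{y\sim p^*}[\phi(y)\mid x]\,\bigr|$, where $\theta^*_n$ minimizes the empirical $\mathcal{L}_{\mathrm{MPO}}$ with $n$ preferred samples. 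The goal is to bound $\mathbb{E}[B^{(n)}]$ by $\tilde{\mathcal{O}}(\sigma/\sqrt{n})$.

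The key steps, in order, are as follows. \textbf{(Step 1 — Estimand identification.)} I would argue that the MPO loss in Eq.~\eqref{eq:mpo_loss_definition} is, for fixed $x$, a log-partition/log-sum-exp objective whose stationary condition drives the softmax distribution induced by $\beta r_\theta$ on $\mathcal{Y}_x^+\cup\mathcal{Y}_x^-$ toward the indicator of the preferred block; under the sufficient capacity assumption, the minimizer $\pi_{\theta^*_n}$ therefore matches the empirical preferred distribution $\hat{p}_n := \tfrac{1}{n}\sum_{i=1}^n \delta_{y_i}$ on the support of the sample, making $\mathbb{E}_{\pi_{\theta^*_n}}[\phi\mid x] = \tfrac{1}{n}\sum_{i=1}^n \phi(y_i) =: \hat{\mu}_n$ up to a tiltable reference-policy term. \textbf{(Step 2 — Bias decomposition.)} Write $B^{(n)} \le |\hat{\mu}_n - \mu^*| + \varepsilon_{\mathrm{model}}$, where $\mu^* = \mathbb{E}_{p^*}[\phi\mid x]$ and $\varepsilon_{\mathrm{model}}$ captures residual approximation error vanishing under sufficient capacity. \textbf{(Step 3 — Concentration.)} By the finite-variance assumption and i.i.d.\ sampling of the preferred set, $\mathbb{E}[(\hat{\mu}_n-\mu^*)^2\mid x] = \sigma^2/n$, so Jensen yields $\mathbb{E}[|\hat{\mu}_n-\mu^*|\mid x]\le \sigma/\sqrt{n}$. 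A bounded-differences (McDiarmid) argument applied to the $1$-Lipschitz log-sum-exp structure of $\mathcal{L}_{\mathrm{MPO}}$ upgrades this to a high-probability bound, which contributes at most a $\sqrt{\log(1/\delta)}$ factor absorbed into $\tilde{\mathcal{O}}(\cdot)$. \textbf{(Step 4 — Uniformization across queries.)} Use the uniform bounded-variance assumption to take expectation over $x$ and conclude $\mathbb{E}[B^{(n)}]\le \tilde{\mathcal{O}}(C/\sqrt{n})$ with $C\propto \sup_x \sigma(x)$; the logarithmic factors appear either from a union bound over the implicit reward class (when sufficient capacity is made quantitative via covering numbers) or from converting the high-probability bound back into expectation.

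The hard step will be \textbf{Step 1}, the estimand identification. MPO's minimizer is not literally the empirical preferred distribution: it is the log-linear tilt of $\pi_{\mathrm{ref}}$ under which the observed preferred block has probability one in the softmax over the sampled set, which is weaker than matching $\hat{p}_n$ on all of $\mathcal{Y}$. To close this gap I would either (i) restrict attention to in-support evaluation of $\phi$ and invoke the on-policy setting where generations come from $\pi_\theta$ itself, so that both $\pi_{\theta^*_n}$ and $p^*$ effectively live on the observed supports; or (ii) sidestep the minimizer characterization by working directly with the \emph{population} MPO functional $\mathcal{L}^\infty_{\mathrm{MPO}}$, writing $\mathbb{E}[B^{(n)}] \le L_\phi \cdot \mathbb{E}\bigl[\sup_\theta |\mathcal{L}_{\mathrm{MPO}}(\theta) - \mathcal{L}^\infty_{\mathrm{MPO}}(\theta)|\bigr]$, and applying McDiarmid to log-sum-exp (which is $\beta$-Lipschitz in each of its $n$ arguments) to get $\tilde{\mathcal{O}}(\beta\sqrt{\log n /n})$. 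Route (ii) avoids the nonstandard minimizer-matching claim entirely and gives the advertised $\tilde{\mathcal{O}}(1/\sqrt{n})$ rate cleanly; Steps 3 and 4 then go through essentially verbatim.
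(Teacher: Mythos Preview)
Your proposal is correct, and its concentration core (Steps~3--4) is exactly what the paper does, but you are working much harder than the paper on Step~1. The paper's proof (Appendix, Bias Analysis) does not engage with the MPO loss structure at all: it simply \emph{identifies} $\mu_\theta^{(k)}$ with $\mathbb{E}_x[\overline{a}_x^{(k)}]$, where $\overline{a}_x^{(k)} = \tfrac{1}{k}\sum_i a(y_{x,i}^+)$ is the empirical attribute mean over the $k$ preferred samples, and justifies this identification by invoking the ``sufficient model capacity'' assumption as a black box. From there the entire argument is the three lines you already have in Step~3 (variance of a sample mean is $\sigma^2/k$, then $\mathbb{E}|Z-\mathbb{E} Z|\le\sqrt{\mathrm{Var}\,Z}$) followed by your Step~4, giving a clean $\sigma_{\max}/\sqrt{k}$ bound with no logarithmic factors---despite the $\tilde{\mathcal{O}}$ in the main-text statement.

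What your route buys: your Step~1 and especially route~(ii) via uniform deviations of the log-sum-exp functional make the model-capacity assumption quantitative and actually tie the bias to the MPO objective, which the paper's proof never does. The cost is the McDiarmid/covering-number machinery and the log factors. If the goal is to match the paper at its own level of rigor, you can drop Step~1 and route~(ii) entirely, take the sufficient-capacity identification $\mathbb{E}_{\pi_{\theta^*_n}}[\phi\mid x]=\hat{\mu}_n$ as an assumption, and run Steps~3--4 directly; that is literally the paper's proof.
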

\end{takeawaybox}

% \vspace{-0.05in}
This result theoretically grounds the benefit of leveraging multiple preferences, as \mpo\ does, over single pairwise comparisons. Detailed assumptions and proof are presented in Appendix~\ref{sec:bias_analysis}.

\vspace{-0.07in}
\paragraph{2. Stationary Point of the \mpo\ Loss.}
A crucial property of the \mpo\ loss function (Eq.~\ref{eq:mpo_loss_definition}) concerns its optimization landscape. We find that at its stationary points, the model learns to assign vanishing probabilities to non-preferred responses.

% \vspace{-0.05in}
\begin{takeawaybox}
\begin{lemma}[\textbf{Vanishing Probability for Non-Preferred Responses}]
\label{lemma:main_stationary_point}
For the \mpo\ loss function, a stationary point is achieved as the model probabilities for all responses $y_i$ in the non-preferred set $\mathcal{Y}^-$ approach zero, i.e., $P_\theta(y_i \mid x) \to 0$ for all $y_i \in \mathcal{Y}^-$.
\end{lemma}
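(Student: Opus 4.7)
The plan is to exploit the simple algebraic form of the MPO loss on a single training sample. Writing $A(x;\theta)=\sum_{y\in\mathcal{Y}^+_x}\exp(\beta r_\theta(y\mid x))$ and $B(x;\theta)=\sum_{y\in\mathcal{Y}^-_x}\exp(\beta r_\theta(y\mid x))$, the per-sample loss inside the expectation in Eq.~\ref{eq:mpo_loss_definition} rewrites as
\[
\ell(x;\theta) \;=\; -\log\!\frac{A(x;\theta)}{A(x;\theta)+B(x;\theta)} \;=\; \log\!\left(1+\tfrac{B(x;\theta)}{A(x;\theta)}\right),
\]
which is non-negative and attains its infimum $0$ precisely when $B(x;\theta)/A(x;\theta)\to 0$. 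This already signals the conclusion: any descent direction should drive $B(x;\theta)\to 0$, i.e., shrink every rejected-response utility.

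Next I would verify this rigorously by differentiating with respect to the implicit score of a single rejected response $y_j\in\mathcal{Y}^-_x$, obtaining
\[
\frac{\partial \ell(x;\theta)}{\partial r_\theta(y_j\mid x)} \;=\; \frac{\beta\exp(\beta r_\theta(y_j\mid x))}{A(x;\theta)+B(x;\theta)} \;>\; 0
\]
whenever $\pi_\theta(y_j\mid x)>0$, since $\beta>0$ and $\pi_{\text{ref}}(y_j\mid x)>0$. Hence $\ell(x;\theta)$ is strictly monotone increasing in each rejected score $r_\theta(y_j\mid x)$, so decreasing $r_\theta(y_j\mid x)$ (equivalently, decreasing $\pi_\theta(y_j\mid x)$) strictly decreases the loss. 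By linearity of expectation, the same monotonicity holds for $\mathcal{L}_{\text{MPO}}(\theta)$ with respect to the implicit score of any rejected response appearing in the dataset. Consequently, no configuration with $\pi_\theta(y_j\mid x)$ bounded away from $0$ can be a first-order stationary point; any sequence of parameters along which the gradient norm vanishes must satisfy $\exp(\beta r_\theta(y_j\mid x))\to 0$ for each such $y_j$, and under the sufficient-capacity assumption such a sequence exists.

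Finally, using $r_\theta(y\mid x)=\log(\pi_\theta(y\mid x)/\pi_{\text{ref}}(y\mid x))$, the condition $\exp(\beta r_\theta(y_j\mid x))\to 0$ is equivalent to $(\pi_\theta(y_j\mid x)/\pi_{\text{ref}}(y_j\mid x))^\beta\to 0$. Because $\beta>0$ and $\pi_{\text{ref}}(y_j\mid x)$ is a fixed positive reference probability, this forces $\pi_\theta(y_j\mid x)\to 0$ for every $y_j\in\mathcal{Y}^-_x$, which is the claim. The place I expect to require the most care is the interpretation of \emph{stationary point} as a limit rather than an attained critical point: the infimum $\ell=0$ is reached only asymptotically, so the argument must be phrased either as a statement about the limiting behaviour of gradient-descent trajectories or about the absence of interior critical points on any sublevel set on which some rejected probability is uniformly bounded below.
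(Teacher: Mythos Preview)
Your proposal is correct and follows essentially the same route as the paper: the paper's proof (Lemma~\ref{lem:weighted_stationary} in Appendix~\ref{sec:stationary_point_characterization}) likewise computes the gradient of the loss with respect to the logit of each rejected response, observes it equals the strictly positive quantity $p_i^{\text{weighted}}$, and concludes that stationarity forces $e^{s_\theta(y_i\mid x)}\to 0$ and hence $P_\theta(y_i\mid x)\to 0$. Your preliminary rewrite $\ell=\log(1+B/A)$ and your explicit care about interpreting the stationary point as a limit are nice touches but do not change the underlying argument.
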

\end{takeawaybox}

% \vspace{-0.05in}
This contrasts with objectives like InfoNCA that aim to match a target distribution over all responses. This property encourages \mpo\ to learn a clearer distinction between preferred and non-preferred sets. Detailed proof and characterization are provided in Appendix~\ref{sec:stationary_point_characterization}.

\vspace{-0.15in}
\section{Experimental Setup}
\label{sec:experimental setup}
\vspace{-0.15in}
We evaluate \mpo{} across three training regimes: Offline, Online, and Iterative, using Mistral-7B and Llama-3-8B base models. Further details, including baselines, are in Appendix~\ref{app:experimental_details}. 

\vspace{-0.35cm}

\paragraph{Off-policy Setting.}
Following \citet{tunstall2023zephyr}, we first create supervised fine-tuned (SFT) models using UltraChat-200k \citep{ding2023enhancing}. These SFT models initialize \mpo\ and other methods for preference optimization on the UltraFeedback dataset \citep{cui2023ultrafeedback}.

% \paragraph{\textbf{Offline Setting}} For the offline experiments, we follow a training pipeline inspired by Zephyr \cite{tunstall2023zephyr}. We begin by training a base model—either Mistral-7B or Llama-3-8B using the UltraChat-200k \cite{ding2023enhancing} dataset to obtain a supervised fine-tuned (SFT) model. This SFT checkpoint then serves as the initialization for $\mpo$ and other offline preference optimization methods on the UltraFeedback \cite{cui2023ultrafeedback} dataset.

\vspace{-0.35cm}

% \begin{keytakeawaysbox}
% Our results show that 
% % \begin{itemize}[left=0pt,itemsep=0pt]
% %   \item 
% % \end{itemize}
% \end{keytakeawaysbox}

% \paragraph{\textbf{Online Setting}} In the online setting, we start from strong, instruction-tuned models—specifically Llama-3-8B-Instruct and Mistral-7B-Instruct which have already undergone extensive alignment through instruction tuning. To better match the preference optimization distribution to these instruction-tuned checkpoints, we generate the preference datasets using the same models, following a strategy similar to prior work \cite{meng2024simpo}. For each prompt from UltraFeedback, we sample five candidate responses using a temperature of 1.0. These are then scored by the Skywork-Reward-Llama-3.1-8B reward model, from which we select the top-2 and bottom-2 responses to serve as preferred and rejected samples.

\paragraph{On-policy Setting.}
We start with instruction-tuned models (Llama-3-8B-Instruct, Mistral-7B-Instruct). To align these models on preferences, we generate 5 responses per UltraFeedback prompt (temperature 1.0) using the instruction-tuned models themselves, similar to \citet{meng2024simpo}. Responses are scored by Skywork-Reward-Llama-3.1-8B \cite{liu2024skywork}; the top-2 and bottom-2 are selected as preferred ($\mathcal{Y}^+$) and rejected ($\mathcal{Y}^-$) sets for $\mpo$.

\begin{table*}[!t]
\centering
\resizebox{1.0\textwidth}{!}{
\begin{tabular}{@{}lcccccccc@{}}
\toprule
\multirow{2}{*}{\textbf{Method}} & \multicolumn{4}{c}{\textbf{Mistral-Base (7B)}} & \multicolumn{4}{c}{\textbf{Llama-3-Base (8B)}} \\ 
\cmidrule(lr){2-5} \cmidrule(lr){6-9}
 & LC (\%) & WR (\%) & Arena-Hard & MT-Bench & LC (\%) & WR (\%) & Arena-Hard & MT-Bench \\ 
\midrule
SFT\footnotemark[1] & 8.4 & 6.2 & 1.3 & 6.3 & 6.2 & 4.6 & 3.3 & 6.6 \\
RRHF\footnotemark[1] & 11.6 & 10.2 & 5.8 & 6.7 & 12.1 & 10.1 & 6.3 & 7.0 \\
SLiC-HF\footnotemark[1] & 10.9 & 8.9 & \underline{7.3} & \textbf{7.4} & 12.3 & 13.7 & 6.0 & 7.6 \\
IPO\footnotemark[1] & 10.8 & 9.0 & 7.5 & 7.0 & 14.4 & 14.2 & 17.8 & 7.4 \\
CPO\footnotemark[1] & 9.8 & 8.9 & 6.7 & 6.9 & 10.8 & 8.1 & 5.8 & 7.4 \\
KTO\footnotemark[1] & 13.1 & 9.1 & 5.6 & 6.8 & 14.2 & 12.4 & 12.5 & \textbf{7.8} \\
ORPO\footnotemark[1] & 14.7 & 12.2 & 7.0 & 7.0 & 12.2 & 10.6 & 10.8 & 7.6 \\
R-DPO\footnotemark[1] & 17.4 & 12.8 & 10.5 & 7.0 & 17.6 & 14.4 & 17.2 & 7.5 \\
GRPO & 10.1 & 6.7 & 6.3 & 6.8 & 8.2 & 5.7 & 6.0 & 7.1 \\
$\dpo$& 16.6 & 13.8 & 12.7 & 6.7 & 16.9 & 14.1 & \textbf{18.5} & \underline{7.7} \\
NC2-$\dpo$ & 14.9 & 11.7 & 8.8 & 6.9 & 16.3 & 13.1 & 13.7 & 7.5 \\
InfoNCA & 14.8 & 10.8 & 9.7 & 7.0 & 15.9 & 12.9 & 14.8 & 7.6 \\
\midrule
$\mpo$ 1vsk & 15.2 & 11.5 & 10.1 & 7.1 & 17.3 & 13.5 & 15.9 & 7.6 \\
$\mpo$ & \underline{18.4} & \underline{14.4} & \textbf{13.2} & 7.2 & 18.4 & 15.1 & \underline{18.4} & 7.5 \\
W-$\mpo$ & \textbf{20.3} & \textbf{14.9} & \underline{12.8} & \underline{7.3} & \textbf{20.1} & \textbf{15.6} & \textbf{18.5} & \textbf{7.8} \\
\bottomrule
\end{tabular}
}
\caption{Comparison of methods on Mistral-Base and Llama-Base models in the off-policy setting with Ultrafeedback prompts and responses.\vspace{-0.15in}}
\label{tab:base-results}
\end{table*}
\footnotetext[1]{These are taken directly from the paper \simpo: Simple Preference Optimization with a Reference-Free Reward \citep{meng2024simpo}}

\vspace{-0.35cm}

\paragraph{\textbf{Iterative Setting}} 
Using the same instruction-tuned base models, we adopt an iterative alignment framework akin to SPPO \citep{wu2024self}. The UltraFeedback prompt set is split for multi-round fine-tuning, with fresh responses (5 per prompt, temp 1.0, scored by Skywork-Reward-Llama-3.1-8B, top-2/bottom-2 selected) generated each round, following \citet{wu2024self}.

We provide more details about our experimental setup in Appendix \ref{sec:baselines} and \ref{app:experimental_details}.

\begin{table*}[t]
\centering
\resizebox{1.0\textwidth}{!}{
\begin{tabular}{@{}lcccccccc@{}}
\toprule
\multirow{2}{*}{\textbf{Method}} & \multicolumn{4}{c}{\textbf{Mistral-Instruct (7B)}} & \multicolumn{4}{c}{\textbf{Llama-3-Instruct (8B)}} \\ 
\cmidrule(lr){2-5} \cmidrule(lr){6-9}
 & LC (\%) & WR (\%) & Arena-Hard & MT-Bench & LC (\%) & WR (\%) & Arena-Hard & MT-Bench \\ 
\midrule
SFT & 17.1 & 14.7 & 12.6 & 7.5 & 26.0 & 25.3 & 22.3 & 8.1 \\
RRHF\footnotemark[1] & 25.3 & 24.8 & 18.1 & 7.6 & 31.3 & 28.4 & 26.5 & 7.9 \\
SLiC-HF\footnotemark[1] & 24.1 & 24.6 & 18.9 & \textbf{7.8} & 26.9 & 27.5 & 26.2 & 8.1 \\
DPO\footnotemark[1] & 26.8 & 24.9 & 16.3 & 7.6 & 40.3 & 37.9 & 32.6 & 8.0 \\
IPO\footnotemark[1] & 20.3 & 20.3 & 16.2 & \textbf{7.8} & 35.6 & 35.6 & 30.5 & \textbf{8.3} \\
CPO\footnotemark[1] & 23.8 & 28.8 & 22.6 & 7.5 & 28.9 & 32.2 & 28.8 & 8.0 \\
KTO\footnotemark[1] & 24.5 & 23.6 & 17.9 & 7.7 & 33.1 & 31.8 & 26.4 & 8.2 \\
ORPO\footnotemark[1] & 24.5 & 24.9 & 20.8 & 7.7 & 28.5 & 27.4 & 25.8 & 8.0 \\
R-DPO\footnotemark[1] & 27.3 & 24.5 & 16.1 & 7.5 & 41.1 & 37.8 & 33.1 & 8.0 \\
$\mpo$ & \textbf{28.2} & \textbf{29.4} & \textbf{22.7} & 7.7 & \textbf{49.0} & \textbf{50.6} & \textbf{46.2} & 8.2 \\ 
\bottomrule
\end{tabular}
}
\caption{Results on Mistral-Instruct and Llama-Instruct models in the on-policy setting with Ultrafeedback prompts.}
\label{tab:instruct-results}
\end{table*}

\begin{table*}[!tbph]
\centering
\resizebox{1.0\textwidth}{!}{
\begin{tabular}{@{}lcccccccc@{}}
\toprule
\multirow{2}{*}{\textbf{Method}} & \multicolumn{4}{c}{\textbf{Mistral-Instruct (7B)}} & \multicolumn{4}{c}{\textbf{Llama-3-Instruct (8B)}} \\ 
\cmidrule(lr){2-5} \cmidrule(lr){6-9}
 & LC (\%) & WR (\%) & Arena-Hard & MT-Bench & LC (\%) & WR (\%) & Arena-Hard & MT-Bench \\ 
\midrule
% \multicolumn{9}{c}{\textbf{Iterative Setting}} \\
% \midrule
SPPO\footnotemark[2] (Iteration 1) & 24.79 & 23.51 & 18.7 & 7.21 & 31.73 & 31.74 & 34.8 & 8.1 \\
$\mpo$ (Iteration 1) & \textbf{26.01} & \textbf{26.81} & \textbf{22.1} & \textbf{7.64} & \textbf{44.56} & \textbf{47.83} & \textbf{42.8} & \textbf{8.2} \\
\midrule
SPPO\footnotemark[2] (Iteration 2) & 26.89 & 27.62 & 20.4 & 7.49 & 35.15 & 35.98 & 38.0 & 8.2 \\
$\mpo$ (Iteration 2) & \textbf{28.15} & \textbf{31.05} & \textbf{23.3} & \textbf{7.68} & \textbf{48.96} & \textbf{52.3} & \textbf{45.6} & \textbf{8.3} \\
\midrule
SPPO\footnotemark[2] (Iteration 3) & 28.53 & 31.02 & 23.3 & 7.59 & 38.78 & 39.85 & 40.1 & 8.2 \\
$\mpo$ (Iteration 3) & \textbf{33.42} & \textbf{36.93} & \textbf{23.9} & \textbf{7.71} & \textbf{53.75} & \textbf{57.65} & \textbf{49.2} & \textbf{8.4} \\
\bottomrule
\end{tabular}
}
\caption{Results on Mistral-Instruct and Llama-Instruct models in the iterative on-policy setting with
Ultrafeedback prompts.\vspace{-0.23in}}
\label{tab:iterative}
\end{table*}
\footnotetext[2]{These are taken directly from the paper SPPO: Self-Play Preference Optimization for Language Model Alignment
 \citep{wu2024self}}

\section{Experimental Results}
\label{sec:experimental_results}

% \vspace{-0.12in}
\paragraph{SOTA Performance}
We evaluate $\mpo$ against a broad spectrum of preference optimization baselines in three distinct training regimes: offline, online, and iterative. Across all settings, $\mpo$ consistently sets a new state-of-the-art. In the offline setup, as seen in Table~\ref{tab:base-results}, $\mpo$ outperforms established baselines such as R-DPO \citep{rafailov2024direct}, InfoNCA \citep{chen2024noise}, and KTO \citep{ethayarajh2024kto}. Notably, W-$\mpo$ achieves top scores.

% \vspace{-0.03in}
In the online setup in Table~\ref{tab:instruct-results}, preference optimization is performed on instruction-tuned models. $\mpo$ again achieves SOTA performance across all benchmarks and model families. We also evaluate $\mpo$ in the iterative setting in Table~\ref{tab:iterative}. Compared against SPPO \citep{wu2024self}, the strongest prior iterative baseline, $\mpo$ yields significant improvements in every round. 

\begin{figure}[!h]               % one float, two columns inside
  \centering
  \begin{minipage}[t]{0.55\textwidth}
  % \vspace{0.02in}
  \paragraph{Importance of suboptimal responses in MPO}

Traditional preference optimization methods often underutilize the rich feedback signal available by relying solely on a single best-worst pair per query. However, models can produce diverse suboptimal responses, and learning from this can improve alignment. In Figure~\ref{fig:optimal-response-analysis}, we present results for \texttt{Llama-8B-Instruct} in the on-policy setting on AlpacaEval 2. We observe consistent gains as the number of responses per query increases from $n{=}2$ to $n{=}8$ under $\mpo$, highlighting the benefits of leveraging multiple responses during training.
% For instance, there is more than one way to get an answer right and wrong, and showing the model these ways, via the  \mpo{} loss function, improves downstream performance.  

% \vspace{0.04in}

% This underscores that incorporating additional suboptimal examples provides MPO with more comprehensive reward information, leading to more effective policy alignment than relying on a narrowly defined optimal pair.
  \end{minipage}\hfill          % \hfill inserts a flexible gutter
  \begin{minipage}[t]{0.4\textwidth}
    \vspace{-0.1in}
    \centering
    \includegraphics[width=\linewidth]{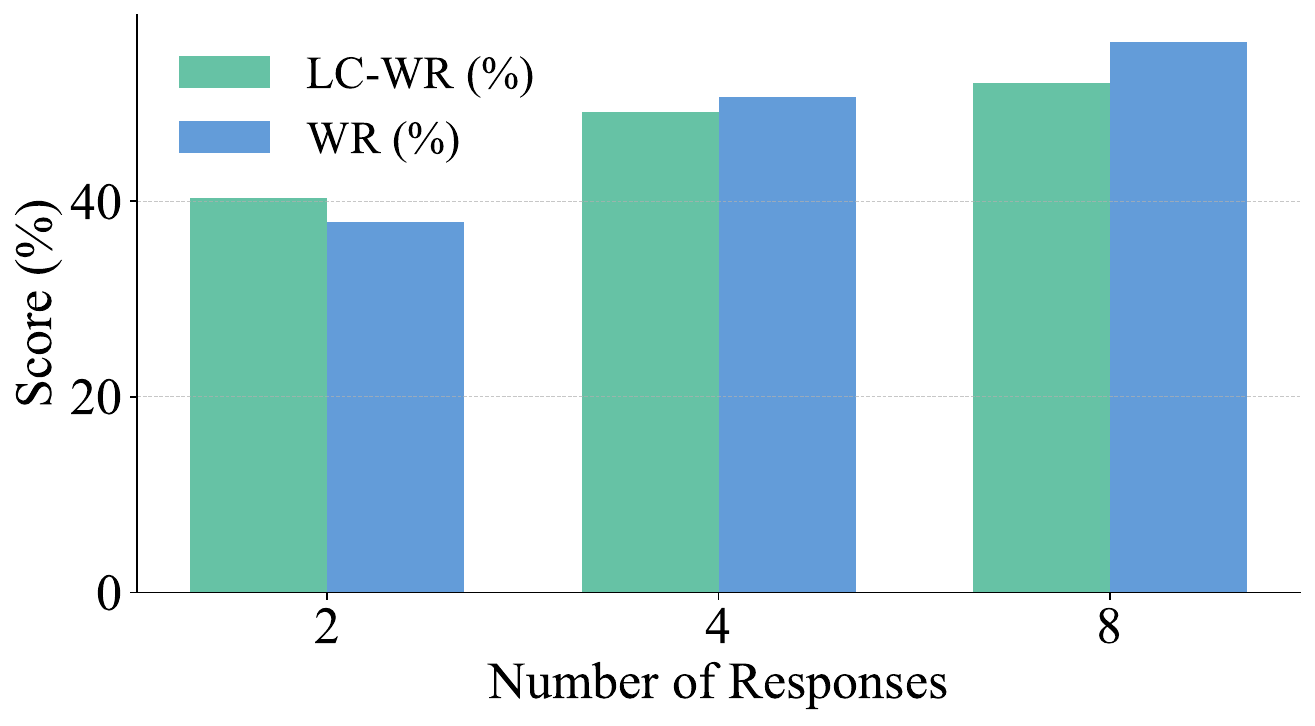}
    \vspace{-0.25in}
    \captionof{figure}{Downstream performance with number of model responses on Alpaca Eval2.}
    \label{fig:optimal-response-analysis}
  \end{minipage}
  % \vspace{-0.3in}
%   \begin{takeawaybox}
% \textbf{Key Takeaway:} Including additional sub-optimal responses in $\mpo$ improves policy learning, challenging the convention that only top-ranked answers drive alignment performance.
% \end{takeawaybox}
\end{figure}
\begin{takeawaybox}
\textbf{Key Takeaway:} Including additional sub-optimal responses in $\mpo$ improves policy learning, challenging the convention that only top-ranked answers drive alignment performance.
\end{takeawaybox}

\vspace{0.1in}
\begin{wraptable}{r}{0.5\textwidth}
\vspace{-0.15in}
\centering
\resizebox{0.48\textwidth}{!}{%
\begin{tabular}{lcccc}
\toprule
\multirow{2}{*}{\textbf{Method}} 
& \multicolumn{2}{c}{\textbf{Mistral (7B)}} 
& \multicolumn{2}{c}{\textbf{Llama-3 (8B)}} \\
\cmidrule(lr){2-3} \cmidrule(lr){4-5}
& \textbf{LC (\%)} & \textbf{WR (\%)} & \textbf{LC (\%)} & \textbf{WR (\%)} \\
\midrule
$\mpo$ & 28.2 & 29.4 & 49.0 & 50.6 \\
W-$\mpo$ & \textbf{29.5} & \textbf{30.6} & \textbf{52.1} & \textbf{52.5} \\
\bottomrule
\end{tabular}
}
\caption{Comparison of $\mpo$ and W-$\mpo$ on AlpacaEval2 for Mistral and Llama Instruct models.\vspace{-0.1in}}
\label{tab:w-mpo_and_mpo comparison}
\vspace{-1em}
\end{wraptable}

% \vspace{0.1in}
\paragraph{Impact of deviation-based weighting in W-MPO} 
We evaluate the impact of W-MPO's core deviation-based weighting by comparing it to MPO. Table \ref{tab:w-mpo_and_mpo comparison} underscores that deviation-aware weighting helps the policy learn from more separable supervision signals, leading to improved alignment.

\vspace{0.1in}
\begin{takeawaybox}
\textbf{Key Takeaway:} $\mpo$ supports the hypothesis that deviation based contrast amongst responses -- not just their absolute reward -- plays a critical role in preference optimization.
\end{takeawaybox}

\clearpage
% \begin{table}[!ht]
% \centering
% \resizebox{\textwidth}{!}{
% \setlength{\tabcolsep}{3pt}
% \begin{tabular}{l|c|cc|c}
% \hline
% \multirow{2}{*}{\textbf{Evaluator LM}} & \multirow{2}{*}{\textbf{Method}} & \multicolumn{2}{c|}{\textbf{AlpacaEval2}} & \multicolumn{1}{c}{\textbf{AlpacaEval1}} \\
%  &  & WR & LC-WR & WR \\
% \hline
% mistral-7b-instruct-v0.1 & $\mpo$ (k1 vs k2) (p = 0) & 11.14 & 16.06 & 88.81 \\
% mistral-7b-instruct-v0.1 & $\mpo$ (k1 vs k2) (p = 1) & 11.7 & 16.3 & \cellcolor{gray!25}88.93 \\
% mistral-7b-instruct-v0.1 & $\mpo$ (k1 vs k2) (p = 2) & \cellcolor{gray!25} 11.94 & \cellcolor{gray!25} 16.64 & 87.9 \\
% \hline
% \end{tabular}
% }
% \label{tab: weighing logit}
% \caption{Performance Comparison of Logit Weighting Methods in Group Contrastive Loss}
% \end{table}

% \begin{wrapfigure}{r}{0.5\columnwidth}
%     \centering
%     \vspace{-10pt}  % Adjust vertical space before the figure
%     \includegraphics[width=\linewidth]{images/aligned_compact_mpo_dpo_adjusted_labels (1).pdf}
%     \vspace{-12pt}  % Adjust vertical space after the figure
%     \caption{Comparison of $\mpo$ and $\dpo$ across varying training data sizes on AlpacaEval2.}
%     \label{fig:Training-Data-Analysis}
% \end{wrapfigure}

\begin{figure}[!h]               % one float, two columns inside
  \centering
  \begin{minipage}[t]{0.48\textwidth}
  \vspace{0.02in}
  \paragraph{$\mpo$ achieves superior gains even with limited data}
\vspace{0.05in}

We examine how $\mpo$ performs under data-constrained settings for Mistral (7B), compared to $\dpo$ across varying proportions of training data. As shown in Figure~\ref{fig:Training-Data-Analysis}, $\mpo$ outperforms $\dpo$ at varying percentages of training data. For instance, on LC-WR (\%), MPO at 25\% training data outperforms DPO at 50\%.
These improvements are a direct result of $\mpo$’s ability to leverage multiple responses per query, extracting more supervision signal.
  \end{minipage}\hfill          % \hfill inserts a flexible gutter
  \begin{minipage}[t]{0.48\textwidth}
    \vspace{0.00in}
    \centering
    \includegraphics[width=\linewidth]{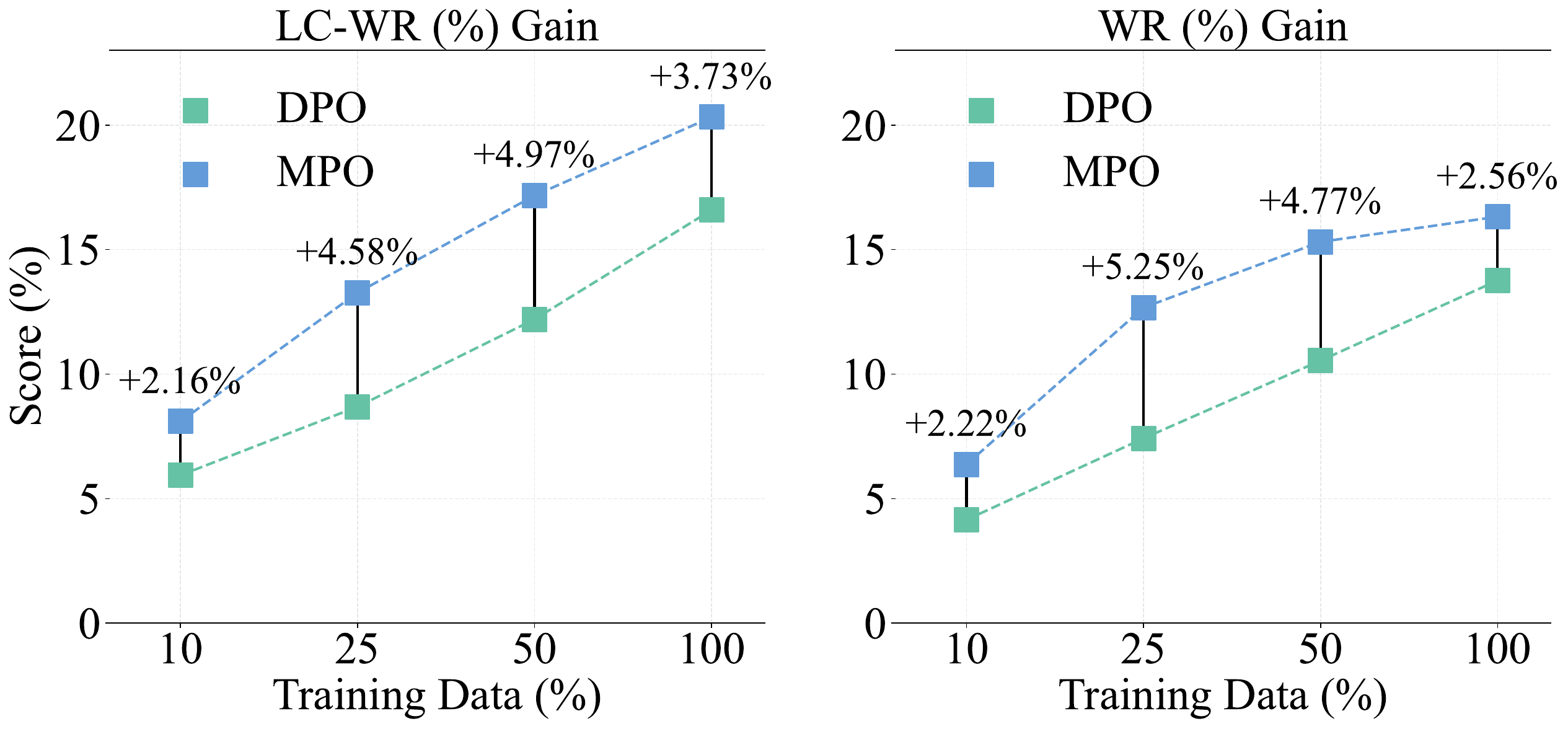}
    \captionof{figure}{Comparison of $\mpo$ and $\dpo$ across varying training data sizes on AlpacaEval2.}
    \label{fig:Training-Data-Analysis}
  \end{minipage}
    \vspace{0.05in}
  \begin{takeawaybox}
\textbf{Key Takeaway:} Even with limited training data, $\mpo$ consistently outperforms $\dpo$ by effectively utilizing multiple responses per query -- highlighting the importance of multiple responses per query, over multiple queries, at a constant (query, response) budget.
\end{takeawaybox}
  \vspace{0.05in}
\end{figure}
% \vspace{0.05in}
% \begin{wrapfigure}{r}{0.5\columnwidth}
%     \vspace{-10pt}  % Reduce space above
%     \includegraphics[width=1.0\linewidth]{images/aligned_compact_mpo_dpo_adjusted_labels (1).pdf}
%     \vspace{-15pt}  % Reduce space below
%     % \captionsetup{font=small}
%     \caption{Comparison of $\mpo$ and $\dpo$ across varying training data sizes on AlpacaEval2.\vspace{-25pt}}
%     \label{fig:Training-Data-Analysis}
% \end{wrapfigure}

% \paragraph{$\mpo$ Achieves Superior Gains Even with Limited Data}

% We examine how $\mpo$ performs under data-constrained settings, comparing its effectiveness with $\dpo$ across varying proportions of training data. As shown in Figure~\ref{fig:Training-Data-Analysis}, $\mpo$ outperforms $\dpo$ at all data scales.

% These improvements are a direct result of $\mpo$’s ability to leverage multiple responses per query, extracting more supervision signal even when the overall data size is small. 

% \vspace{-12pt}
% \begin{tcolorbox}[colback=gray!5!white, colframe=gray!50!black, boxrule=0.3pt, sharp corners=south]
% \textbf{Key Takeaway:} \textit{Even with limited training data, $\mpo$ consistently outperforms $\dpo$ by effectively utilizing multiple responses per query—highlighting the importance of response diversity over data quantity.}
% \end{tcolorbox}

\begin{figure}[!h]               % one float, two columns inside
  \centering
  \begin{minipage}[t]{0.48\textwidth}
  \vspace{0.00in}
    \centering
    \includegraphics[width=\linewidth]{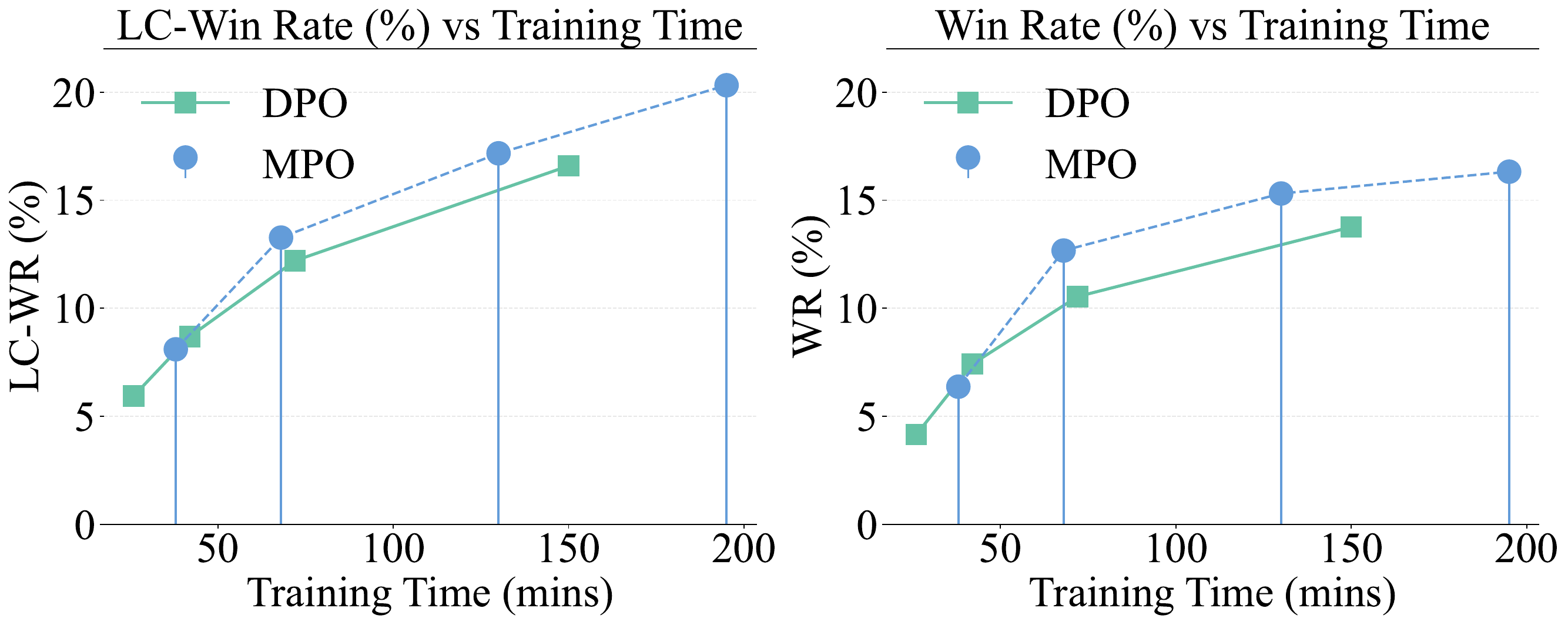}
    \captionof{figure}{Comparison of $\mpo$ and $\dpo$ on Alpaca\-Eval2
      under fixed training-time budgets.}
    \label{fig:time-efficiency}
  \end{minipage}\hfill          % \hfill inserts a flexible gutter
  \begin{minipage}[t]{0.48\textwidth}
    \vspace{0.02in}
    \paragraph{Efficiency of \textsc{MPO} under fixed training budgets}
    While $\mpo$ naturally incurs longer per-step training time because it
    processes multiple responses per query, we investigate whether this
    cost translates into superior performance within a fixed wall-clock
    budget.  Figure~\ref{fig:time-efficiency} shows that, despite the
    higher computation per batch, $\mpo$ consistently outperforms $\dpo$
    at every time checkpoint, achieving substantial gains in both LC-WR
    and overall WR for Mistral (7B).
  \end{minipage}
  \vspace{0.05in}
  \begin{takeawaybox}
\textbf{Key Takeaway:} Even under fixed training time, $\mpo$ outperforms $\dpo$, showing its greater sample efficiency and stronger use of richer preference signals.
\end{takeawaybox}
% \vspace{0.05in}
\end{figure}

% \begin{wrapfigure}{r}{0.5\columnwidth}
%     \centering
%     \vspace{-20pt}  % Adjust vertical space before the figure
%     \includegraphics[width=\linewidth]{images/training_time_vs_performance (2).pdf}
%     \vspace{-20pt}  % Adjust vertical space after the figure
%     \caption{Comparison of $\mpo$ and $\dpo$ on AlpacaEval2 under fixed training time budgets. \vspace{-10pt}}
%     \label{fig:time-efficiency}
% \end{wrapfigure}

% \paragraph{Efficiency of MPO Under Fixed Training Budgets:}

% While $\mpo$ naturally incurs longer per-step training time due to processing multiple responses per query, we investigate whether this added cost translates into superior performance within a fixed training budget. In Figure~\ref{fig:time-efficiency}, we compare $\mpo$ and $\dpo$ across varying training durations.

% Despite the higher computation per batch, $\mpo$ consistently outperforms $\dpo$ at every time checkpoint, achieving substantial gains in both LC-WR and WR.

% \begin{tcolorbox}[colback=gray!5!white, colframe=gray!50!black, boxrule=0.3pt, sharp corners=south]
% \textbf{Key Takeaway:} \textit{Even under fixed training time, $\mpo$ outperforms $\dpo$, showing its greater sample efficiency and stronger use of richer preference signals.}
% \end{tcolorbox}

% \begin{takeawaybox}
% \textbf{Key Takeaway:} Even under fixed training time, $\mpo$ outperforms $\dpo$, showing its greater sample efficiency and stronger use of richer preference signals.
% \end{takeawaybox}

\begin{figure}[!thbp]
    \centering
\includegraphics[width=1.0\columnwidth]{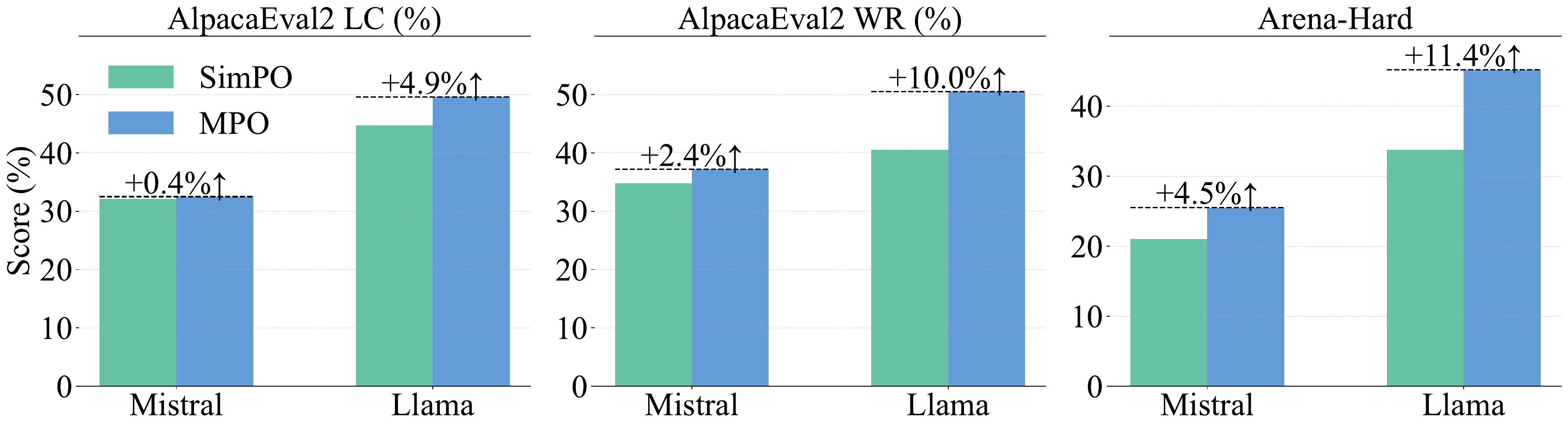}
    \vspace{-0.2in}
    \caption{Evaluating $\mpo$ (Reference-Free) vs $\simpo$ on Mistral and Llama instruct models.\vspace{-0.2in}}
    \label{fig:reference-free-analysis}
\end{figure}

% \vspace{12in}

\paragraph{Unifying $\dpo$ variants under the $\mpo$ setting} To demonstrate the generality, we explore how $\mpo$ can be instantiated on top of existing $\dpo$-based methods, such as KTO \citep{ethayarajh2024kto}, R-DPO \citep{park2024disentangling}, and SimPO \citep{meng2024simpo}. These methods share the same pairwise contrastive structure as $\dpo$, differing primarily in how they weigh or define the reward margin. Since $\mpo$ generalizes $\dpo$ to groupwise preference comparisons, it is naturally compatible with any loss that depends on contrast over preference data.

In Figure~\ref{fig:reference-free-analysis}, we highlight a concrete instantiation: $\mpo$ (Reference-Free), where we extend SimPO’s reward-based contrastive loss to operate in the multi-preference regime. This variant benefits from MPO’s ability to better utilize responses. We observe consistent improvements across evaluation on downstream datasets. The results indicate that $\mpo$ (Reference-Free) leverages responses more effectively than SimPO.
  \vspace{0.05in}
% \begin{tcolorbox}[colback=gray!5!white, colframe=gray!50!black, boxrule=0.3pt, sharp corners=south]
% \textbf{Key Takeaway:} \textit{$\mpo$ provides a principled generalization of $\dpo$-style contrastive objectives, allowing broader preference integration without altering the base reward design.}
% \end{tcolorbox}

\begin{takeawaybox}
\textbf{Key Takeaway:} $\mpo$ provides a principled generalization of $\dpo$-style contrastive objectives, allowing broader preference integration without altering the base reward design.
\end{takeawaybox}

% \begin{tcolorbox}[
%     title=Key Takeaway,
%     colback=blue!3!white,
%     colframe=blue!30!black,
%     boxrule=0.3pt,
%     arc=1mm,
%     sharp corners=south,
%     left=6pt,
%     right=6pt,
%     top=4pt,
%     bottom=4pt
% ]
% \textit{$\mpo$ provides a principled generalization of $\dpo$-style contrastive objectives, allowing broader preference integration without altering the base reward design.}
% \end{tcolorbox}
\bibliography{References}
\bibliographystyle{apalike}

% %%%%%%%%%%%%%%%%%%%%%%%%%%%%%%%%%%%%%%%%%%%%%%%%%%%%%%%%%%%%
\clearpage
\appendix
\vspace{1cm}
\hrule
\par\vspace{0.5cm}
{\Large\bfseries\centering \textsc 
{Supplementary Materials}
\par\vspace{0.5cm}}
\hrule
\vspace{0.5cm}
\noindent These supplementary materials provide additional details, derivations, and experimental results for our paper. The appendix is organized as follows:
\begin{itemize}[leftmargin=1em]
    \item Section \ref{sec:related_work_extended} presents a detailed related works section.

    \item Section \ref{sec:limitations} presents the core limitations of this work, and some avenues for future work.

    \item Section \ref{app:weighted_mpo_details} presents a detailed overview of a variant of our main method which is weighted MPO.
    
    \item Section \ref{sec:bias_analysis} presents a detailed bias analysis, demonstrating how incorporating multiple preferences reduces alignment bias.
    
    \item Section \ref{sec:gcl_vs_infonce_analysis} provides a comprehensive comparison between the Group Contrastive Loss and InfoNCA Loss, including detailed gradient analyses.
    
    \item Section \ref{sec:stationary_point_characterization} offers a thorough characterization of stationary points for both the InfoNCA and Weighted Contrastive Loss functions.
    
    \item Section \ref{sec:baselines} describes the baselines used for comparison in our experimental evaluations, including various DPO implementations and alternative approaches.

    \item Section \ref{app:experimental_details} describes the models, dataset and compute details for our experiments.
    
    \item Section \ref{sec:reward_loss_computation} provides the implementation details of the reward loss computation, including the actual code used in our experiments.

    \item Section \ref{sec:examples} provides examples of model responses using our model as compared with the base model.
\end{itemize}
\vspace{0.5cm}

\section{Related Work}
\label{sec:related_work_extended}
We will start this literature survey with a high level overview of the RLHF literature and then going deeper into the area of preference, and then multi-preference optimization relevant to our work.

\textbf{Broader RLHF Literature:}
Reinforcement Learning through Human feedback (RLHF) has emerged as a robust alignment algorithm for language models. The area broadly started of with works like Trust Region Policy Optimization (TRPO), and Proximal Policy Optimization (PPO) \citep{schulman2015trust,schulman2017proximal} which extend direct RL based methods by constraining the update space to within a trusted region and clipping policy updates to prevent instability respectively. Building upon earlier policy gradient methods \citep{sutton1999policy}, PPO has been successfully applied to alignment tasks in Reinforcement Learning from Human Feedback (RLHF), allowing language models to produce outputs aligned with human preferences \citep{ziegler2019fine, ouyang2022training}. Its simplicity and efficiency make it a standard approach for fine-tuning large-scale models. Prior to PPO, Trust Region Policy Optimization (TRPO) \citep{schulman2015trust} introduced constraints to improve learning stability, influencing the development of PPO. Early applications of policy gradient methods in natural language processing \citep{ranzato2015sequence} demonstrated the potential of reinforcement learning for language model training. 

\textbf{Preference Optimization:}
Direct Preference Optimization (DPO) simplifies the alignment of language models by optimizing a contrastive loss directly over paired preference data, bypassing the intermediate step of reward modeling \citep{rafailov2024direct}. Unlike RLHF, DPO does not require explicit reward functions, making it computationally efficient and suitable for limited preference datasets. Recent extensions of DPO, such as Identity Preference Optimization (IPO) \citep{azar2024general}, self-play preference optimization \citep{wu2024self}, preference ranking optimization \citep{song2024preference},  rejection sampling optimization \citep{liu2023statistical}, and generalized preference optimization \citep{tang2024generalized} are amongst the other recent works improve on the DPO method. 

Beyond the foundational pairwise approaches and their direct extensions, numerous recent works have proposed methods that adapt or refine DPO-like strategies, often eliminating the need for separate reward modeling or reference models.

\paragraph{Alternative Approaches Without Full Reward Modeling.}
\citet{ethayarajh2024kto} propose \textbf{KTO}, a framework inspired by prospect theory that directly learns whether a response is globally desirable or undesirable, thereby removing the requirement of having multiple positive examples per instruction. \citet{zeng2024token} focus on token-level alignment in \textbf{TDPO}, imposing forward KL divergence constraints for each token rather than solely for the final output. This fine-grained approach can mitigate the mode-collapse issues sometimes observed in sequence-level alignment. Meanwhile, \citet{dong2023raft} introduce a list-wise method called \textbf{RAFT}, where the model finetunes on the best response from each sampled set of \(k\) candidates, iteratively converging toward an optimal subset policy.  
By contrast, \citet{yuan2023rrhf} center on rank-based supervision through \textbf{RRHF}, which combines a rank loss with standard supervised signals to ensure the model maintains stronger probabilities on higher-ranked (i.e., better) responses and less on suboptimal responses.

\paragraph{Enhancing DPO with Additional Objectives and Training Schemes.} 
Other works further modify or reinterpret the DPO loss to incorporate new constraints or to remove the need for a reference model. \citet{chen2024self} propose \textbf{SPIN}, which treats the model as part of a two-player adversarial game, obviating separate reward modeling by training with a discriminator that distinguishes human from machine responses. \textbf{CPO} \citep{xu2024contrastive} reworks the DPO objective by removing the reference-model term and adding a behavior cloning regularizer. Similarly, \textbf{ORPO} \citep{hong2024orpo} folds preference optimization into a negative log-likelihood objective via an odds-ratio penalty, thereby unifying supervised fine-tuning (SFT) and preference training.  
In \textbf{SimPO}, \citet{meng2024simpo} remove the reference model and incorporate a length normalization to address verbosity issues that can skew preference data. Likewise, \textbf{R-DPO} \citep{park2024disentangling} and \textbf{LD-DPO} \citep{liu2024iterative} specifically tackle length bias by injecting additional regularizers or by explicitly separating length-based preferences from other factors. For instance, LD-DPO modifies the training set to handle length constraints, preventing performance drops on standard benchmarks while mitigating length exploitation in preference tasks.

\paragraph{Refining Training Regimens for Preference Data.}
A final family of works emphasizes how training procedures or data usage can be systematically improved. For instance, \citet{kim2024sdpo} propose \textbf{sDPO}, a step-wise learning method partitioning preference data to stabilize training. \textbf{IRPO} \citep{pang2024iterative} enhances chain-of-thought reasoning by incorporating a negative log-likelihood term for the chosen solution path, thus nudging LLMs toward robust multi-step reasoning. \textbf{OFS-DPO} \citep{qi2024online} trains two LoRA modules at different paces—one faster, one slower—to sustain gradient momentum and to adapt more efficiently. Lastly, \citet{yuan2024following} tackle verbosity with \textbf{LIFT-DPO}, an approach that augments preference data with length-control instructions, ensuring that the model does not exploit response length to inflate its preference scores.

\textbf{Multi-Preference Optimization:}
Traditional preference optimization methods, like DPO, consider pairwise comparisons. However, datasets such as UltraFeedback \citep{cui2023ultrafeedback} highlight the necessity of multi-preference optimization. Multi-preference methods, such as InfoNCA \citep{chen2024noise}, leverage all available positive and negative responses simultaneously, reducing alignment bias and better approximating the true preference distribution. These methods mitigate limitations inherent to pairwise approaches by incorporating the diversity of acceptable and suboptimal responses. Earlier works in search have also used multiple user preferences to optimize models in various applications such as search \citep{joachims2002optimizing}.

\textbf{Reward Modeling in Preferences:}
Reward modeling is essential for translating qualitative human feedback into quantitative metrics that guide AI behavior optimization. Traditional methods, such as Reinforcement Learning from Human Feedback (RLHF), utilize reward models trained on human annotations to inform policy updates \citep{christiano2017deep, stiennon2020learning}. Early approaches like inverse reinforcement learning \citep{ng2000algorithms} and apprenticeship learning \citep{abbeel2004apprenticeship} demonstrated the feasibility of inferring reward functions from observed behaviors. Recent advancements have diversified reward modeling techniques. For instance, the Adversarial Preference Optimization (APO) framework employs adversarial training to adapt reward models to the evolving generation distribution of language models \citep{cheng2023adversarial}.

\textbf{Noise Contrastive Estimation and InfoNCA:}
Contrastive learning, particularly methods like InfoNCE \citep{oord2018representation}, maximizes mutual information between positive samples while discriminating against negatives. In the language domain, \citet{klein2024contrastive} leverage a perplexity-based contrastive objective to reduce toxic language generation while preserving the model’s overall utility.
InfoNCA adapts these principles for preference optimization, aligning responses with scalar rewards through noise-contrastive estimation \citep{chen2024noise}. Despite its strengths, InfoNCA can overemphasize less informative negative samples, which motivates methods like $\swepo$ that dynamically weigh responses based on deviation from the mean reward. 

\textbf{UltraFeedback Dataset:}
The UltraFeedback dataset \citep{cui2023ultrafeedback} is a significant advancement in preference-based training resources. It comprises GPT-4 annotated feedback for over 64,000 instructions, including scalar reward evaluations. UltraFeedback has been pivotal in developing models like UltraLM-13B-PPO and UltraRM, which achieve state-of-the-art performance across benchmarks such as AlpacaEval. This dataset's granularity enables advanced preference optimization methods like $\swepo$ to leverage diverse response quality levels effectively.

\section{Limitations}
\label{sec:limitations}

\noindent While \mpo\ and W-\mpo\ demonstrate strong empirical performance and offer theoretical benefits, we identify several areas for further exploration and acknowledge certain design considerations:

\subsection{Reward Signal Quality and Partitioning}
\label{ssec:limitation_reward_quality}
The partitioning into preferred ($\mathcal{Y}^+$) and dispreferred ($\mathcal{Y}^-$) sets, along with the deviation calculation in W-\mpo, utilizes input scalar reward scores $S(y)$. While W-\mpo's absolute deviation weighting and the underlying contrastive nature of \mpo\ are designed for greater robustness compared to methods directly regressing on or using signed advantages from potentially noisy reward models, the initial quality of $S(y)$ still influences these preliminary steps. Significant miscalibration or systemic biases in the reward source could affect the optimality of the partitions. Future research could explore adaptive partitioning mechanisms or methods to integrate reward uncertainty more directly.

\subsection{Computational Scaling with Extremely Large Response Sets}
\label{ssec:limitation_computational_scaling}
\mpo\ and W-\mpo\ process all $N_x$ responses available for a given query to compute the loss. This comprehensive approach enables richer supervision than pairwise DPO. However, for scenarios generating an extremely large number of candidate responses per query (e.g., hundreds), the per-step computational demand might increase. Investigating efficient subset selection techniques that identify the most informative responses from very large pools, while retaining the benefits of the MPO framework, presents a valuable avenue for future work to enhance scalability further.

\subsection{Exploring Richer Preference Structures Beyond Binary Contrast}
\label{ssec:limitation_beyond_binary}
The current \mpo\ framework establishes a robust binary contrast between a single "accepted" set $\mathcal{Y}^+$ and a single "rejected" set $\mathcal{Y}^-$. While this generalization of DPO is effective, human preferences can often be more nuanced. Future work could investigate extending \mpo\ to accommodate multiple ordered preference sets (e.g., $\mathcal{Y}^{++} > \mathcal{Y}^+ > \mathcal{Y}^- > \mathcal{Y}^{--}$) or a finer spectrum of preference levels. This might involve further generalizations of the underlying groupwise Bradley-Terry model to capture such richer, multi-level preference structures.

\noindent Despite these considerations, \mpo\ and W-\mpo\ offer a robust and effective advancement for leveraging multi-preference data in LLM alignment.
\section{Detailed Formulation of Weighted \mpo{} (W-\mpo)}
\label{app:weighted_mpo_details}

While the core Multi-Preference Optimization (\mpo) framework, as presented in Section \ref{sec:mpo_methodology}, effectively leverages sets of preferred ($\mathcal{Y}_x^+$) and dispreferred ($\mathcal{Y}_x^-$) responses, it treats all responses within each set with uniform importance. However, many modern alignment pipelines, particularly those employing on-policy or iterative generation and evaluation, utilize powerful reward models (RMs) that provide fine-grained scalar quality scores $S(y)$ for each generated response $y$. These scores often reveal significant variance in quality even within the pre-defined preferred or dispreferred sets. Weighted \mpo{} (W-\mpo) is designed to exploit this richer, quantitative feedback.

\subsection{Motivation for Weighted \mpo}
\label{app_ssec:motivation_wmpo}

In contemporary LLM alignment, especially in on-policy settings (e.g., RLHF with a learned reward model, or iterative DPO-like methods where a model generates responses that are then scored), obtaining exhaustive human pairwise preferences for all generated outputs can be prohibitively expensive or slow. Instead, an automated reward model $RM(y \mid x)$ is often used to assign a scalar quality score $S(y)$ to each response $y$ generated for a prompt $x$. These scalar scores $S(y)$ offer several advantages:
\begin{enumerate}
    \item \textbf{Granularity:} They provide a more nuanced measure of quality than simple binary (chosen/rejected) or pairwise preferences. A response $y_1$ might be preferred over $y_2$, but $S(y_1)$ can also indicate \textit{how much} better it is, or how it compares to a global quality scale.
    \item \textbf{Efficiency:} RMs can score a large volume of responses quickly, enabling larger-scale and more frequent preference data collection, crucial for iterative alignment.
    \item \textbf{Handling Multiple Responses:} When multiple ($N_x > 2$) responses are generated per prompt (e.g., via diverse beam search, multiple sampling seeds, or from different model variants), RMs can score all of them, providing a rich landscape of quality.
\end{enumerate}

The core \mpo{} method, by operating on sets $\mathcal{Y}_x^+$ and $\mathcal{Y}_x^-$ (which might themselves be derived from $S(y)$ via a threshold like $S_{\text{mean}}(x)$), does not fully capitalize on the information that some responses in $\mathcal{Y}_x^+$ are "more preferred" (higher $S(y)$) than others, or that some in $\mathcal{Y}_x^-$ are "more dispreferred" (lower $S(y)$). Responses that are significant outliers in terms of their quality scores (i.e., those deviating most from an average or baseline quality) are often the most informative for training. For instance, an exceptionally good response provides a strong positive signal, while a particularly problematic response offers a clear negative signal to learn from.

W-\mpo{} aims to incorporate this finer-grained information by weighting the contribution of each response $y$ in the \mpo{} loss based on its scalar score $S(y)$. Specifically, it uses the deviation of $S(y)$ from a reference point (e.g., the mean score $S_{\text{mean}}(x)$ for the current prompt) to modulate the response's impact. This allows W-\mpo{} to:
\begin{itemize}
    \item \textbf{Prioritize Informative Samples:} Give more importance to responses that are either exceptionally good or exceptionally bad, as these are strong learning signals.
    \item \textbf{Implement a Data-Driven Curriculum:} Naturally focus the model's attention on responses from which it can learn the most about the boundaries of desired behavior.
    \item \textbf{Enhance Stability and Robustness:} By considering the full spectrum of quality scores, W-\mpo{} can potentially lead to more stable and robust alignment compared to methods that only consider coarse preference signals.
\end{itemize}
This approach is particularly relevant when $S(y)$ comes directly from a reward model, as it allows the alignment process to more faithfully reflect the nuances captured by that RM.

\subsection{W-\mpo{} Formulation}
\label{app_ssec:wmpo_formulation}

Let $\mathcal{D}_{\text{W-MPO}}$ be a dataset where each instance consists of a query $x$, a set of $N_x$ responses $\mathcal{Y}_x = \{y_1, \dots, y_{N_x}\}$ for that query, and their corresponding scalar quality scores $\{S(y_1), \dots, S(y_{N_x})\}$.

\paragraph{Response Partitioning and Deviation Calculation.}
First, for each query $x$, we calculate the mean quality score:
\begin{equation}
S_{\text{mean}}(x) = \frac{1}{N_x} \sum_{j=1}^{N_x} S(y_j).
\label{app_eq:mean_score_calc_wmpo}
\end{equation}
The responses are then partitioned into preferred and dispreferred sets:
\begin{align}
\mathcal{Y}_x^+ &= \{ y \in \mathcal{Y}_x \mid S(y) > S_{\text{mean}}(x) \}, \label{app_eq:Y_plus_from_scores_wmpo} \\
\mathcal{Y}_x^- &= \{ y \in \mathcal{Y}_x \mid S(y) \leq S_{\text{mean}}(x) \}. \label{app_eq:Y_minus_from_scores_wmpo}
\end{align}
If $\mathcal{Y}_x^+$ is empty, the sample for query $x$ is typically skipped.

Next, we define a signed \textit{deviation term} $\Delta W(y)$ for each response $y \in \mathcal{Y}_x$:
\begin{equation}
\Delta W(y) = S(y) - S_{\text{mean}}(x).
\label{app_eq:delta_W}
\end{equation}
This term, $\Delta W(y)$, is positive for $y \in \mathcal{Y}_x^+$ and non-positive for $y \in \mathcal{Y}_x^-$. It quantifies how much better or worse a response is compared to the average quality for that query.

\paragraph{Additive Logit Adjustment.}
W-\mpo{} incorporates this deviation $\Delta W(y)$ by additively adjusting the logit (scaled implicit preference score) of each response. The base implicit score is $r_\theta(y \mid x) = \log (\pi_\theta(y \mid x) / \pi_{\text{ref}}(y \mid x))$, as defined in Eq. \ref{eq:implicit_score_prelim}. The modified logit, $r'_{\theta}(y \mid x)$, used in W-\mpo{} is:
\begin{equation}
r'_{\theta}(y \mid x; \alpha_w) = \beta \cdot r_\theta(y \mid x) + \alpha_w \cdot \Delta W(y),
\label{app_eq:wmpo_additive_logit_single}
\end{equation}
where $\beta$ is the inverse temperature scaling the base DPO-like score, and $\alpha_w$ is a hyperparameter controlling the magnitude of the deviation-based adjustment. A positive $\alpha_w$ will increase the effective logit for responses with $S(y) > S_{\text{mean}}(x)$ and decrease it for responses with $S(y) < S_{\text{mean}}(x)$, with the magnitude of adjustment proportional to $|\Delta W(y)|$.

\paragraph{W-\mpo{} Loss Function.}
The W-\mpo{} loss function retains the structure of the core \mpo{} loss (Eq. \ref{eq:mpo_loss_definition}) but utilizes these modified logits $r'_{\theta}(y \mid x; \alpha_w)$:
\begin{equation}
\mathcal{L}_{\text{W-\mpo}}(\theta) = -\mathbb{E}_{(x, \mathcal{Y}_x, \{S(y)\}) \sim \mathcal{D}_{\text{W-MPO}}} \left[ \log \left( \frac{\sum_{y \in \mathcal{Y}_x^+} \exp(r'_{\theta}(y \mid x; \alpha_w))}{\sum_{y' \in \mathcal{Y}_x^+ \cup \mathcal{Y}_x^-} \exp(r'_{\theta}(y' \mid x; \alpha_w))} \right) \right].
\label{app_eq:wmpo_loss_single}
\end{equation}
This additive adjustment in the logit space (Eq. \ref{app_eq:wmpo_additive_logit_single}) is equivalent to applying a multiplicative exponential weight $w_{\text{exp}}(y) = \exp(\alpha_w \cdot \Delta W(y))$ to the original base strength term $\exp(\beta \cdot r_\theta(y \mid x))$ within the sums of the \mpo{} loss. The W-\mpo{} objective thus encourages the policy $\pi_\theta$ to more strongly prefer/disprefer responses that are further above/below the mean quality score, respectively.

\subsection{Algorithm for W-\mpo}
\label{app_ssec:algorithm_wmpo}

The detailed training procedure for W-\mpo{} is presented in Algorithm \ref{alg:appendix_wmpo_detailed}.

\begin{algorithm}[!htbp]
   \caption{Detailed Weighted Multi-Preference Optimization (W-\mpo)}
   \label{alg:appendix_wmpo_detailed}
   \begin{algorithmic}[1]
       \STATE {\bfseries Input:} Initial policy parameters $\theta_0$; dataset $\mathcal{D}_{\text{W-MPO}}$, where each sample contains $(x, \mathcal{Y}_x, \{S(y)\}_{y \in \mathcal{Y}_x})$; reference model $\pi_{\text{ref}}$; inverse temperature $\beta$; W-\mpo{} weight scaling hyperparameter $\alpha_w$; learning rate $\eta$; number of training iterations/epochs $T$.
       \STATE {\bfseries Output:} Optimized model parameters $\theta_T$.
       \STATE Initialize policy model parameters $\theta \leftarrow \theta_0$.
       \FOR{$t = 1$ {\bfseries to} $T$}
           \STATE Initialize total loss for epoch/iteration $\mathcal{L}_{\text{epoch}} \leftarrow 0$.
           \FORALL{sample $(x, \mathcal{Y}_x, \{S(y)\}_{y \in \mathcal{Y}_x})$ in $\mathcal{D}_{\text{W-MPO}}$ (typically processed in mini-batches)}
               \STATE Compute $S_{\text{mean}}(x) \gets \frac{1}{|\mathcal{Y}_x|} \sum_{y_j \in \mathcal{Y}_x} S(y_j)$. (Ref. Eq. \ref{app_eq:mean_score_calc_wmpo})
               \STATE Define $\mathcal{Y}_x^+ \gets \{ y \in \mathcal{Y}_x \mid S(y) > S_{\text{mean}}(x) \}$. (Ref. Eq. \ref{app_eq:Y_plus_from_scores_wmpo})
               \STATE Define $\mathcal{Y}_x^- \gets \{ y \in \mathcal{Y}_x \mid S(y) \leq S_{\text{mean}}(x) \}$. (Ref. Eq. \ref{app_eq:Y_minus_from_scores_wmpo})
               \IF{$|\mathcal{Y}_x^+| = 0$} 
                   \STATE {\bfseries continue} \COMMENT{Skip sample if no preferred responses after partitioning}
               \ENDIF
               
               \STATE Let $\mathcal{Y}_{\text{all}} = \mathcal{Y}_x^+ \cup \mathcal{Y}_x^-$.
               \STATE Create a map $R'_{\theta}$ to store modified logits.
               \FOR{$y \in \mathcal{Y}_{\text{all}}$}
                   \STATE Compute base score: $r_\theta(y \mid x) \gets \log\left(\frac{\pi_\theta(y \mid x)}{\pi_{\text{ref}}(y \mid x)}\right)$. (Ref. Eq. \ref{eq:implicit_score_prelim})
                   \STATE Compute deviation term: $\Delta W(y) \gets S(y) - S_{\text{mean}}(x)$. (Ref. Eq. \ref{app_eq:delta_W})
                   \STATE Compute modified logit: $r'_{\theta}(y \mid x) \gets \beta \cdot r_\theta(y \mid x) + \alpha_w \cdot \Delta W(y)$. (Ref. Eq. \ref{app_eq:wmpo_additive_logit_single})
                   \STATE $R'_{\theta}[y] \gets r'_{\theta}(y \mid x)$.
               \ENDFOR
               
               \STATE Numerator term $N_x \gets \sum_{y \in \mathcal{Y}_x^+} \exp(R'_{\theta}[y])$.
               \STATE Denominator term $D_x \gets \sum_{y' \in \mathcal{Y}_{\text{all}}} \exp(R'_{\theta}[y'])$.
               
               \STATE Sample loss $L_x \gets -\log(N_x / D_x)$, ensuring $N_x/D_x \in (0, 1]$. (Ref. Eq. \ref{app_eq:wmpo_loss_single})
               \STATE Accumulate loss (e.g., $\mathcal{L}_{\text{epoch}} \leftarrow \mathcal{L}_{\text{epoch}} + L_x$).
           \ENDFOR
           \STATE Compute gradient $\nabla_\theta \mathcal{L}_{\text{epoch}}$ (typically averaged over batch size).
           \STATE Update model parameters: $\theta \leftarrow \theta - \eta \nabla_\theta \mathcal{L}_{\text{epoch}}$.
       \ENDFOR
       \RETURN $\theta$.
   \end{algorithmic}
\end{algorithm}
\section{Bias Analysis:}
\label{sec:bias_analysis}

In the first part of this section section, we analyze how the number of positive and negative examples per query, \( k \), affects the bias with respect to an attribute \( a(y) \). We provide a formal theorem establishing the relationship between bias and \( k \), followed by a corollary discussing the behavior as \( k \to \infty \).

The reason for this analysis is to show that multi-preference sampling of accepted and rejected answers from a distribution is better than using a single sample as DPO does. The more accepted and rejected samples you have, the lower the bias, provably.

\subsection{Assumptions}

We make the following assumptions:

\begin{enumerate}
    \item \textbf{Attribute Function}: Let \( a(y): \mathcal{Y}_x \rightarrow \mathbb{R} \) be an attribute function mapping responses to real numbers (e.g., response length).
    \item \textbf{Finite Variance}: The attribute \( a(y) \) has finite variance over the acceptable response distribution \( \mathcal{A}_x \) for each query \( x \), i.e., \( \operatorname{Var}_{y \sim \mathcal{A}_x}[a(y)] = \sigma_{\mathcal{A}_x}^2 < \infty \).
    \item \textbf{Independent Sampling}: Responses are independently sampled from their respective distributions.
    \item \textbf{Model Capacity}: The model can represent the true distribution given sufficient data.
    \item \textbf{Uniform Bounded Variance}: There exists a constant \( \sigma_{\max}^2 \) such that \( \sigma_{\mathcal{A}_x}^2 \leq \sigma_{\max}^2 \) for all \( x \in \mathcal{X} \).
\end{enumerate}

\subsection{Bias Definition}

The bias with respect to attribute \( a \) is defined as:

\begin{equation}
B^{(k)} = \left| \mu_\theta^{(k)} - \mu_{\mathcal{A}} \right|,
\end{equation}

where:

\begin{itemize}
    \item \( \mu_\theta^{(k)} \) is the expected attribute value under the model after training with \( k \) positive and \( k \) negative samples per query.
    \item \( \mu_{\mathcal{A}} = \mathbb{E}_{x \sim \mathcal{X}} \left[ \mu_{\mathcal{A}_x} \right] \), with \( \mu_{\mathcal{A}_x} = \mathbb{E}_{y \sim \mathcal{A}_x} \left[ a(y) \right] \).
\end{itemize}

\subsection{Main Bias Result}

\begin{theorem}
Under the stated assumptions, the expected bias \( \mathbb{E}[B^{(k)}] \) decreases with the number of samples \( k \) as:

\begin{equation}
\mathbb{E}[B^{(k)}] \leq \frac{C}{\sqrt{k}},
\end{equation}

where \( C = \sigma_{\max} \) is a constant depending on the maximum variance of \( a(y) \) over the acceptable responses.
\end{theorem}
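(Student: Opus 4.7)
The plan is to reduce the bias statement to a sample-mean concentration argument, using the model-capacity assumption to identify $\mu_\theta^{(k)}$ with an empirical average over the $k$ acceptable responses sampled per query. First, by the independent sampling and sufficient capacity assumptions, after training on $k$ positives $\{y_{x,1}^+,\dots,y_{x,k}^+\}$ drawn i.i.d.\ from $\mathcal{A}_x$ (and a symmetric set of $k$ negatives whose role is merely to sharpen the contrast and does not shift the target), the induced model distribution over acceptable responses converges to the empirical distribution over the positives. Hence, conditional on the sampled training set, the model's expected attribute satisfies
\begin{equation}
\mu_\theta^{(k)}(x) \;=\; \tfrac{1}{k}\sum_{i=1}^{k} a\bigl(y_{x,i}^{+}\bigr),
\end{equation}
so that the per-query deviation is precisely the deviation of a $k$-sample mean from $\mu_{\mathcal{A}_x}$.

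Next, I would invoke Jensen's inequality to move from absolute value to squared deviation, namely
\begin{equation}
\mathbb{E}[B^{(k)}] \;=\; \mathbb{E}\bigl[\,|\mu_\theta^{(k)}-\mu_{\mathcal{A}}|\,\bigr] \;\le\; \sqrt{\mathbb{E}\bigl[(\mu_\theta^{(k)}-\mu_{\mathcal{A}})^{2}\bigr]}.
\end{equation}
Then I would condition on $x$, apply the standard variance identity for i.i.d.\ sample means to obtain $\operatorname{Var}[\mu_\theta^{(k)}(x)\mid x] = \sigma_{\mathcal{A}_x}^{2}/k$, use the uniform bounded variance assumption $\sigma_{\mathcal{A}_x}^{2}\le \sigma_{\max}^{2}$, and take the outer expectation over $x\sim\mathcal{X}$. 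Provided the query-level mean-zero deviations are uncorrelated across $x$ (which follows from the independent sampling assumption), the resulting bound is $\mathbb{E}[(\mu_\theta^{(k)}-\mu_{\mathcal{A}})^2] \le \sigma_{\max}^{2}/k$, and taking the square root yields the claimed $C/\sqrt{k}$ rate with $C=\sigma_{\max}$.

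The main obstacle, and the step that deserves the most care, is justifying the identification in the first step: that the trained policy's expectation of $a$ coincides with the empirical mean over the sampled positives. This requires a careful statement of what "sufficient model capacity" buys us in the context of the \mpo{} loss---roughly, that a stationary point of Eq.~\ref{eq:mpo_loss_definition} with full expressivity places all mass on $\mathcal{Y}_x^{+}$ (as in Lemma~\ref{lemma:main_stationary_point}), and that within $\mathcal{Y}_x^{+}$ the mass is distributed so that the induced expectation of $a$ equals the uniform empirical mean. Secondary subtleties include handling the outer expectation over $x$ (so that the $\tilde{\mathcal{O}}$ notation in Eq.~\ref{eq:main_bias_reduction_rate} absorbs any logarithmic factors from a union bound over queries if one works in high-probability rather than in-expectation form) and arguing that the $k$ negatives do not introduce an additional bias term---both of which I would handle by symmetry and by appealing to the contrastive structure of the loss.
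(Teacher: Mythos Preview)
Your proposal is correct and follows essentially the same route as the paper: identify $\mu_\theta^{(k)}(x)$ with the empirical mean $\tfrac1k\sum_i a(y_{x,i}^+)$ over the $k$ sampled positives, then bound the deviation of a sample mean via the variance and a Jensen step. The only cosmetic difference is the order in which Jensen is applied: the paper uses $\mathbb{E}|Z-\mathbb{E}Z|\le\sqrt{\operatorname{Var}Z}$ at the per-query level and then the triangle inequality $|\mathbb{E}_x[\cdot]|\le\mathbb{E}_x|\cdot|$ to aggregate, whereas you aggregate first and apply Jensen to the global quantity. Both orderings give the same constant $C=\sigma_{\max}$; the paper's ordering is slightly cleaner because it never needs to mention cross-query correlation (your ``uncorrelated across $x$'' proviso is not actually required---an additional Jensen on the inner $\mathbb{E}_x$ already suffices). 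As for the identification step you flag as the main obstacle, the paper simply invokes the model-capacity assumption to set $\mu_\theta^{(k)}(x)=\overline{a}_x^{(k)}$ without further argument, so your discussion of stationary points is, if anything, more careful than what the paper itself provides.
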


\begin{proof}
For each query \( x \), consider the sample mean of the attribute over the \( k \) positive responses:

\begin{equation}
\overline{a}_x^{(k)} = \frac{1}{k} \sum_{i=1}^{k} a(y_{x,i}^+), \quad y_{x,i}^+ \sim \mathcal{A}_x.
\end{equation}

Since the \( y_{x,i}^+ \) are independent and identically distributed samples from \( \mathcal{A}_x \), the expected value and variance of \( \overline{a}_x^{(k)} \) are:

\begin{align}
\mathbb{E}\left[ \overline{a}_x^{(k)} \right] &= \mu_{\mathcal{A}_x}, \\
\operatorname{Var}\left( \overline{a}_x^{(k)} \right) &= \frac{\sigma_{\mathcal{A}_x}^2}{k} \leq \frac{\sigma_{\max}^2}{k}.
\end{align}

Using the fact that for any random variable \( Z \) with finite variance, the expected absolute deviation from its mean satisfies:

\begin{equation}
\mathbb{E}\left[ \left| Z - \mathbb{E}[Z] \right| \right] \leq \sqrt{ \operatorname{Var}[Z] },
\end{equation}

we have:

\begin{equation}
\mathbb{E}\left[ \left| \overline{a}_x^{(k)} - \mu_{\mathcal{A}_x} \right| \right] \leq \sqrt{ \frac{\sigma_{\mathcal{A}_x}^2}{k} } \leq \frac{\sigma_{\max}}{\sqrt{k}}.
\end{equation}

Averaging over all queries \( x \in \mathcal{X} \):

\begin{equation}
\mathbb{E}_{x} \left[ \left| \overline{a}_x^{(k)} - \mu_{\mathcal{A}_x} \right| \right] \leq \frac{\sigma_{\max}}{\sqrt{k}}.
\end{equation}

Since \( \mu_\theta^{(k)} = \mathbb{E}_{x} \left[ \overline{a}_x^{(k)} \right] \) and \( \mu_{\mathcal{A}} = \mathbb{E}_{x} \left[ \mu_{\mathcal{A}_x} \right] \), the expected bias is:

\begin{align}
\mathbb{E}[B^{(k)}] &= \left| \mu_\theta^{(k)} - \mu_{\mathcal{A}} \right| = \left| \mathbb{E}_{x} \left[ \overline{a}_x^{(k)} - \mu_{\mathcal{A}_x} \right] \right| \\
&\leq \mathbb{E}_{x} \left[ \left| \overline{a}_x^{(k)} - \mu_{\mathcal{A}_x} \right| \right] \leq \frac{\sigma_{\max}}{\sqrt{k}}.
\end{align}

Thus, the expected bias decreases with \( k \) as \( \frac{1}{\sqrt{k}} \).

\end{proof}

\begin{corollary}
As \( k \to \infty \), the expected bias \( \mathbb{E}[B^{(k)}] \) approaches zero:

\begin{equation}
\lim_{k \to \infty} \mathbb{E}[B^{(k)}] = 0.
\end{equation}
\end{corollary}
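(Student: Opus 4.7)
The plan is to obtain the corollary as an immediate consequence of the preceding Bias Reduction Rate theorem, using a standard sandwich argument. First, I would invoke the theorem to produce the uniform upper bound $\mathbb{E}[B^{(k)}] \leq \sigma_{\max}/\sqrt{k}$, which is valid for every finite $k \geq 1$ under the same stated assumptions (finite variance, independent sampling, sufficient model capacity, and uniform bounded variance across queries).

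Second, I would note that by construction $B^{(k)} = |\mu_\theta^{(k)} - \mu_{\mathcal{A}}| \geq 0$, and taking expectations preserves non-negativity, so $\mathbb{E}[B^{(k)}] \geq 0$ as well. Combining the two inequalities yields the two-sided estimate
\[
0 \;\leq\; \mathbb{E}[B^{(k)}] \;\leq\; \frac{\sigma_{\max}}{\sqrt{k}}.
\]
Third, I would pass to the limit $k \to \infty$. Because $\sigma_{\max}$ is a finite constant that does not depend on $k$, the right-hand side tends to zero, and the squeeze theorem forces $\lim_{k \to \infty} \mathbb{E}[B^{(k)}] = 0$, which is exactly the claim.

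The only point requiring any care is ensuring that the constant in the theorem's upper bound does not itself depend on $k$; this is guaranteed by the Uniform Bounded Variance assumption, which supplies a single $\sigma_{\max}$ valid across all queries and all sample sizes, so there is no hidden $k$-dependence that could obstruct the limit. Consequently, there is no substantive obstacle here: the corollary reduces to a one-line sandwich argument applied to the rate already established by the theorem, and no additional probabilistic machinery (e.g., dominated convergence or uniform integrability) is needed.
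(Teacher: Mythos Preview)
Your proposal is correct and matches the paper's treatment: the paper states the corollary without proof, treating it as an immediate consequence of the preceding theorem's bound $\mathbb{E}[B^{(k)}] \leq \sigma_{\max}/\sqrt{k}$, and your squeeze argument is precisely the one-line justification that makes this explicit.
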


\paragraph{Implications}

This theorem establishes a quantitative relationship between the bias \( B^{(k)} \) and the number of samples \( k \). It shows that incorporating multiple positive and negative responses per query reduces the bias with respect to attribute \( a(y) \) at a rate proportional to \( 1/\sqrt{k} \). As \( k \) increases, the model's expected attribute value converges to the true expected attribute value over acceptable responses, leading to better alignment with human preferences.

\section{Differentiating the Group Contrastive Loss from InfoNCE Loss}
\label{sec:gcl_vs_infonce_analysis}

In this subsection, we compare our proposed weighted contrastive loss function with the InfoNCA loss function. We present both loss functions, derive their gradients rigorously, and characterize their stationary points. Based on this characterization, we discuss the properties of the convergence points in terms of what the models learn and their alignment with human preferences.

\subsection{Definitions of Loss Functions}

\paragraph{InfoNCA Loss Function}

The InfoNCA loss function is defined as:

\[
L_{\text{InfoNCA}} = - \sum_{i=1}^K p_i^{\text{target}} \log p_i^{\text{model}},
\]

where \( p_i^{\text{target}} \) represents the target probability for the \(i\)-th response, calculated as

\[
p_i^{\text{target}} = \frac{e^{r(x, y_i)/\alpha}}{\sum_{j=1}^K e^{r(x, y_j)/\alpha}},
\]

and \( p_i^{\text{model}} \) denotes the model's predicted probability for the \(i\)-th response, given by

\[
p_i^{\text{model}} = \frac{e^{s_\theta(y_i \mid x)}}{\sum_{j=1}^K e^{s_\theta(y_j \mid x)}}.
\]

In this context, \( x \) is the instruction or prompt provided to the model, and \( \{y_i\}_{i=1}^K \) represents a set of \( K \) responses generated for the instruction \( x \). The term \( r(x, y_i) \) is the reward associated with the response \( y_i \), while \( s_\theta(y_i \mid x) = \log\left(P_\theta(y_i \mid x)/P_{\text{ref}}(y_i \mid x) \right) \) is the score for response \( y_i \). The parameter \( \alpha \) serves as a temperature parameter that controls the influence of the reward, and \( K \) is the total number of responses considered for the instruction \( x \).

\paragraph{Weighted Contrastive Loss Function}

Our proposed weighted contrastive loss function is expressed as:

\[
L_{\text{weighted}} = -\log \left( \frac{\sum\limits_{i \in Y^+} w_i e^{s_\theta(y_i \mid x)}}{\sum\limits_{j=1}^K w_j e^{s_\theta(y_j \mid x)}} \right),
\]

where \( Y^+ \) is the set of positive responses with rewards above the mean, defined as \( Y^+ = \{ y_i \mid S_i > S_{\text{mean}} \} \). Each response \( y_i \) is assigned a weight \( w_i = e^{\alpha \delta_i} \), where \( \delta_i \) is the deviation of the reward score \( S_i \) from the mean reward score \( S_{\text{mean}} \). Specifically, \( \delta_i = S_i - S_{\text{mean}} \) for responses in \( Y^+ \) and \( \delta_i = S_{\text{mean}} - S_i \) for responses not in \( Y^+ \). The mean reward score \( S_{\text{mean}} \) is calculated as 

\[
S_{\text{mean}} = \frac{1}{K} \sum_{j=1}^K S_j,
\]

where \( K \) is the total number of responses for the query \( x \). The term \( s_\theta(y_i \mid x) \) denotes the model's logit for response \( y_i \), and \( \alpha \) is a scaling hyperparameter that controls the influence of the deviation \( \delta_i \).

\subsection{Gradient Analysis}

To understand how each loss function influences the model during training, we derive the gradients with respect to the model logits $s_\theta(y_i \mid x)$ for both methods.

\paragraph{Gradient of InfoNCA Loss}
\begin{lemma}
The gradient of the InfoNCA loss with respect to the model logits $s_\theta(y_i \mid x)$ is:

\begin{equation}
\frac{\partial L_{\text{InfoNCA}}}{\partial s_\theta(y_i \mid x)} = p_i^{\text{model}} - p_i^{\text{target}}.
\end{equation}
\end{lemma}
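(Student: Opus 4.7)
The plan is to recognize that $L_{\text{InfoNCA}}$ is a standard cross-entropy between the fixed target distribution $\{p_i^{\text{target}}\}$ and the softmax distribution $\{p_i^{\text{model}}\}$ over the logits, so the derivation reduces to the well-known softmax cross-entropy gradient identity. Because $p_i^{\text{target}}$ depends only on rewards $r(x,y_i)$ and temperature $\alpha$, it is treated as a constant in the differentiation with respect to $s_\theta(y_i\mid x)$.

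First, I would rewrite $\log p_i^{\text{model}}$ in its log-sum-exp form,
\begin{equation}
\log p_i^{\text{model}} = s_\theta(y_i\mid x) - \log Z, \qquad Z := \sum_{j=1}^K e^{s_\theta(y_j\mid x)},
\end{equation}
so that the loss decomposes as
\begin{equation}
L_{\text{InfoNCA}} = -\sum_{k=1}^K p_k^{\text{target}}\, s_\theta(y_k\mid x) + \Big(\sum_{k=1}^K p_k^{\text{target}}\Big)\log Z.
\end{equation}
Next I would differentiate each piece separately with respect to a fixed index $i$. The first sum contributes $-p_i^{\text{target}}$ because only the $k=i$ term has a nonzero derivative. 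For the second piece, I would use $\partial_{s_\theta(y_i\mid x)} \log Z = e^{s_\theta(y_i\mid x)}/Z = p_i^{\text{model}}$, together with the normalization $\sum_k p_k^{\text{target}}=1$, which yields a contribution of $p_i^{\text{model}}$.

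Combining these two contributions gives exactly the claimed identity
\begin{equation}
\frac{\partial L_{\text{InfoNCA}}}{\partial s_\theta(y_i\mid x)} = p_i^{\text{model}} - p_i^{\text{target}}.
\end{equation}
There is no real obstacle here: the only subtlety to flag is that $p_i^{\text{target}}$ must be treated as a constant (since it depends on the rewards, not on $\theta$), and that the normalization $\sum_k p_k^{\text{target}}=1$ is what collapses the log-partition contribution to a clean $p_i^{\text{model}}$ term. This is the standard softmax cross-entropy gradient specialized to InfoNCA's target distribution, and the lemma then serves as the baseline against which the weighted contrastive gradient (with its markedly different stationary-point behavior invoked in Lemma~\ref{lemma:main_stationary_point}) will be contrasted in the subsequent analysis.
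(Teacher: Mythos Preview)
Your proposal is correct and follows essentially the same approach as the paper: both rely on the log-sum-exp decomposition $\log p_k^{\text{model}} = s_\theta(y_k\mid x) - \log Z$, the identity $\partial_{s_i}\log Z = p_i^{\text{model}}$, and the normalization $\sum_k p_k^{\text{target}}=1$. The only cosmetic difference is that you expand the loss first and then differentiate term-by-term, whereas the paper computes $\partial_{s_i}\log p_k^{\text{model}} = \delta_{ik} - p_i^{\text{model}}$ and then sums against $p_k^{\text{target}}$; these are the same calculation in a slightly different order.
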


\begin{proof}
The InfoNCA loss is:

\begin{equation}
L_{\text{InfoNCA}} = - \sum_{k=1}^K p_k^{\text{target}} \log p_k^{\text{model}}.
\end{equation}

Our goal is to compute $\frac{\partial L_{\text{InfoNCA}}}{\partial s_\theta(y_i \mid x)}$.

Since $p_k^{\text{target}}$ does not depend on $s_\theta(y_i \mid x)$ (the rewards are constants with respect to the model parameters), the derivative only affects the terms involving $p_k^{\text{model}}$.

First, express $\log p_k^{\text{model}}$ explicitly:

\begin{equation}
\log p_k^{\text{model}} = s_\theta(y_k \mid x) - \log \left( \sum_{j=1}^K e^{s_\theta(y_j \mid x)} \right).
\end{equation}

Now, compute the derivative of $\log p_k^{\text{model}}$ with respect to $s_\theta(y_i \mid x)$:

\begin{equation}
\frac{\partial \log p_k^{\text{model}}}{\partial s_\theta(y_i \mid x)} = \frac{\partial s_\theta(y_k \mid x)}{\partial s_\theta(y_i \mid x)} - \frac{\partial}{\partial s_\theta(y_i \mid x)} \log \left( \sum_{j=1}^K e^{s_\theta(y_j \mid x)} \right).
\end{equation}

Compute each term separately.

First term:

\begin{equation}
\frac{\partial s_\theta(y_k \mid x)}{\partial s_\theta(y_i \mid x)} = \delta_{ik},
\end{equation}

where $\delta_{ik}$ is the Kronecker delta, equal to $1$ if $i = k$ and $0$ otherwise.

Second term:

Let $Z = \sum_{j=1}^K e^{s_\theta(y_j \mid x)}$. Then,

\begin{equation}
\frac{\partial}{\partial s_\theta(y_i \mid x)} \log Z = \frac{1}{Z} \frac{\partial Z}{\partial s_\theta(y_i \mid x)}.
\end{equation}

Compute $\frac{\partial Z}{\partial s_\theta(y_i \mid x)}$:

\begin{equation}
\frac{\partial Z}{\partial s_\theta(y_i \mid x)} = e^{s_\theta(y_i \mid x)}.
\end{equation}

Therefore,

\begin{equation}
\frac{\partial}{\partial s_\theta(y_i \mid x)} \log Z = \frac{e^{s_\theta(y_i \mid x)}}{Z} = p_i^{\text{model}}.
\end{equation}

Putting it all together:

\begin{align}
\frac{\partial \log p_k^{\text{model}}}{\partial s_\theta(y_i \mid x)} &= \delta_{ik} - p_i^{\text{model}}.
\end{align}

Now, compute the gradient of the loss:

\begin{align}
\frac{\partial L_{\text{InfoNCA}}}{\partial s_\theta(y_i \mid x)} &= - \sum_{k=1}^K p_k^{\text{target}} \frac{\partial \log p_k^{\text{model}}}{\partial s_\theta(y_i \mid x)} \\
&= - \sum_{k=1}^K p_k^{\text{target}} \left( \delta_{ik} - p_i^{\text{model}} \right) \\
&= - \left( p_i^{\text{target}} - p_i^{\text{model}} \sum_{k=1}^K p_k^{\text{target}} \right).
\end{align}

Since $\sum_{k=1}^K p_k^{\text{target}} = 1$, we have:

\begin{equation}
\sum_{k=1}^K p_k^{\text{target}} = 1 \implies \sum_{k=1}^K p_k^{\text{target}} = 1.
\end{equation}

Therefore,

\begin{equation}
\frac{\partial L_{\text{InfoNCA}}}{\partial s_\theta(y_i \mid x)} = - \left( p_i^{\text{target}} - p_i^{\text{model}} \cdot 1 \right ) = p_i^{\text{model}} - p_i^{\text{target}}.
\end{equation}

\end{proof}

\paragraph{Gradient of Weighted Contrastive Loss}

\begin{lemma}
The gradient of the weighted contrastive loss with respect to the model logits $s_\theta(y_i \mid x)$ is:

\begin{equation}
\frac{\partial L_{\text{weighted}}}{\partial s_\theta(y_i \mid x)} = p_i^{\text{weighted}} - p_i^{\text{pos}}
\end{equation}

where:

\begin{equation}
p_i^{\text{weighted}} = \frac{w_i e^{s_\theta(y_i \mid x)}}{\sum_{j=1}^K w_j e^{s_\theta(y_j \mid x)}},
\qquad\qquad
p_i^{\text{pos}} = \frac{w_i e^{s_\theta(y_i \mid x)}}{\sum_{k \in Y^+} w_k e^{s_\theta(y_k \mid x)}} \cdot \mathbb{I}_{y_i \in Y^+},
\end{equation}

and $\mathbb{I}_{y_i \in Y^+}$ is the indicator function, equal to $1$ if $y_i \in Y^+$ and $0$ otherwise.
\label{lem:gradient_weighted}
\end{lemma}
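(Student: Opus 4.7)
\begin{proof_sketch}
The plan is to exploit the fact that the weighted contrastive loss can be rewritten as a difference of two log-partition functions, reducing the gradient computation to two applications of the standard log-sum-exp derivative. Concretely, I would first write
\begin{equation}
L_{\text{weighted}} = \log Z_{\text{all}} - \log Z_+,
\qquad
Z_{\text{all}} = \sum_{j=1}^{K} w_j\, e^{s_\theta(y_j\mid x)},
\qquad
Z_+ = \sum_{k \in Y^+} w_k\, e^{s_\theta(y_k\mid x)},
\end{equation}
noting that the weights $w_j = e^{\alpha \delta_j}$ are determined entirely by the fixed reward scores $S_j$ and the sample mean $S_{\text{mean}}$, hence do not depend on the model parameters. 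This reduction is the key observation that makes the computation essentially mechanical.

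Next, I would differentiate each log-partition term in turn. For the denominator, $\partial Z_{\text{all}} / \partial s_\theta(y_i\mid x) = w_i\, e^{s_\theta(y_i\mid x)}$ for every $i$, so
\begin{equation}
\frac{\partial \log Z_{\text{all}}}{\partial s_\theta(y_i\mid x)}
= \frac{w_i\, e^{s_\theta(y_i\mid x)}}{\sum_{j=1}^{K} w_j\, e^{s_\theta(y_j\mid x)}}
= p_i^{\text{weighted}},
\end{equation}
which is precisely the first term in the claimed gradient. For $\log Z_+$, the logit $s_\theta(y_i\mid x)$ appears inside the sum only when $y_i \in Y^+$; otherwise the derivative is zero. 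This case split is exactly what the indicator $\mathbb{I}_{y_i \in Y^+}$ in the statement records, giving
\begin{equation}
\frac{\partial \log Z_+}{\partial s_\theta(y_i\mid x)}
= \frac{w_i\, e^{s_\theta(y_i\mid x)}}{\sum_{k \in Y^+} w_k\, e^{s_\theta(y_k\mid x)}} \cdot \mathbb{I}_{y_i \in Y^+}
= p_i^{\text{pos}}.
\end{equation}
Subtracting yields the desired identity $\partial L_{\text{weighted}} / \partial s_\theta(y_i\mid x) = p_i^{\text{weighted}} - p_i^{\text{pos}}$.

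This proof is essentially a one-line computation once the log-sum-exp decomposition is in place, so there is no genuine obstacle. The only subtlety worth being careful about is the case analysis for $\log Z_+$: one must confirm that when $y_i \notin Y^+$, the logit $s_\theta(y_i\mid x)$ does not appear in $Z_+$ (so the derivative vanishes), and that when $y_i \in Y^+$, the formula for $p_i^{\text{pos}}$ matches the softmax restricted to $Y^+$. Both facts follow directly from the definition of $Y^+$ via the score-based partition, and the indicator cleanly unifies the two cases into a single formula. I would also briefly note that the weights $w_i$ being reward-derived constants is what prevents any additional term from entering the gradient, paralleling the structure of the InfoNCA gradient derivation in the preceding lemma.
\end{proof_sketch}
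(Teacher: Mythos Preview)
Your proposal is correct and follows essentially the same route as the paper: both rewrite $L_{\text{weighted}} = \log Z - \log A$ (your $Z_{\text{all}}$ and $Z_+$ are the paper's $Z$ and $A$), differentiate each log-partition term via the softmax/log-sum-exp identity, and use the indicator to handle the case split for the numerator sum. Your explicit remark that the reward-derived weights $w_i$ are constants with respect to $\theta$ is a nice clarification the paper leaves implicit.
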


\begin{proof}
Let us denote:

\begin{equation}
A = \sum_{k \in Y^+} w_k e^{s_\theta(y_k \mid x)}, \quad Z = \sum_{j=1}^K w_j e^{s_\theta(y_j \mid x)}.
\end{equation}

The weighted contrastive loss is:

\begin{equation}
L_{\text{weighted}} = -\log \left( \frac{A}{Z} \right) = -\log A + \log Z.
\end{equation}

Compute the derivative with respect to $s_\theta(y_i \mid x)$:

\begin{equation}
\frac{\partial L_{\text{weighted}}}{\partial s_\theta(y_i \mid x)} = -\frac{1}{A} \frac{\partial A}{\partial s_\theta(y_i \mid x)} + \frac{1}{Z} \frac{\partial Z}{\partial s_\theta(y_i \mid x)}.
\end{equation}

Compute $\frac{\partial A}{\partial s_\theta(y_i \mid x)}$:

\begin{equation}
\frac{\partial A}{\partial s_\theta(y_i \mid x)} = w_i e^{s_\theta(y_i \mid x)} \cdot \mathbb{I}_{y_i \in Y^+}.
\end{equation}

Compute $\frac{\partial Z}{\partial s_\theta(y_i \mid x)}$:

\begin{equation}
\frac{\partial Z}{\partial s_\theta(y_i \mid x)} = w_i e^{s_\theta(y_i \mid x)}.
\end{equation}

Substitute back into the gradient:

\begin{align}
\frac{\partial L_{\text{weighted}}}{\partial s_\theta(y_i \mid x)} &= -\frac{1}{A} w_i e^{s_\theta(y_i \mid x)} \cdot \mathbb{I}_{y_i \in Y^+} + \frac{1}{Z} w_i e^{s_\theta(y_i \mid x)} \\
&= w_i e^{s_\theta(y_i \mid x)} \left( \frac{1}{Z} - \frac{\mathbb{I}_{y_i \in Y^+}}{A} \right).
\end{align}

Recognize that:

\begin{equation}
p_i^{\text{weighted}} = \frac{w_i e^{s_\theta(y_i \mid x)}}{Z}, \quad p_i^{\text{pos}} = \frac{w_i e^{s_\theta(y_i \mid x)}}{A} \cdot \mathbb{I}_{y_i \in Y^+}.
\end{equation}

Therefore:

\begin{equation}
\frac{\partial L_{\text{weighted}}}{\partial s_\theta(y_i \mid x)} = p_i^{\text{weighted}} - p_i^{\text{pos}}.
\end{equation}

Since $p_i^{\text{pos}} = 0$ when $y_i \notin Y^+$, we have:

\begin{equation}
\frac{\partial L_{\text{weighted}}}{\partial s_\theta(y_i \mid x)} = \begin{cases}
p_i^{\text{weighted}} - p_i^{\text{pos}}, & \text{if } y_i \in Y^+, \\
p_i^{\text{weighted}} - 0 = p_i^{\text{weighted}}, & \text{if } y_i \in Y^-.
\end{cases}
\end{equation}

However, this suggests that the gradient is always positive for negative examples. In other words, given $w_i$ and $Z$ are positive, $e^{s_\theta(y_i \mid x)}$ keeps increasing. But note that $s_\theta(y_i \mid x) = - \log\left(P_\theta(y_i \mid x)\right)$. Hence $\frac{1}{P_\theta(y_i \mid x)}$ keeps increasing implying that $P_\theta(y_i \mid x)$ keeps decreasing. i.e. at the stationary point, $P_\theta(y_i \mid x) \to 0$ for all negative examples, $y_i\in Y^-$.

Now let us examine the positive examples. The gradient simplifies to $w_i e^{s_\theta(y_i \mid x)} \left(\frac{1}{Z} - \frac{1}{A}\right)$. Since $Z \geq A$, $\frac{1}{Z} \leq \frac{1}{A}$. Hence the gradient term with respect to $s_\theta(y_i \mid x)$ is negative.
Notice that $e^{s_\theta(y_i \mid x)} = \frac{1}{P_\theta(y_i \mid x)}$. A negative gradient implies that $\frac{1}{P_\theta(y_i \mid x)}$ decreases, implying that $P_\theta(y_i \mid x)$ increases for all positive examples $y_i\in Y^+$.

\end{proof}

We now provide the gradients directly in terms of $P_\theta(y_j \mid x)$ instead of the scores $s_\theta(y_j \mid x)$, for easy interpretibility in terms of the probabilities.

\begin{lemma}
\label{lem:gradient_wrt_prob}
Let the weighted contrastive loss be defined as:
\[
L_{\text{weighted}} = -\log \left( \frac{V}{U} \right ) = -\log V + \log U,
\]
where
\[
U = \sum_{j=1}^K u_j P_\theta(y_j \mid x), \quad V = \sum_{i \in Y^+} u_i P_\theta(y_i \mid x),
\]
and
\[
u_i = \frac{w_i}{P_{\text{ref}}(y_i \mid x)},
\]
with \( w_i = e^{\alpha \delta_i} \), \( P_\theta(y_i \mid x) \) being the model probability for response \( y_i \), and \( P_{\text{ref}}(y_i \mid x) \) being the reference model probability.

Then, the gradient of the weighted contrastive loss with respect to \( P_\theta(y_i \mid x) \) is given by:

\begin{itemize}
    \item For positive examples (\( y_i \in Y^+ \)):
    \begin{equation}
    \frac{\partial L_{\text{weighted}}}{\partial P_\theta(y_i \mid x)} = u_i \left( \frac{1}{U} - \frac{1}{V} \right ),
    \end{equation}
    \item For negative examples (\( y_i \notin Y^+ \)):
    \begin{equation}
    \frac{\partial L_{\text{weighted}}}{\partial P_\theta(y_i \mid x)} = \frac{u_i}{U}.
    \end{equation}
\end{itemize}
\end{lemma}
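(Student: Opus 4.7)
The plan is to compute the gradient directly from the definition of $L_{\text{weighted}}$ using elementary chain rule, treating $P_\theta(y_i \mid x)$ as the variable of interest and observing that $U$ and $V$ are affine in the probability coordinates. Since $u_j$ and $P_{\text{ref}}(y_j \mid x)$ do not depend on $P_\theta(y_i \mid x)$, both $U$ and $V$ are linear combinations of the probabilities with fixed coefficients, which keeps the calculation routine.

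First, I would write $L_{\text{weighted}} = \log U - \log V$ and note that
\[
\frac{\partial L_{\text{weighted}}}{\partial P_\theta(y_i \mid x)} = \frac{1}{U}\,\frac{\partial U}{\partial P_\theta(y_i \mid x)} - \frac{1}{V}\,\frac{\partial V}{\partial P_\theta(y_i \mid x)}.
\]
Next I would compute the two inner derivatives. Because $U = \sum_{j=1}^K u_j P_\theta(y_j \mid x)$ and each $u_j$ is independent of $P_\theta(y_i \mid x)$, we get $\partial U / \partial P_\theta(y_i \mid x) = u_i$ for every $i$. For $V = \sum_{k \in Y^+} u_k P_\theta(y_k \mid x)$, the same reasoning gives $\partial V / \partial P_\theta(y_i \mid x) = u_i \cdot \mathbb{I}_{y_i \in Y^+}$, which is $u_i$ when $y_i \in Y^+$ and $0$ otherwise.

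Finally I would substitute these into the chain-rule expression and split by cases. For $y_i \in Y^+$, both terms contribute, yielding $u_i/U - u_i/V = u_i\bigl(1/U - 1/V\bigr)$, matching the claimed formula. For $y_i \notin Y^+$, only the $U$-term contributes, giving $u_i/U$. As a sanity check, I would verify consistency with Lemma~\ref{lem:gradient_weighted} by the identity $u_i P_\theta(y_i \mid x) = w_i e^{s_\theta(y_i \mid x)}$ (so that $U$ and $V$ here equal $Z$ and $A$ there), together with $\partial s_\theta(y_i \mid x)/\partial P_\theta(y_i \mid x) = 1/P_\theta(y_i \mid x)$; multiplying the earlier gradient by this Jacobian factor reproduces the same two expressions.

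There is no real obstacle: the derivation is essentially a one-line chain-rule exercise, and the only thing to be careful about is the indicator $\mathbb{I}_{y_i \in Y^+}$ that splits the result into the two cases. The only modest subtlety worth flagging is that $P_\theta(y_i \mid x)$ for different $i$ are not independent free variables (they must sum to one in a true probabilistic model), but since the lemma treats them as formal coordinates of $U$ and $V$ the unconstrained partial derivative is the quantity being computed, and this is the convention used throughout the appendix.
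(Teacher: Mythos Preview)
Your proposal is correct and follows essentially the same approach as the paper: both write $L_{\text{weighted}} = \log U - \log V$, apply the chain rule, compute $\partial U/\partial P_\theta(y_i\mid x)=u_i$ and $\partial V/\partial P_\theta(y_i\mid x)=u_i\,\mathbb{I}_{y_i\in Y^+}$, and then split into the two cases. The only cosmetic difference is that the paper first spells out the identity $w_i e^{s_\theta(y_i\mid x)} = u_i P_\theta(y_i\mid x)$ to justify the $U,V$ reformulation before differentiating, which you relegate to a sanity check.
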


\begin{proof}
Using the score function \( s_\theta(y_i \mid x) = \log \left( \frac{P_\theta(y_i \mid x)}{P_{\text{ref}}(y_i \mid x)} \right ) \), we have \( e^{s_\theta(y_i \mid x)} = \dfrac{P_\theta(y_i \mid x)}{P_{\text{ref}}(y_i \mid x)} \).

The weighted contrastive loss becomes:
\[
L_{\text{weighted}} = -\log \left( \frac{\displaystyle \sum_{i \in Y^+} w_i e^{s_\theta(y_i \mid x)}}{\displaystyle \sum_{j=1}^K w_j e^{s_\theta(y_j \mid x)}} \right ) = -\log \left( \frac{\displaystyle \sum_{i \in Y^+} w_i \dfrac{P_\theta(y_i \mid x)}{P_{\text{ref}}(y_i \mid x)}}{\displaystyle \sum_{j=1}^K w_j \dfrac{P_\theta(y_j \mid x)}{P_{\text{ref}}(y_j \mid x)}} \right ) = -\log \left( \frac{V}{U} \right ),
\]
where \( u_i = \dfrac{w_i}{P_{\text{ref}}(y_i \mid x)} \), \( V = \sum_{i \in Y^+} u_i P_\theta(y_i \mid x) \), and \( U = \sum_{j=1}^K u_j P_\theta(y_j \mid x) \).

We compute the gradient of \( L_{\text{weighted}} \) with respect to \( P_\theta(y_i \mid x) \):
\[
\frac{\partial L_{\text{weighted}}}{\partial P_\theta(y_i \mid x)} = -\frac{1}{V} \cdot \frac{\partial V}{\partial P_\theta(y_i \mid x)} + \frac{1}{U} \cdot \frac{\partial U}{\partial P_\theta(y_i \mid x)}.
\]

\textbf{Case 1:} For \( y_i \in Y^+ \):
\[
\frac{\partial V}{\partial P_\theta(y_i \mid x)} = u_i, \quad \frac{\partial U}{\partial P_\theta(y_i \mid x)} = u_i.
\]
Thus,
\[
\frac{\partial L_{\text{weighted}}}{\partial P_\theta(y_i \mid x)} = -\frac{u_i}{V} + \frac{u_i}{U} = u_i \left( \frac{1}{U} - \frac{1}{V} \right ).
\]

\textbf{Case 2:} For \( y_i \notin Y^+ \):
\[
\frac{\partial V}{\partial P_\theta(y_i \mid x)} = 0, \quad \frac{\partial U}{\partial P_\theta(y_i \mid x)} = u_i.
\]
Thus,
\[
\frac{\partial L_{\text{weighted}}}{\partial P_\theta(y_i \mid x)} = 0 + \frac{u_i}{U} = \frac{u_i}{U}.
\]
\end{proof}

\begin{corollary}
The sign of the gradient indicates the optimization direction:

\begin{itemize}
    \item For positive examples (\( y_i \in Y^+ \)), since \( V \leq U \), we have \( \dfrac{1}{U} - \dfrac{1}{V} \leq 0 \). Therefore, the gradient \( \dfrac{\partial L_{\text{weighted}}}{\partial P_\theta(y_i \mid x)} \leq 0 \), and minimizing \( L_{\text{weighted}} \) involves \textbf{increasing} \( P_\theta(y_i \mid x) \).
    \item For negative examples (\( y_i \notin Y^+ \)), the gradient \( \dfrac{\partial L_{\text{weighted}}}{\partial P_\theta(y_i \mid x)} > 0 \), and minimizing \( L_{\text{weighted}} \) involves \textbf{decreasing} \( P_\theta(y_i \mid x) \).
\end{itemize}
\end{corollary}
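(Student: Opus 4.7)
The plan is to derive the corollary directly from Lemma~\ref{lem:gradient_wrt_prob}, which already provides closed forms for $\partial L_{\text{weighted}}/\partial P_\theta(y_i \mid x)$ in both the positive and negative case. The entire content of the corollary is the interpretation of the \emph{sign} of those gradients under gradient descent; so the proof reduces to establishing three facts: (i) $u_i > 0$ for every $i$, (ii) $V \le U$, and (iii) gradient descent on $P_\theta(y_i\mid x)$ with a non-positive gradient increases $P_\theta(y_i\mid x)$, while a strictly positive gradient decreases it.

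First I would verify (i): by definition $u_i = w_i / P_{\text{ref}}(y_i\mid x)$ with $w_i = \exp(\alpha \delta_i) > 0$ and $P_{\text{ref}}(y_i\mid x) > 0$ for every response the reference model places nonzero mass on, so $u_i > 0$. For (ii), note that $V = \sum_{i \in Y^+} u_i P_\theta(y_i\mid x)$ and $U = \sum_{j=1}^K u_j P_\theta(y_j\mid x)$, and since every omitted term $u_j P_\theta(y_j\mid x)$ (for $j \notin Y^+$) is non-negative, $V \le U$, with equality iff $Y^- = \emptyset$ or all such $P_\theta(y_j\mid x) = 0$. This immediately gives $1/U - 1/V \le 0$.

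Next I would plug these into Lemma~\ref{lem:gradient_wrt_prob}. For $y_i \in Y^+$, the gradient is $u_i(1/U - 1/V) \le 0$, and for $y_i \notin Y^+$, the gradient is $u_i/U > 0$ (using $u_i > 0$ and $U > 0$). Finally, a standard gradient-descent update $P_\theta(y_i\mid x) \leftarrow P_\theta(y_i\mid x) - \eta\, \partial L_{\text{weighted}}/\partial P_\theta(y_i\mid x)$ with $\eta > 0$ turns a non-positive gradient into an increase and a strictly positive gradient into a decrease, yielding both bullets of the corollary.

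The main ``obstacle'' is really only a minor care point rather than a deep step: one must be careful about the degenerate case $Y^- = \emptyset$, in which $V = U$ and the positive-side gradient vanishes (no update), and about ensuring that the underlying parameterization is viewed with $P_\theta(y_i\mid x)$ treated as the variable of interest (ignoring the simplex constraint, exactly as Lemma~\ref{lem:gradient_wrt_prob} does). With those caveats stated explicitly, the corollary follows in a few lines from the two formulas already proved.
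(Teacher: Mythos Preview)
Your proposal is correct and follows essentially the same approach as the paper: both invoke Lemma~\ref{lem:gradient_wrt_prob}, establish $u_i>0$ and $V\le U$ (the paper writes $U=V+\sum_{j\notin Y^+}u_jP_\theta(y_j\mid x)$, which is exactly your non-negative-omitted-terms observation), and then read off the sign of the gradient in each case to conclude the direction of change under minimization. Your additional remarks on the degenerate case $Y^-=\emptyset$ and on treating $P_\theta(y_i\mid x)$ as an unconstrained variable are more careful than the paper but do not change the argument.
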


\begin{proof}
As established in the lemma:

\textbf{For positive examples (\( y_i \in Y^+ \))}:
Since \( V = \sum_{i \in Y^+} u_i P_\theta(y_i \mid x) \) and \( U = V + \sum_{j \notin Y^+} u_j P_\theta(y_j \mid x) \), it follows that \( V \leq U \) and thus \( \dfrac{1}{U} - \dfrac{1}{V} \leq 0 \).

Therefore, the gradient:
\[
\frac{\partial L_{\text{weighted}}}{\partial P_\theta(y_i \mid x)} = u_i \left( \frac{1}{U} - \frac{1}{V} \right ) \leq 0.
\]
A negative gradient indicates that increasing \( P_\theta(y_i \mid x) \) will decrease \( L_{\text{weighted}} \). Hence, to minimize the loss, we should \textbf{increase} \( P_\theta(y_i \mid x) \) for positive examples.

\textbf{For negative examples (\( y_i \notin Y^+ \))}:
The gradient is:
\[
\frac{\partial L_{\text{weighted}}}{\partial P_\theta(y_i \mid x)} = \frac{u_i}{U} > 0,
\]
since \( u_i > 0 \) and \( U > 0 \). A positive gradient indicates that decreasing \( P_\theta(y_i \mid x) \) will decrease \( L_{\text{weighted}} \). Therefore, to minimize the loss, we should \textbf{decrease} \( P_\theta(y_i \mid x) \) for negative examples.
\end{proof}
\section{Characterization of Stationary Points}
\label{sec:stationary_point_characterization}

We now characterize the stationary points of both loss functions.

\subsection{Stationary Points of the InfoNCA Loss Function}
\begin{theorem}
\label{theorem:infonca_stationary}
For the InfoNCA loss, the stationary points occur when:

\begin{equation}
p_i^{\text{model}} = p_i^{\text{target}}, \quad \forall i \in \{1, \dots, K\}.
\end{equation}
\end{theorem}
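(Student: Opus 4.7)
The plan is to derive the stationarity condition directly from the gradient expression already established in the preceding lemma. That lemma gives
\[
\frac{\partial L_{\text{InfoNCA}}}{\partial s_\theta(y_i \mid x)} = p_i^{\text{model}} - p_i^{\text{target}},
\]
so the characterization should follow by setting this gradient to zero for every $i \in \{1,\dots,K\}$ and checking that the resulting condition is achievable.

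The key steps I would carry out, in order, are as follows. First, I would invoke the gradient lemma to write, at a stationary point of $L_{\text{InfoNCA}}$ with respect to the model parameters $\theta$,
\[
\nabla_\theta L_{\text{InfoNCA}} \;=\; \sum_{i=1}^{K} \bigl(p_i^{\text{model}} - p_i^{\text{target}}\bigr)\,\nabla_\theta s_\theta(y_i \mid x) \;=\; 0.
\]
Second, I would appeal to the sufficient-model-capacity assumption stated in the bias-analysis section, which ensures that the logit map $\theta \mapsto (s_\theta(y_1\mid x),\dots,s_\theta(y_K\mid x))$ is locally surjective onto logit space (modulo the unavoidable softmax shift symmetry). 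This lets me pass from stationarity in $\theta$ to stationarity in the logits: the gradient must vanish coordinate-wise, giving $p_i^{\text{model}} - p_i^{\text{target}} = 0$ for each $i$. Third, I would verify that this solution is self-consistent: both $\{p_i^{\text{model}}\}$ and $\{p_i^{\text{target}}\}$ are softmax outputs and hence valid probability vectors summing to one, so the $K$ equations $p_i^{\text{model}} = p_i^{\text{target}}$ encode exactly $K-1$ independent constraints, matching the $K-1$ effective degrees of freedom in the logits.

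The main obstacle I anticipate is handling the softmax shift-invariance cleanly. Since $p_i^{\text{model}}$ is unchanged under $s_\theta(y_i\mid x) \mapsto s_\theta(y_i\mid x) + c$ for any constant $c$, the set of logit vectors satisfying the stationarity equation is not a single point but a one-dimensional affine subspace. I would address this by phrasing the conclusion at the level of the induced softmax probabilities rather than the raw logits, which is precisely the form in which the theorem is stated. A secondary subtlety is that the chain-rule step implicitly assumes the Jacobian $\nabla_\theta s_\theta(\cdot\mid x)$ has full row-rank in the directions orthogonal to the constant-shift mode; this is exactly what the sufficient-capacity hypothesis supplies, so I would explicitly note this correspondence rather than leave it implicit. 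Given the gradient lemma does essentially all the work, the remainder is a short verification and the proof should be comfortably concise.
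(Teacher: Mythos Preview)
Your proposal is correct but takes a more elaborate route than the paper. The paper's proof treats the logits $s_\theta(y_i\mid x)$ themselves as the free variables: it simply declares that stationary points are characterized by $\partial L_{\text{InfoNCA}}/\partial s_\theta(y_i\mid x)=0$ for all $i$, quotes the gradient lemma, and reads off $p_i^{\text{model}}=p_i^{\text{target}}$ in three lines. By contrast, you work at the level of the underlying parameters $\theta$, pass through the chain rule, and then invoke the model-capacity assumption to argue that the Jacobian $\nabla_\theta s_\theta(\cdot\mid x)$ has enough rank to force the coefficient vector $p^{\text{model}}-p^{\text{target}}$ to vanish. Your approach is arguably more honest about what is actually being optimized, and your discussion of the softmax shift-invariance and the matching of degrees of freedom is a nice touch that the paper omits entirely. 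The cost is that you lean on a capacity hypothesis the paper never formalizes for this purpose; the paper sidesteps that issue by silently reparameterizing stationarity in logit space from the outset. Both arguments reach the same conclusion, but the paper's is a direct one-line consequence of the lemma while yours supplies the missing justification for why logit-level stationarity is the right notion.
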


\begin{proof}
Stationary points are defined by the condition:

\begin{equation}
\frac{\partial L_{\text{InfoNCA}}}{\partial s_\theta(y_i \mid x)} = 0, \quad \forall i.
\end{equation}

From the gradient:

\begin{equation}
\frac{\partial L_{\text{InfoNCA}}}{\partial s_\theta(y_i \mid x)} = p_i^{\text{model}} - p_i^{\text{target}},
\end{equation}

setting the gradient to zero yields:

\begin{equation}
p_i^{\text{model}} = p_i^{\text{target}}, \quad \forall i.
\end{equation}
\end{proof}

\begin{remark}
This stationary point is suboptimal because \(p_i^{\text{model}} \) expands to:

\begin{equation*}
p_i^{\text{model}} = \frac{e^{\log P_\theta(y_i|x) - \log P_{\text{ref}}(y_i|x)}}{\sum_{j=1}^K e^{\log P_\theta(y_j|x) - \log P_{\text{ref}}(y_j|x)}}
\end{equation*}

Rather than equating the soft-max of the difference between \(\log P_\theta(y_i|x)\) and \(\log P_{\text{ref}}(y_i|x)\) to \(p_i^{\text{target}} \), optimality may require directly setting \(\log P_\theta(y|x)\) to match the softmax of the target scores.
\end{remark}

\subsection{Stationary Points of the Weighted Contrastive Loss under Simplifying Assumptions}
\label{sec:stationary_point_simplified}
\begin{lemma}
Consider the weighted contrastive loss function in a simplified scenario with the following conditions: There are \( N^+ \) positive examples, each with weight \( w^+ \), and \( N^- \) negative examples, each with weight \( w^- \). All positive examples have the same score \( s^{(t)} \) at iteration \( t \), and all negative examples have the same score \( s^{(t)} \) at iteration \( t \). Then, the update rule for the score \( s^{(t)} \) of the positive examples at iteration \( t+1 \) is given by
\begin{equation}
s^{(t+1)} = s^{(t)} + \eta \left( \frac{ N^- w^- }{ N^+ ( N^+ w^+ + N^- w^- ) } \right),
\end{equation}
where \( \eta \) is the learning rate.
\end{lemma}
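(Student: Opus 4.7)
The plan is to specialize the gradient formula from Lemma~\ref{lem:gradient_weighted} to the symmetric setting described in the lemma and then apply one step of gradient descent. First, I would pick an arbitrary positive example $y_i \in Y^+$ and compute the two probability terms $p_i^{\text{weighted}}$ and $p_i^{\text{pos}}$ under the stated symmetry. Because every positive example shares the weight $w^+$ and the score $s^{(t)}$, and every negative example shares $w^-$ and the same score $s^{(t)}$, the common exponential factor $e^{s^{(t)}}$ cancels from numerator and denominator, leaving
\begin{equation}
p_i^{\text{weighted}} = \frac{w^+}{N^+ w^+ + N^- w^-}, \qquad p_i^{\text{pos}} = \frac{1}{N^+}.
\end{equation}

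Next, I would substitute these into the gradient expression from Lemma~\ref{lem:gradient_weighted}, namely $\partial L_{\text{weighted}}/\partial s_\theta(y_i \mid x) = p_i^{\text{weighted}} - p_i^{\text{pos}}$, and place the two fractions over the common denominator $N^+(N^+ w^+ + N^- w^-)$. A short algebraic simplification yields
\begin{equation}
\frac{\partial L_{\text{weighted}}}{\partial s_\theta(y_i \mid x)} = \frac{N^+ w^+ - (N^+ w^+ + N^- w^-)}{N^+ (N^+ w^+ + N^- w^-)} = -\frac{N^- w^-}{N^+ (N^+ w^+ + N^- w^-)}.
\end{equation}
Note that this value is identical for every $y_i \in Y^+$, so the positive examples remain at a common score after the update, consistent with the lemma tracking a single scalar $s^{(t)}$.

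Finally, I would apply the gradient descent rule $s^{(t+1)} = s^{(t)} - \eta \, \partial L_{\text{weighted}}/\partial s$ with the above expression. The two negative signs combine, producing the claimed update
\begin{equation}
s^{(t+1)} = s^{(t)} + \eta \left( \frac{N^- w^-}{N^+(N^+ w^+ + N^- w^-)} \right).
\end{equation}

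There is no real obstacle here: the result is a direct symmetric specialization of a gradient already computed in Lemma~\ref{lem:gradient_weighted}. The only step requiring care is the bookkeeping of signs and the observation that the symmetry across positives (and across negatives) is preserved by the update, so that the one-dimensional recursion in $s^{(t)}$ is actually well-defined. Once that is noted, the remainder is routine algebra.
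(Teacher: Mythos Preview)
Your proposal is correct and follows essentially the same route as the paper: compute $p_i^{\text{weighted}}$ and $p_i^{\text{pos}}$ under the stated symmetry, take their difference to obtain the gradient, simplify over a common denominator, and apply one gradient-descent step. The only cosmetic difference is that you invoke Lemma~\ref{lem:gradient_weighted} for the gradient formula whereas the paper restates the loss and recomputes $A$ and $Z$ from scratch; your version is slightly more economical, and your added remark that the symmetry across positives is preserved by the update is a nice justification the paper leaves implicit.
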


\begin{proof}

Let \(Y^+ \) denote the set of positive examples and \(Y^-\) the set of negative examples, with \(N^+\) and \(N^-\) examples respectively for a total of \(K = N^+ + N^-\) examples. With weights \(w^+\) and \(w^-\) assigned to positive and negative examples respectively, and logits \(s^{(t)}\) for both classes at timestep \(t\), the weighted contrastive loss function is defined as:

\begin{equation}
L_{\text{weighted}}(\theta) = -\log \left( \frac{ \sum\limits_{i \in Y^+} w_i e^{s_i} }{ \sum\limits_{j=1}^{K} w_j e^{s_j} } \right),
\end{equation}

where \(w_i = w^+\) and \(s_i = s^{(t)}\) for \(i \in Y^+\), and \(w_j = w^-\) and \(s_j = s^{(t)}\) for \(j \in Y^-\).

Compute the numerator \( A \) and the denominator \( Z \) of the loss function:
\begin{align}
A &= \sum_{i \in Y^+} w_i e^{s_i} = N^+ w^+ e^{s^{(t)}}, \\
Z &= \sum_{j=1}^{K} w_j e^{s_j} = N^+ w^+ e^{s^{(t)}} + N^- w^- e^{s^{(t)}} = e^{s^{(t)}} ( N^+ w^+ + N^- w^- ).
\end{align}

For positive examples \( i \in Y^+ \), the weighted probability \( p_i^{\text{weighted}} \) and the positive probability \( p_i^{\text{pos}} \) are:
\begin{align}
p_i^{\text{weighted}} &= \frac{ w^+ e^{s^{(t)}} }{ Z } = \frac{ w^+ }{ N^+ w^+ + N^- w^- }, \\
p_i^{\text{pos}} &= \frac{ w^+ e^{s^{(t)}} }{ A } = \frac{ w^+ }{ N^+ w^+ } = \frac{1}{ N^+ }.
\end{align}

The gradient of the loss with respect to \( s^{(t)} \) for positive examples is:
\begin{equation}
\frac{ \partial L_{\text{weighted}} }{ \partial s^{(t)} } = p_i^{\text{weighted}} - p_i^{\text{pos}} = \frac{ w^+ }{ N^+ w^+ + N^- w^- } - \frac{1}{ N^+ }.
\end{equation}

To simplify this expression, we find a common denominator \( D = N^+ ( N^+ w^+ + N^- w^- ) \):
\begin{align}
\frac{ \partial L_{\text{weighted}} }{ \partial s^{(t)} } &= \frac{ w^+ N^+ - ( N^+ w^+ + N^- w^- ) }{ D } \\
&= \frac{ w^+ N^+ - N^+ w^+ - N^- w^- }{ N^+ ( N^+ w^+ + N^- w^- ) } \\
&= \frac{ - N^- w^- }{ N^+ ( N^+ w^+ + N^- w^- ) }.
\end{align}

The update rule for \( s^{(t)} \) is then:
\begin{equation}
s^{(t+1)} = s^{(t)} - \eta \frac{ \partial L_{\text{weighted}} }{ \partial s^{(t)} } = s^{(t)} + \eta \left( \frac{ N^- w^- }{ N^+ ( N^+ w^+ + N^- w^- ) } \right).
\end{equation}
This completes the proof.
\end{proof}

\begin{corollary}
Assuming the initial scores are zero (\( s^{(0)} = 0 \)), the score \( s^{(t)} \) of the positive examples at iteration \( t \) is given by
\begin{equation}
s^{(t)} = t \eta \left( \frac{ N^- w^- }{ N^+ ( N^+ w^+ + N^- w^- ) } \right).
\end{equation}
\end{corollary}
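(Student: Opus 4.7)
The corollary asks for a closed-form expression for $s^{(t)}$ by iterating the update rule from the preceding lemma. My plan is a straightforward induction on $t$ that unrolls the recursion, using the initial condition $s^{(0)} = 0$ as the base case. Let $c = \frac{N^- w^-}{N^+ ( N^+ w^+ + N^- w^-)}$ denote the constant gradient magnitude given by the lemma, so that the update rule reads $s^{(t+1)} = s^{(t)} + \eta c$.

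The base case $t=0$ is immediate: the formula $t\eta c$ evaluates to $0$, which matches the stated initial condition. For the inductive step, I would assume $s^{(t)} = t \eta c$, apply the update rule provided by the lemma to obtain $s^{(t+1)} = t \eta c + \eta c = (t+1)\eta c$, and conclude. Because this is a first-order linear recursion with a constant inhomogeneous term, the closed form follows by the standard telescoping argument $s^{(t)} = s^{(0)} + \sum_{k=0}^{t-1}\eta c = t \eta c$.

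The one delicate point worth highlighting — and what I regard as the main conceptual obstacle rather than a technical one — is that the lemma's derivation relies on the symmetry assumption that all positive examples share one common score and all negative examples share the same common score at iteration $t$. A single gradient step on $s_\theta(y \mid x)$ would, in general, move the positive and negative scores in opposite directions and break that symmetry at iteration $t+1$, so one cannot simply re-apply the lemma naively. I would address this by observing that the update rule in the lemma is independent of the actual value of $s^{(t)}$: the factor $e^{s^{(t)}}$ cancels uniformly from both numerator and denominator of the weighted softmax, so the gradient magnitude remains exactly $c$ at every iteration provided the within-class symmetry is maintained. I would therefore frame the corollary as tracking the trajectory of the positive-class score under the symmetric-update regime stipulated by the lemma, making the induction self-consistent and the closed form exact under that regime.
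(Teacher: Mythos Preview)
Your proposal is correct and matches the paper's approach: both unroll the linear recursion $s^{(t+1)} = s^{(t)} + c$ from the lemma (the paper by writing out $s^{(1)}, s^{(2)}, \dots$ and reading off $s^{(t)} = tc$, you by a formal induction/telescoping sum), with the only cosmetic difference being whether $\eta$ is absorbed into the constant $c$. Your added discussion of why the within-class symmetry and hence the constant increment persist across iterations is more careful than the paper, which simply re-applies the update rule without comment, but the underlying argument is the same.
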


\begin{proof}
From the update rule established in the lemma,
\begin{equation}
s^{(t+1)} = s^{(t)} + c,
\end{equation}
where
\begin{equation}
c = \eta \left( \frac{ N^- w^- }{ N^+ ( N^+ w^+ + N^- w^- ) } \right).
\end{equation}
Since \( s^{(0)} = 0 \), we have
\begin{align}
s^{(1)} &= s^{(0)} + c = c, \\
s^{(2)} &= s^{(1)} + c = 2c, \\
&\ \ \vdots \\
s^{(t)} &= t c.
\end{align}
Substituting \( c \) back into the expression, we obtain
\begin{equation}
s^{(t)} = t \eta \left( \frac{ N^- w^- }{ N^+ ( N^+ w^+ + N^- w^- ) } \right).
\end{equation}
\end{proof}

\begin{corollary}
In the special case where there is one positive example (\( N^+ = 1 \)) and one negative example (\( N^- = 1 \)), and the weights are \( w^+ = w^- = 1 \) (as in Direct Preference Optimization), the score \( s^{(t)} \) at iteration \( t \) is:
\begin{equation}
s^{(t)} = \frac{ \eta t }{ 2 }.
\end{equation}
\end{corollary}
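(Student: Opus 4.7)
The plan is to derive the final corollary as a direct numerical specialization of the preceding corollary. That corollary established the closed form
\begin{equation*}
s^{(t)} = t\eta \left( \frac{N^- w^-}{N^+(N^+ w^+ + N^- w^-)}\right),
\end{equation*}
which I take as given. My proof will consist of a single substitution step, followed by an optional first-principles sanity check.

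First, I would plug $N^+ = 1$, $N^- = 1$, and $w^+ = w^- = 1$ into the bracketed rational factor. The numerator $N^- w^-$ becomes $1$, and the denominator $N^+(N^+ w^+ + N^- w^-)$ becomes $1\cdot(1+1) = 2$. Hence the factor reduces to $1/2$, and the closed form collapses to $s^{(t)} = \eta t / 2$, which is exactly the claimed expression.

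As a consistency check I would re-derive the per-step increment from first principles in this simplified DPO regime. With a single positive and a single negative example of equal unit weight both at score $s^{(t)}$, the relevant probabilities from the gradient lemma become $p^{\text{weighted}} = w^+ / (N^+ w^+ + N^- w^-) = 1/2$ and $p^{\text{pos}} = 1/N^+ = 1$. The gradient with respect to the positive logit is therefore $1/2 - 1 = -1/2$, and one step of gradient descent produces an increment of $+\eta/2$; iterating $t$ times from $s^{(0)} = 0$ yields $\eta t / 2$, matching the main formula and confirming the DPO-style linear growth rate.

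Since the statement is pure specialization of an already-proved formula, there is essentially no technical obstacle. The only item to handle carefully is the algebraic simplification of the rational expression, together with a consistent interpretation of the shared-score convention used in the preceding lemma (namely, that the update rule is applied once per iteration under the assumption that the positive and negative logits coincide at the start of each step, so that the recursion $s^{(t+1)} = s^{(t)} + c$ can be iterated with the constant $c = \eta/2$). Modulo this bookkeeping, the proof is a one-line substitution.
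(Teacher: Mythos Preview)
Your proposal is correct and matches the paper's own proof essentially line for line: both simply substitute $N^+ = N^- = 1$ and $w^+ = w^- = 1$ into the previously established closed form $s^{(t)} = t\eta\,\dfrac{N^- w^-}{N^+(N^+ w^+ + N^- w^-)}$ and simplify to $\eta t/2$. Your additional first-principles sanity check is a nice extra but not required.
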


\begin{proof}
Substituting \( N^+ = N^- = 1 \) and \( w^+ = w^- = 1 \) into the expression for \( s^{(t)} \):
\begin{align}
s^{(t)} &= t \eta \left( \frac{ 1 \times 1 }{ 1 \times (1 \times 1 + 1 \times 1) } \right ) \\
&= t \eta \left( \frac{1}{1 \times (1 + 1)} \right ) \\
&= t \eta \left( \frac{1}{2} \right ) \\
&= \frac{ \eta t }{ 2 }.
\end{align}
\end{proof}

\begin{lemma}
Consider the general case where positive examples may have different weights \( w_i^+ \), and each positive example \( i \) has its own score \( s_i^{(t)} \) at iteration \( t \). Assuming initial scores \( s_i^{(0)} = 0 \) for all positive examples, the score \( s_i^{(t)} \) of positive example \( i \) at iteration \( t \), up to a linear approximation, is given by
\begin{equation}
s_i^{(t)} = t \eta w_i^+ \left( \frac{ B_0 }{ A_0 Z_0 } \right ),
\end{equation}
where \( A_0 = \sum_{k \in Y^+} w_k^+ \), \( B_0 = \sum_{j \in Y^-} w_j^- \), \( Z_0 = A_0 + B_0 \), and \( \eta \) is the learning rate.
\end{lemma}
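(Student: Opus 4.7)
The plan is to mirror the derivation of the uniform-weight corollary but carry the per-example weights $w_i^+$ asymmetrically throughout. First, I would invoke the gradient formula established in Lemma~\ref{lem:gradient_weighted}, namely $\partial L_{\text{weighted}}/\partial s_i = p_i^{\text{weighted}} - p_i^{\text{pos}}$, and specialize it to a positive example $i \in Y^+$ evaluated at the initial iterate where every score is zero. At that point each $e^{s_k} = 1$, so the softmax normalizers collapse to pure weight sums: the preferred-set mass becomes $A_0 = \sum_{k \in Y^+} w_k^+$ and the total mass becomes $Z_0 = A_0 + B_0$ with $B_0 = \sum_{j \in Y^-} w_j^-$.

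Substituting these identifications gives $p_i^{\text{weighted}}\bigl|_{t=0} = w_i^+/Z_0$ and $p_i^{\text{pos}}\bigl|_{t=0} = w_i^+/A_0$, so
$$\left.\frac{\partial L_{\text{weighted}}}{\partial s_i}\right|_{t=0} = w_i^+\left(\frac{1}{Z_0} - \frac{1}{A_0}\right) = -\,w_i^+\cdot \frac{B_0}{A_0\,Z_0}.$$
A single gradient-descent step therefore yields $s_i^{(1)} = 0 - \eta\cdot\bigl(-w_i^+ B_0/(A_0 Z_0)\bigr) = \eta\,w_i^+\,B_0/(A_0 Z_0)$, which already matches the stated closed form at $t=1$. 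The \emph{linear approximation} in the lemma statement amounts to freezing the per-step increment at its value at the origin; iterating $t$ times under this approximation gives the claimed $s_i^{(t)} = t\,\eta\,w_i^+\,B_0/(A_0 Z_0)$.

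The main obstacle will be justifying the linearization rigorously, since as soon as the scores drift from zero the normalizers $A$ and $Z$ become non-trivial functions of all current $s_k$'s and the gradient itself evolves. I would control this by Taylor-expanding $p_i^{\text{weighted}} - p_i^{\text{pos}}$ around $s = 0$ and bounding the remainder through a Lipschitz constant of the softmax map that depends only on the $w_k$'s; since $\max_k |s_k^{(t)}| = O(t\eta)$ in the early regime, the cumulative deviation from the linear prediction is $O((t\eta)^2)$ and therefore absorbable into the approximation. A useful internal consistency check is that the per-step increment scales with $w_i^+$, so higher-weight positives are pushed harder, matching the outlier-driven curriculum that W-\mpo\ is designed to realize (Section~\ref{ssec:weighted_mpo_main}) and reducing to the prior corollary when all $w_i^+ = w^+$ and all $w_j^- = w^-$.
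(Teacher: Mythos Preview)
Your proposal is correct and follows essentially the same route as the paper: evaluate the gradient of Lemma~\ref{lem:gradient_weighted} at the all-zero initialization (where the exponentials collapse to $1$ and the normalizers become $A_0$, $B_0$, $Z_0$), obtain the per-step increment $\eta\,w_i^+ B_0/(A_0 Z_0)$, and then freeze this increment to get the linear-in-$t$ formula. The paper handles the linearization step more informally---it simply asserts that $B_0/(A_0 Z_0)$ stays approximately constant for small $\eta$ and defers justification to a remark---whereas your Taylor/Lipschitz argument with an $O((t\eta)^2)$ remainder is a more explicit version of the same idea.
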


\begin{proof}
At iteration \( t = 0 \), the initial scores are \( s_i^{(0)} = 0 \) for all \( i \in Y^+ \). The sums are:
\begin{align}
A_0 &= \sum_{k \in Y^+} w_k^+ e^{s_k^{(0)}} = \sum_{k \in Y^+} w_k^+ = W^+, \\
B_0 &= \sum_{j \in Y^-} w_j^- e^{s_j^{(0)}} = \sum_{j \in Y^-} w_j^- = W^-.
\end{align}
The total sum is \( Z_0 = A_0 + B_0 = W^+ + W^- \).

The gradient for each positive example \( i \) at \( t = 0 \) is:
\begin{equation}
\frac{ \partial L_{\text{weighted}} }{ \partial s_i^{(0)} } = - w_i^+ e^{s_i^{(0)}} \left( \frac{ B_0 }{ A_0 Z_0 } \right ) = - w_i^+ \left( \frac{ B_0 }{ A_0 Z_0 } \right ).
\end{equation}

The update rule is:
\begin{equation}
s_i^{(1)} = s_i^{(0)} - \eta \frac{ \partial L_{\text{weighted}} }{ \partial s_i^{(0)} } = \eta w_i^+ \left( \frac{ B_0 }{ A_0 Z_0 } \right ).
\end{equation}

Assuming that the term \( \frac{ B_0 }{ A_0 Z_0 } \) remains approximately constant over iterations (which holds when \( \eta \) is small and changes in \( s_i^{(t)} \) are small), the score at iteration \( t \) is:
\begin{equation}
s_i^{(t)} = t \eta w_i^+ \left( \frac{ B_0 }{ A_0 Z_0 } \right ).
\end{equation}
\end{proof}

\begin{remark}
The approximation assumes that \( A_t \), \( B_t \), and \( Z_t \) remain close to their initial values \( A_0 \), \( B_0 \), and \( Z_0 \) over the iterations considered, and the score values remain small. This is reasonable for small learning rates \( \eta \) and a limited number of iterations \( t \).
\end{remark}

\subsection{Stationary Points of the Weighted Contrastive Loss}

We now analyze the stationary points of our weighted contrastive loss function.

\begin{lemma}
\label{lem:weighted_stationary}
For the weighted contrastive loss function, the stationary point occurs when the probabilities of the negative samples approach zero, i.e.,
\begin{equation}
P_\theta(y_i \mid x) \to 0 \quad \text{for all } y_i \in Y^-.
\end{equation}
\end{lemma}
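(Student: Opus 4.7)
The plan is to analyze the stationary condition directly in the score (logit) parameterization used in Lemma~\ref{lem:gradient_weighted}, since the loss is most naturally optimized there. From that lemma, the gradient with respect to $s_\theta(y_i \mid x)$ for any negative example $y_i \in Y^-$ simplifies to $p_i^{\text{weighted}} = \frac{w_i\, e^{s_\theta(y_i \mid x)}}{Z}$, which is strictly positive whenever $w_i > 0$ and $P_\theta(y_i \mid x) > 0$. Hence gradient descent on $s_\theta(y_i \mid x)$ for a negative example can never achieve a zero gradient at any finite score; the only way for this partial derivative to vanish is to drive $e^{s_\theta(y_i \mid x)} \to 0$, equivalently $P_\theta(y_i \mid x) \to 0$, since $s_\theta(y_i \mid x) = \log\!\bigl(P_\theta(y_i \mid x)/P_{\text{ref}}(y_i \mid x)\bigr)$.

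I would then corroborate this asymptotic stationarity by examining the loss value itself. Writing $L_{\text{weighted}} = -\log(V/U)$ with $V = \sum_{i \in Y^+} u_i P_\theta(y_i \mid x)$ and $U = V + \sum_{j \in Y^-} u_j P_\theta(y_j \mid x)$, we always have $V \leq U$, so $L_{\text{weighted}} \geq 0$, with equality iff $\sum_{j \in Y^-} u_j P_\theta(y_j \mid x) = 0$. Since all $u_j > 0$, this occurs iff $P_\theta(y_j \mid x) = 0$ for every $j \in Y^-$. Thus the infimum $L_{\text{weighted}} = 0$ is attained precisely in the limit where the negative-sample probabilities vanish, showing that this limit is simultaneously the global minimizer and a stationary point in score space.

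The third step is to verify that the positive-example gradients are consistent with this limit, i.e., that there is no countervailing force preventing the negative probabilities from being pushed to zero. From Lemma~\ref{lem:gradient_weighted}, the positive-example gradient equals $p_i^{\text{weighted}} - p_i^{\text{pos}}$, and as $P_\theta(y_j \mid x) \to 0$ for $j \in Y^-$ the quantity $Z$ converges to $A$, so $p_i^{\text{weighted}} \to p_i^{\text{pos}}$ and the positive-example gradients vanish. Hence in the limit all gradients are zero simultaneously, confirming a bona fide stationary point of $L_{\text{weighted}}$.

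The main subtlety, and the step I would emphasize carefully in writing the proof, is the distinction between the logit parameterization and the probability parameterization: Lemma~\ref{lem:gradient_wrt_prob} shows that $\partial L_{\text{weighted}}/\partial P_\theta(y_i \mid x) = u_i/U > 0$ for negative samples at any interior point, so strictly speaking no finite interior stationary point exists; the statement must be understood as the asymptotic limit of gradient descent in score space, equivalent to the boundary minimizer of $L_{\text{weighted}}$ on the probability simplex. I would make this caveat explicit and frame the conclusion as convergence $P_\theta(y_i \mid x) \to 0$ rather than an attained equality.
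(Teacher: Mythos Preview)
Your proposal is correct and follows essentially the same approach as the paper: both use Lemma~\ref{lem:gradient_weighted} to observe that the negative-sample gradient $p_i^{\text{weighted}}$ is strictly positive at any finite score, forcing $P_\theta(y_i\mid x)\to 0$, and then verify that the positive-sample gradients $p_i^{\text{weighted}}-p_i^{\text{pos}}$ vanish in this same limit. Your additional global-minimizer argument via $L_{\text{weighted}}=-\log(V/U)\ge 0$ and your explicit caveat about the logit-versus-probability parameterization are welcome elaborations---the paper relegates the former to a post-proof remark and leaves the latter implicit---but the core argument is the same.
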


\begin{proof}
From Lemma~\ref{lem:gradient_weighted}, the gradient of the weighted contrastive loss with respect to the model logits $s_\theta(y_i \mid x)$ is:
\begin{equation}
\frac{\partial L_{\text{weighted}}}{\partial s_\theta(y_i \mid x)} = \begin{cases}
p_i^{\text{weighted}} - p_i^{\text{pos}}, & \text{if } y_i \in Y^+, \\
p_i^{\text{weighted}}, & \text{if } y_i \in Y^-.
\end{cases}
\end{equation}

At a stationary point, the gradient must be zero for all $y_i$. Consider the negative samples $y_i \in Y^-$. Setting the gradient to zero yields:
\begin{equation}
\frac{\partial L_{\text{weighted}}}{\partial s_\theta(y_i \mid x)} = p_i^{\text{weighted}} = 0.
\end{equation}

Since $p_i^{\text{weighted}}$ is the normalized weighted probability of $y_i$, given by:
\begin{equation}
p_i^{\text{weighted}} = \frac{w_i e^{s_\theta(y_i \mid x)}}{\sum_{j=1}^K w_j e^{s_\theta(y_j \mid x)}},
\end{equation}
and $w_i > 0$, the only way for $p_i^{\text{weighted}}$ to be zero is if $e^{s_\theta(y_i \mid x)} = 0$, which implies:
\begin{equation}
s_\theta(y_i \mid x) \to -\infty \quad \implies \quad P_\theta(y_i \mid x) \to 0 \quad \text{for } y_i \in Y^-.
\end{equation}

Similarly, for positive samples $y_i \in Y^+$, the gradient is:
\begin{equation}
\frac{\partial L_{\text{weighted}}}{\partial s_\theta(y_i \mid x)} = p_i^{\text{weighted}} - p_i^{\text{pos}} = 0.
\end{equation}

This implies:
\begin{equation}
p_i^{\text{weighted}} = p_i^{\text{pos}}.
\end{equation}

Since the probabilities of the negative samples approach zero, the denominator in $p_i^{\text{weighted}}$ becomes:
\begin{equation}
\sum_{j=1}^K w_j e^{s_\theta(y_j \mid x)} \approx \sum_{k \in Y^+} w_k e^{s_\theta(y_k \mid x)}.
\end{equation}

Therefore, $p_i^{\text{weighted}} \approx p_i^{\text{pos}}$, satisfying the condition for the gradient to be zero for positive samples.

Thus, at the stationary point, the probabilities of the negative samples approach zero.
\end{proof}

\begin{remark}
When the probabilities of the negative samples approach zero, the scores $s_\theta(y_i \mid x)$ for $y_i \in Y^-$ tend to $-\infty$. Since:
\begin{equation}
e^{s_\theta(y_i \mid x)} = \frac{P_\theta(y_i \mid x)}{P_{\text{ref}}(y_i \mid x)} \to 0,
\end{equation}
the weighted contributions of the negative samples to the numerator and denominator of $L_{\text{weighted}}$ become negligible.

Consequently, the numerator and denominator of $L_{\text{weighted}}$ become equal:
\begin{equation}
\sum_{i \in Y^+} w_i e^{s_\theta(y_i \mid x)} \approx \sum_{j=1}^K w_j e^{s_\theta(y_j \mid x)}.
\end{equation}

Therefore:
\begin{equation}
L_{\text{weighted}} = -\log \left( \frac{\sum\limits_{i \in Y^+} w_i e^{s_\theta(y_i \mid x)}}{\sum\limits_{j=1}^K w_j e^{s_\theta(y_j \mid x)}} \right) \approx -\log 1 = 0.
\end{equation}

This implies that the loss vanishes when the probabilities of the negative samples approach zero, indicating that the model has successfully minimized the loss by focusing entirely on the positive responses.
\end{remark}

\section{Additional Experiments}
\label{app:additional_experiments}

% \begin{figure}[!h]               % one float, two columns inside
%   \centering
%   \begin{minipage}[t]{0.5\textwidth}
%     \centering
%     \includegraphics[width=\linewidth]{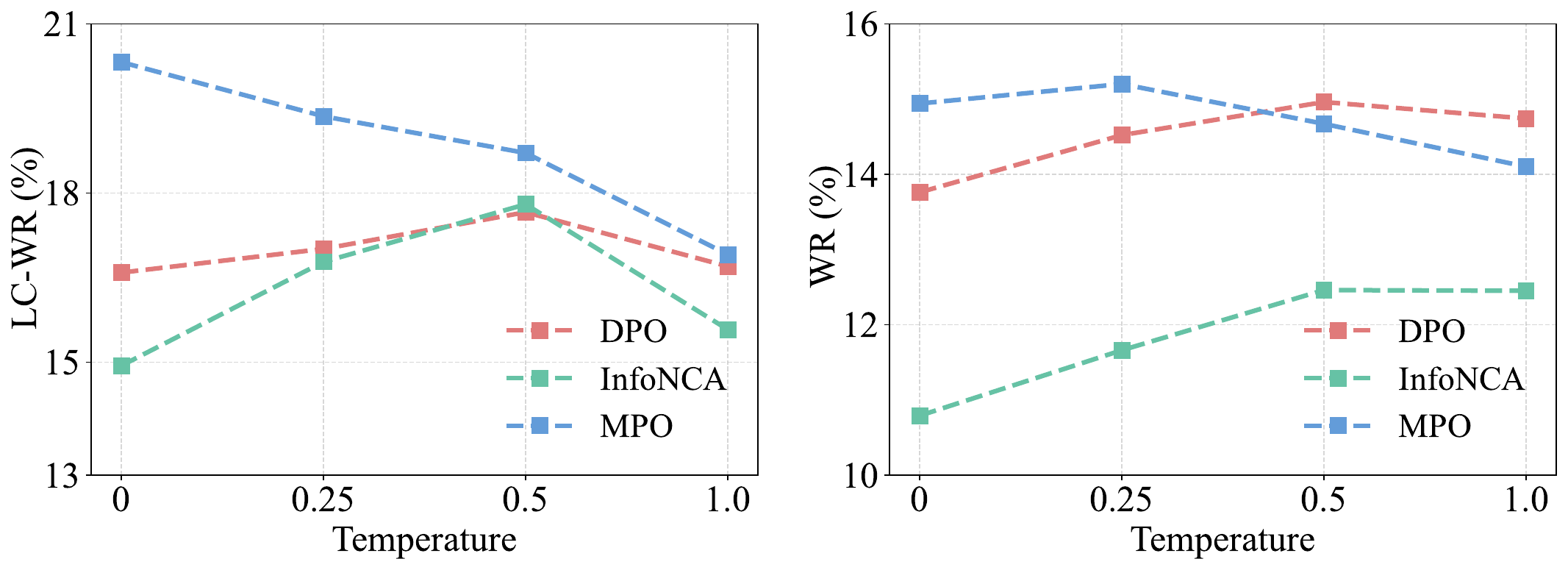 }
%     \vspace{-0.1in}
%     \captionof{figure}{Effect of Sampling Temperature on Policy Optimization Techniques. \mytodo{make font sizes in the fig much larger}}% for Mistral-Base (7B) on the AlpacaEval 2 Benchmark: (a) Length-Controlled Win Rate (LC) and (b) Overall Win Rate (WR).}
%     \label{fig:samp-temp-analysis}
%   \end{minipage}\hfill          % \hfill inserts a flexible gutter
%   \begin{minipage}[t]{0.52\textwidth}
%     \paragraph{Optimizing for lower temperatures}

\paragraph{Optimizing for Lower Temperatures}
\begin{figure}[!tbph]
    \centering
    % \vspace{-0.25in}  % Adjust vertical space before the figure
    \includegraphics[width=.6\linewidth]{images/preference_optimization_samp_temp_variation.pdf}
    % \vspace{-12pt}  % Adjust vertical space after the figure
    \caption{Performance variation of $\mpo$ for Mistral-Base with different sampling temperature on AlpacaEval2.}
    \label{fig:samp-temp-analysis-appendix}
\end{figure}
Fig. \ref{fig:samp-temp-analysis-appendix} evaluates MPO, DPO, and InfoNCA on AlpacaEval 2 (Mistral-Base 7B) against sampling temperature in the off-policy setting for Mistral-7B. MPO achieves its peak LC-WR of $\sim 20.5\%$ at zero temperature (greedy decoding), declining thereafter. Notably, MPO consistently leads in LC-Win Rate across all temperatures.

\begin{figure}[!tbph]
    \centering
    % \vspace{-0.25in}  % Adjust vertical space before the figure
    \includegraphics[width=.4\linewidth]{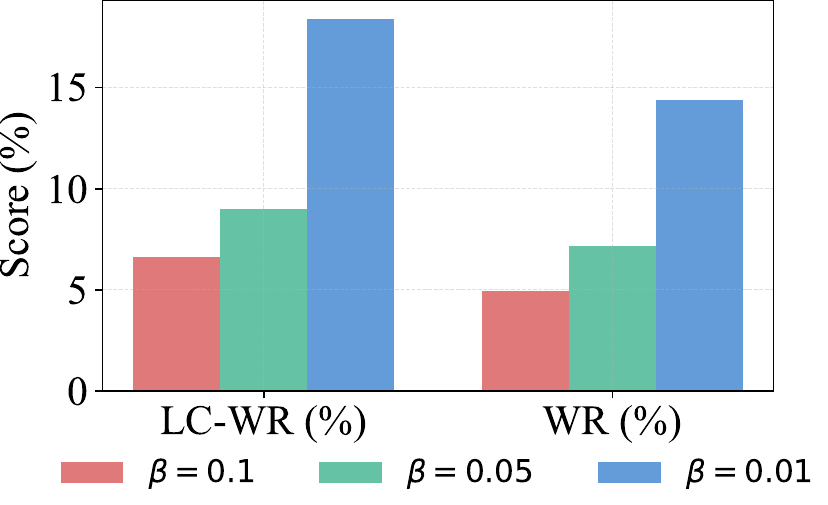}
    % \vspace{-12pt}  % Adjust vertical space after the figure
    \caption{Performance variation of $\mpo$ for Mistral-Base with different $\beta$ values on AlpacaEval2.}
    \label{fig:beta-analysis}
\end{figure}

\paragraph{Effect of $\beta$ on Optimization Dynamics.}

In Figure \ref{fig:beta-analysis}, we conduct an ablation study to investigate the role of the temperature parameter $\beta$ in our optimization objective. As $\beta$ increases, the gradient magnitudes decrease due to the flattening of the sigmoid-like objective, effectively pushing the scaled reward term $\beta \cdot \log \frac{\pi_{\textit{policy}}(y|x)}{\pi_{\textit{ref}}(y|x)}$ outside the region where the derivative is steepest. This leads to slower updates and consequently a more conservative learning trajectory. In contrast, smaller values of $\beta$ keep the optimization in the high-sensitivity regime of the sigmoid, allowing for faster convergence. Empirically, we observe that using a smaller $\beta$ (e.g., $\beta=0.01$) enables the model to reach comparable performance to higher $\beta$ settings with significantly fewer epochs and smaller learning rates. This demonstrates that careful tuning of $\beta$ can not only preserve final accuracy but also substantially reduce training cost.

\begin{figure}[!tbph]
    \centering
    % \vspace{-0.25in}  % Adjust vertical space before the figure
    \includegraphics[width=.35\linewidth]{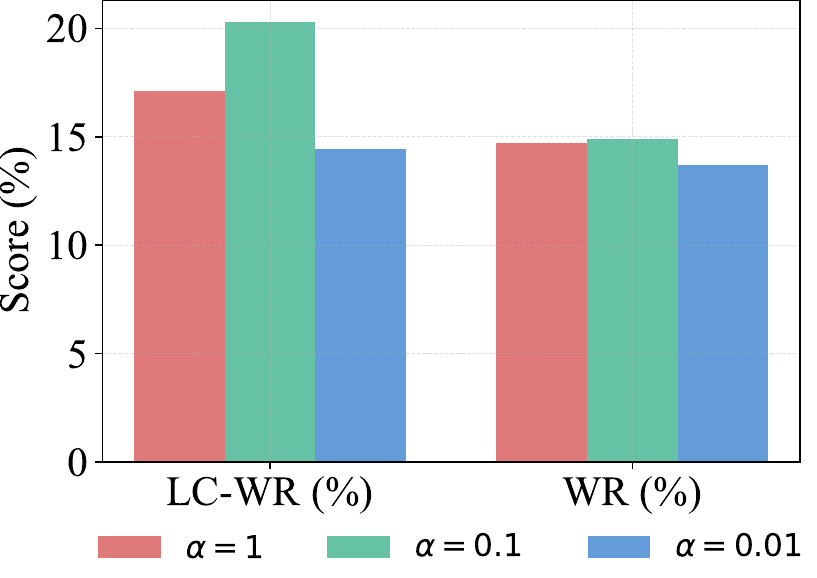}
    % \vspace{-12pt}  % Adjust vertical space after the figure
    \caption{Performance variation of W-$\mpo$ for Mistral-Base with different $\alpha$ values on AlpacaEval2.}
    \label{fig:alpha-analysis}
\end{figure}

\paragraph{Effect of $\alpha$ on Optimization Dynamics.}
In Figure \ref{fig:alpha-analysis}, we analyze the impact of the reward normalization coefficient $\alpha$, which scales the raw reward signal before computing the deviation weight in W-$\mpo$. This parameter plays a crucial role in balancing the sensitivity of the weighting function. A smaller $\alpha$ (e.g., $\alpha=1$) retains the raw reward's expressivity, but can lead to high variance in gradient updates, potentially destabilizing training. On the other hand, a very large $\alpha$ (e.g., $\alpha=0.01$) overly compresses the reward differences, weakening the contrast between preferred and dispreferred responses—this leads to underutilized supervision and poor alignment, as evidenced by the drop in LC-WR from 20.3 to 14.43. We find that an intermediate setting (e.g., $\alpha=0.1$) strikes a favorable balance, providing stable optimization while preserving informative contrast. This setting achieves the highest LC-WR and WR performance, demonstrating the importance of calibrating $\alpha$ to match the scale and distribution of reward scores.

\section{Baselines Used for Comparison} 
\label{sec:baselines}

When dealing with reward datasets where each instruction has more than two
$n>2$ responses, one common approach is to convert the data into pairwise preferences and then apply preference optimization techniques such as Direct Preference Optimization (DPO). Several strategies can be adopted for this purpose, each offering distinct trade-offs in terms of dataset richness and computational overhead.One straightforward method, as implemented by Zephyr \cite{tunstall2023zephyr}, involves selecting the response with the highest reward and pairing it with a randomly chosen response from the remaining responses for each instruction. Another variant involves pairing the highest-rewarded response with the lowest-rewarded response for each instruction, ensuring a clear distinction between preferences.Additionally, alternative baselines can be explored to enhance performance by incorporating more suboptimal responses during training. By applying DPO to combinations of responses, we can significantly expand the preference dataset and potentially achieve improved optimization. Baselines in this context are:

\textbf{DPOx$\binom{N}{2}$}:  
In this approach, all possible pairwise combinations of $\binom{N}{2}$ are generated, and DPO is applied to the entire combinatorial dataset. This method ensures the model is exposed to a comprehensive range of preference relationships, including those involving suboptimal responses.

% \paragraph{Baselines}

We implement and evaluated several preference optimization baselines under a consistent offline setting using the \textsc{UltraFeedback} dataset. Below, we describe each method in detail:

\textbf{Group Relative Policy Optimization (GRPO)}: We implement GRPO in an offline setup, where both the response set (four responses per prompt) and their scalar rewards are fixed, as provided by \textsc{UltraFeedback}. GRPO uses a relative advantage formulation that weighs each response based on its normalized reward gap.

\textbf{Plackett-Luce Ranking Model}: We implement the Plackett-Luce model, in which the ranking over responses is determined by the reward scores provided in \textsc{UltraFeedback}. The objective maximizes the likelihood of the observed ranking via a sequential softmax over score-ordered responses.

\textbf{InfoNCA}: We adapt InfoNCA to the preference optimization setting by treating each response as a sample in a softmax-based contrastive loss, where rewards from \textsc{UltraFeedback} define the similarity scores. All responses are used jointly during training.

\textbf{$\mpo$-1vsk}: In this baseline for each prompt, the response with the highest reward is selected as the positive sample, and the remaining responses form the negative set. This formulation enables multi-response contrastive alignment.

All baselines are trained under a unified offline pipeline to ensure fair and consistent comparison across evaluation benchmarks.

\begin{table}[t]
\centering
\resizebox{\textwidth}{!}{
\begin{tabular}{@{}lccc|ccc@{}}
\toprule
\multirow{2}{*}{\textbf{Method}} 
& \multicolumn{3}{c|}{\textbf{Llama3-8B-Instruct}} 
& \multicolumn{3}{c}{\textbf{Mistral-7B-Instruct}} \\
& \textbf{LC-WR (\%)} & \textbf{WR (\%)} & \textbf{Std. Err.} 
& \textbf{LC-WR (\%)} & \textbf{WR (\%)} & \textbf{Std. Err.} \\
\midrule
DPO   & 41.1 & 37.8 & 1.38 & 26.8 & 24.6 & 1.39 \\
MPO   & 49.0 & 50.6 & 1.52 & 28.2 & 29.4 & 1.34 \\
W-MPO & \textbf{52.1} & \textbf{52.5} & 1.46 & \textbf{29.5} & \textbf{30.6} & 1.45 \\
\bottomrule
\end{tabular}
}
\caption{Comparison of $\mpo$ variants with $\dpo$ on \textbf{Llama3-8B-Instruct} and \textbf{Mistral-7B-Instruct}. We report Length-Controlled Win Rate (LC-WR), Overall Win Rate (WR), and standard error across evaluations.}
\label{tab:mpo-dpo-llama-mistral-error}
\end{table}

% Other baselines, such as InfoNCA and NCA, suggest that naively applying DPO to combinations of responses may lead to suboptimal performance. They leveraged Noise Contrastive Estimation (NCE) to bridge the gap in handling reward datasets explicitly annotated with scalar evaluations. According to their findings, the theoretical guarantees they provide ensure convergence, which is not guaranteed when applying DPO in this manner.

\section{Experimental Details} 
\label{app:experimental_details}

\paragraph{Model and Training Setting}
For our off-policy experiments, we utilized the Ultrafeedback Dataset \cite{cui2023ultrafeedback}, an instruction-following benchmark annotated by GPT-4. This dataset consists of approximately 64,000 instructions, each paired with four responses generated by different language models. GPT-4 assigned scalar rewards on a 0-to-10 scale for each response, which prior research has shown to correlate strongly with human annotations. This establishes GPT-4 ratings as a reliable and cost-efficient alternative to manual feedback.

In our broader framework, we first trained a base model (\href{https://huggingface.co/mistralai/Mistral-7B-v0.1}{mistralai/Mistral-7B-v0.1}) and (\href{https://huggingface.co/meta-llama/Meta-Llama-3-8B}{meta-llama/Meta-Llama-3-8B}) on the UltraChat-200k dataset to obtain an SFT model. This SFT model, trained on open-source data, provides a transparent starting point. Subsequently, we refined the model by performing preference optimization on the UltraFeedback dataset. Once fine-tuned, the model was used for alignment. This two-step process ensures the model is well-prepared for tasks.

In our experiments, we observed that tuning hyperparameters is critical for optimizing the performance of all offline preference optimization algorithms, including DPO, SimPO, $\mpo$ and W-$\mpo$. Hyperparameter selection is very crucial for these.For $\mpo$, we found that setting the $\beta$ parameter in the range of 0.01 to 0.05 consistently yields strong performance. For W-$\mpo$, we found that setting the $\beta$ parameter in the range of 0.01 to 0.05 and $\alpha$ to 10 consistently yields strong performance. Learning rate for mistral-experiments was fixed to 3e-7 and whereas for llama it was 5e-7 for all the baselines.

For our on-policy and iterative experiments, we utilize a pretrained instruction-tuned model (\href{https://huggingface.co/meta-llama/Meta-Llama-3-8B-Instruct}{meta-llama/MetaLlama-3-8B-Instruct}) and (\href{https://huggingface.co/mistralai/Mistral-7B-Instruct-v0.2}{mistralai/Mistral-7B-Instruct-v0.2}) , as the SFT model. These models have undergone extensive instruction tuning, making them more capable and robust compared to the SFT models used in the Base setup. However, their reinforcement learning with human feedback (RLHF) procedures remain undisclosed, making them less transparent.

\begin{wraptable}{r}{0.6\columnwidth}
\vspace{-10pt}
\centering
\caption{Hyperparameters $\beta$ and learning rate $\alpha$ across different training settings.}
\label{tab:hyperparams}
\begin{tabular}{@{}lllll@{}}
\toprule
\textbf{Setting} & \textbf{Model} & \textbf{Iter} & $\beta$ & $\textit{lr}$ \\
\midrule
\multirow{2}{*}{Offline} 
  & Mistral-Base (7B) & -- & 0.01 & 4e-7 \\
  & Llama3-Base (8B)   & -- & 0.01 & 5e-7 \\
\midrule
\multirow{2}{*}{Online}  
  & Mistral-Instruct (7B) & -- & 0.01 & 1.5e-7 \\
  & LLama3-Instruct (8B)   & -- & 0.01 & 3e-7 \\
\midrule
\multirow{6}{*}{Iterative}  
  & Mistral-Instruct (7B) & 1 & 0.01 & 2.5e-7 \\
  & Llama3-Instruct (8B)   & 1 & 0.01 & 5e-7 \\
  & Mistral-Instruct (7B) & 2 & 0.01 & 2e-7 \\
  & Llama3-Instruct (8B)   & 2 & 0.01 & 4e-7 \\
  & Mistral-Instruct (7B) & 3 & 0.01 & 1.5e-7 \\
  & Llama3-Instruct (8B)   & 3 & 0.01 & 3e-7 \\
\bottomrule
\end{tabular}
\vspace{-10pt}
\end{wraptable}

To reduce distribution shift between the SFT models and the preference optimization process, we follow the approach in \cite{meng2024simpo} and generate the preference dataset using the same SFT models. This ensures that our setup is more aligned with an on-policy setting. Specifically, we utilize prompts from the UltraFeedback dataset \cite{cui2023ultrafeedback} and regenerate the resonses using the SFT models. For each prompt x, we produce 5 responses by sampling from the SFT model with a sampling temperature of 1.0 and top-p 1.0. We then use the reward model (\href{https://huggingface.co/Skywork/Skywork-Reward-Llama-3.1-8B-v0.2}{Skywork/Skywork-Reward-Llama-3.1-8B-v0.2}) \cite{liu2024skywork} to score all the responses

In our online and iterative training experiments, we found that systematic hyperparameter tuning is critical for optimizing performance across diverse datasets. Notably, the choice of the $\beta$ parameter had a substantial impact, with values in the range of 0.01 to 0.1 consistently yielding strong results. For all experiments, we adopted a fixed learning rate of 1.5e-7 for mistral-based models and 3.0e-7 for llama-based models.

All experiments employed the AdamW optimizer with a cosine annealing learning rate scheduler, incorporating a 0.01\% warmup step and a total of 1 training epoch. We used 8×A100 (80GB) GPUs under DeepSpeed with distributed data parallel (DDP) training. The global batch size was set to 128, with a per-device batch size of 2 and a gradient accumulation step of 8 to ensure memory efficiency and training stability.

These findings emphasize the importance of well-calibrated hyperparameters and infrastructure settings to achieve robust and reproducible outcomes in post-training alignment tasks.

\paragraph{Evaluation Benchmarks:} We evaluate our models using three widely recognized open-ended instruction-following benchmarks: MT-Bench \cite{zheng2023judging}, AlpacaEval 2, and Arena-Hard v0.1 \cite{zheng2023judging}. These benchmarks test the models' conversational versatility across a broad range of queries and are broadly utilized in the research community. AlpacaEval2 \cite{dubois2024length} includes 805 questions derived from five datasets, while MT-Bench spans eight categories with a total of 80 questions. Arena-Hard, a recently updated version of MT-Bench, focuses on 500 well-defined technical problem-solving queries. Scores are reported based on the evaluation protocols of each benchmark. For AlpacaEval2, both the raw win rate (WR) and the length-controlled win rate (LC) are reported, with LC specifically designed to mitigate biases related to model verbosity. For Arena-Hard, the win rate is reported relative to a baseline model. For MT-Bench, the average score is calculated using evaluations by GPT-4 as judge.
For decoding details, we generate responses using both greedy decoding and multinomial sampling with temperatures of 0.2, 0.5, and 1.0. To address potential biases introduced by multinomial sampling at varying temperatures, we generate responses three times for each setting at different seed and average their performance across the datasets.

\clearpage
\section{Reward Loss Computation}
\label{sec:reward_loss_computation}

In this section we provide the actual code used to compute the reward losses.
\vspace{0.0in}
\begin{lstlisting}[language=Python]
import torch

def mpo_loss(pi_logps, ref_logps, rewards, beta, alpha):
    """
    pi_logps: policy logprobs for N responses, shape (Batch_Size, N)
    ref_logps: reference logprobs for N responses, shape (Batch_Size, N)
    rewards: reward labels for N responses, shape (Batch_Size, N)
    beta: Temperature parameter for the MPO loss
    alpha: rating weight
    """
    
    logits = pi_logps - ref_logps  # Compute logits
    rewards = rewards / alpha # Normalizing the reward value to logits scale
    
    mean_rewards = torch.mean(rewards, dim=-1)
    if self.norm > 0:
        weights = torch.abs(rewards - mean_rewards.reshape(-1, 1))
    else:
        deviation_reward = 0
    
    pos_mask = (rewards > mean_rewards.reshape(-1, 1)) * 1
    neg_mask = torch.logical_not(pos_mask) * 1
    
    eps = 1e-10   
    logits = (logits + weights) * beta
    logits = logits - logits.max(dim=-1, keepdim=True)[0]  # Stabilize logits
    softmax_val = torch.softmax(logits + eps, dim=-1)
    pos_sum = torch.clamp(torch.sum(softmax_val * pos_mask, dim=-1), min=eps)
    neg_sum = torch.clamp(torch.sum(softmax_val * neg_mask, dim=-1), min=eps)
    
    losses = -1 * torch.log(pos_sum / (pos_sum + neg_sum + eps * 2))

    return losses.mean()
\end{lstlisting}

% Pastel blue for MPO

\clearpage
\vspace{0.0in}
\section{Examples}
\label{sec:examples}

In this section, we provide examples of user queries and responses from our model and the base DPO model.

\begin{tcolorbox}[colback=white, colframe=gray!60!black, title= DPO vs MPO comparison]
\begin{tcolorbox}[colback=white, colframe=black!30, title=\textbf{User Query}]
\textit{“Provide a pleasant compliment based on the quality you have been given. Truthfulness”}
\end{tcolorbox}
\begin{tcolorbox}[colback=white, colframe=dpoColor, title=\textbf{DPO Response}]
You possess a rare and valuable quality of truthfulness, which shines through in everything you say and do. Your authenticity is refreshing and inspiring, and it's clear that you're committed to being honest and transparent in all your interactions. Your words carry weight because they're backed by integrity, and it's a true asset to be around someone who embodies such a high level of trustworthiness. Your truthfulness is a gift to those around you, and it's something to be deeply proud of.
\end{tcolorbox}
\begin{tcolorbox}[colback=white, colframe=mpoColor, title=\textbf{MPO Response}]
Here's a pleasant compliment:
"You have a rare gift for authenticity, and it's truly refreshing to be around someone as honest and transparent as you. Your words carry weight because you always speak from the heart, building trust and respect with everyone you interact with. Your truthfulness is a treasure, and it's a privilege to know someone as genuine and dependable as you!"
\end{tcolorbox}
\end{tcolorbox}

\clearpage

\begin{tcolorbox}[colback=white, colframe=gray!60!black, title= DPO vs MPO comparison]
\begin{tcolorbox}[colback=white, colframe=black!30, title=\textbf{User Query}]
\textit{“Write a to-do list based on the given information. Daily tasks in the pharmacy?”}
\end{tcolorbox}
% \vspace{0.5em}
\begin{tcolorbox}[colback=white, colframe=dpoColor, title=\textbf{DPO Response}]
Based on typical daily tasks in a pharmacy, here's a to-do list:

**Morning Routine**

1. Restock shelves with medications and supplies
2. Check and respond to phone calls and voicemails
3. Review and update patient medication profiles and records

**Patient Care**

1. Fill and dispense prescriptions accurately and efficiently
2. Answer patient questions and provide medication counseling
3. Monitor and manage medication therapy plans for chronic conditions

**Inventory Management**

1. Conduct inventory counts and report any discrepancies
2. Order new stock and supplies as needed
3. Manage expired or recalled medications and dispose of them properly

**Administrative Tasks**

1. Process and submit claims to insurance companies
2. Manage and reconcile cash and credit transactions
3. Update patient records and billing information

**End-of-Day Tasks**

1. Clean and sanitize work areas and equipment
2. Review and prepare for the next day's tasks
3. Document any issues or concerns that arose during the day

\end{tcolorbox}
% \vspace{-10.0em}
\begin{tcolorbox}[colback=white, colframe=mpoColor, title=\textbf{MPO Response}]
Here's a comprehensive to-do list for daily tasks in a pharmacy:

**Morning Routine (Before Opening)**

1. **Inventory Management**:
	* Check inventory levels of medications, supplies, and equipment.
	* Restock shelves and replenish stock as needed.
2. **Preparation**:
	* Prepare prescription labels, bags, and packaging materials.
	* Set up cash register, credit card machines, and point-of-sale systems.
3. **Computer Systems**:
	* Log in to pharmacy management software (e.g., Meditech, Cerner) and update patient records.
	* Ensure all systems are functioning properly.

**During Business Hours**

1. **Fill Prescriptions**:
	* Process and fill patient prescriptions accurately and efficiently.
	* Verify prescriptions with physicians (if necessary) and communicate with patients.
2. **Customer Service**:
	* Assist customers with questions, concerns, or medication inquiries.
	* Provide guidance on medication usage, side effects, and interactions.
3. **Order Entry**:
	* Enter new prescriptions, refill requests, and medication orders into the system.
	* Manage backorders and special orders.
4. **Dispensing and Packaging**:
	* Package and label medications, including controlled substances.
	* Ensure proper storage and security of controlled substances.
5. **Administrative Tasks**:
	* Manage phone calls, emails, and patient inquiries.
	* Update patient records and maintain accurate records.

**End of Day/Before Closing**

1. **Closing Procedures**:
	* Count and reconcile cash, credit card transactions, and inventory.
	* Secure the pharmacy, including controlled substances and sensitive areas.
2. **Reporting and Documentation**:
	* Complete daily reports (e.g., prescription fills, inventory, and sales).
	* Update records and submit reports as required by regulatory authorities.
3. **Cleanliness and Organization**:
	* Maintain a clean and organized pharmacy environment, including workstations and storage areas.
\end{tcolorbox}
\end{tcolorbox}

\begin{tcolorbox}[colback=white, colframe=gray!60!black, title= DPO vs MPO comparison]
\begin{tcolorbox}[colback=white, colframe=black!30, title=\textbf{User Query}]
\textit{“GIVE ME AN EXECUTIVE SUMMARY OF LOTTADIGITAL.COM?”}
\end{tcolorbox}
% \vspace{0.5em}
\begin{tcolorbox}[colback=white, colframe=dpoColor, title=\textbf{DPO Response}]
Based on my research, LotT Digital is a digital marketing agency that specializes in providing comprehensive solutions to help businesses grow online. Here's an executive summary:

**Key Focus Areas:**
1. Digital Strategy: LotT Digital develops tailored strategies for clients, encompassing market analysis, competitor research, and goal-setting.
2. Web Development: The agency builds custom websites, e-commerce platforms, and web applications using modern technologies like WordPress, Magento, and React.
3. Digital Marketing: Services include SEO, PPC advertising (Google Ads), social media marketing, email marketing, and content creation to drive traffic and conversions.
4. Data Analytics: LotT Digital provides insights and reporting to measure campaign performance, track key metrics, and inform data-driven decision-making.

**Target Industries and Clients:**
1. Mid-to-large-sized businesses across various sectors, including e-commerce, finance, healthcare, and professional services.
2. Companies seeking to enhance their online presence, increase website traffic, and drive revenue growth.

**Unique Value Proposition:**

LotT Digital's team of experts takes a holistic approach to digital marketing, combining strategy, creativity, and technical expertise to deliver measurable results. They prioritize transparency, regular communication, and continuous optimization to ensure client success.

\end{tcolorbox}
% \vspace{-10.0em}

\begin{tcolorbox}[colback=white, colframe=mpoColor, title=\textbf{MPO Response}]
Here's an executive summary of Lottadigital.com:

**Company Overview: Lottadigital**

Lottadigital.com is a leading digital marketing agency specializing in E-commerce Solutions, Digital Strategy, and Performance Marketing. Headquartered in the UK, the company helps online businesses thrive in a competitive landscape.

**Key Focus Areas:**

1. **E-commerce Services**: Lottadigital offers comprehensive e-commerce solutions, including website design, development, and optimization for various platforms (e.g., Shopify, Magento, BigCommerce).
2. **Digital Strategy**: They craft tailored strategies to drive business growth, leveraging expertise in SEO, PPC (Google Ads), social media marketing, and content marketing.
3. **Performance Marketing**: The agency excels in performance-based marketing, focusing on data-driven campaigns that deliver measurable ROI, such as affiliate marketing, influencer partnerships, and retargeting.
4. **Specializations**: Lottadigital has expertise in niches like fashion, beauty, health, and lifestyle, with a strong understanding of their respective markets and consumer behaviors.

**Key Benefits:**

* Data-driven approach to optimize online presence and conversions
* Customized solutions for businesses seeking to improve their digital footprint
* Proven track record of increasing website traffic, sales, and revenue growth
* Expertise in staying up-to-date with the latest industry trends and platforms

**In summary**, Lottadigital.com is a full-service digital marketing agency that empowers e-commerce businesses to succeed by providing strategic, performance-driven solutions, tailored to their specific needs. Their expertise helps clients achieve measurable success and stay ahead in a rapidly evolving digital landscape.
\end{tcolorbox}
\end{tcolorbox}

\end{document}